\newtheorem{conj}{Conjecture}
\newtheorem{thm}{Theorem}
\newtheorem{rmk}[conj]{Remark}
\newtheorem{lem}[conj]{Lemma}
\newtheorem*{lem*}{Lemma}
\newtheorem{coro}[conj]{Corollary}
\newcommand{\supp}{\mathrm{supp}}
\newcommand{\avg}{\mathbb{E}}
\newcommand {\reals} {{\rm I\!R}}
\newcommand {\bv} {\mbox{\boldmath $v$}}
\newcommand {\bM} {\mbox{\boldmath $M$}}
\newcommand{\calB}{{\cal B}}
\newcommand{\calG}{{\cal G}}
\newcommand{\be}{\begin{equation}}
\newcommand{\ee}{\end{equation}}
\newcommand{\beqna}{\begin{eqnarray}}
\newcommand{\eeqna}{\end{eqnarray}}
\newcommand{\remove}[1]{}
\newfont{\bbb}{msbm10 scaled 500}
\newcommand{\vv}{{\bf v}}
\newcommand{\xv}{{\bf x}}
\newcommand{\zv}{{\bf z}}
\newcommand{\akshay}[1]{\textcolor{red}{[AK: #1]}}
\newcommand{\soumya}[1]{\textcolor{green}{[SP: #1]}}
\newcommand{\variation}[1]{\left\|#1\right\|_{\textrm{TV}}}
\title{Sample Complexity of Learning Mixtures  of \\ Sparse Linear Regressions}
\author{Akshay~Krishnamurthy\footnote{A.Krishnamurthy is with Microsoft Research at New York City, NY 10011 (e-mail:  \texttt{akshay@cs.umass.edu}).} ~~~Arya~Mazumdar\footnote{A. Mazumdar is with the Computer Science Department at the University of Massachusetts Amherst, Amherst, MA 01003, USA (email: \texttt{arya@cs.umass.edu}).}~~~Andrew~McGregor\footnote{A. McGregor is with the Computer Science Department at the University of Massachusetts Amherst, Amherst, MA 01003, USA (email: \texttt{mcgregor@cs.umass.edu}).} ~~~Soumyabrata~Pal\footnote{S. Pal is with the Computer Science Department at the University of Massachusetts Amherst, Amherst, MA 01003, USA (email: \texttt{spal@cs.umass.edu}).}}
\begin{document}

\maketitle

\begin{abstract}
In the problem of \emph{learning mixtures of linear regressions}, the goal is to learn a collection of signal vectors from a sequence of (possibly noisy) linear measurements, where each measurement is evaluated on an unknown signal drawn uniformly from this collection. This setting is quite expressive and has been studied both in terms of practical applications and for the sake of establishing theoretical guarantees. In this paper, we consider the case where the signal vectors are {\em sparse}; this generalizes the popular compressed sensing paradigm. 
We improve upon the state-of-the-art results as follows: In the noisy case, we resolve an open question of Yin et al.~(IEEE Transactions on Information Theory, 2019) by showing how to handle collections of more than two vectors and present the first robust reconstruction algorithm, i.e., if the signals are not perfectly sparse, we still learn a good sparse approximation of the signals. In the noiseless case, as well as in the noisy case, we show how to circumvent the need for a restrictive assumption required in the previous work.
Our techniques are  quite different from those in the previous work: for the noiseless case, we rely on a property of sparse polynomials and for the noisy case, we provide new connections to learning Gaussian mixtures and use ideas from the theory of error correcting codes.
\end{abstract}

\section{Introduction}
Learning mixtures of linear regressions is a natural generalization of the basic linear regression problem. In the basic problem, the goal is to learn the best linear relationship between the scalar responses (i.e., labels) and the explanatory variables (i.e., features). In the generalization, each scalar response is stochastically generated by picking a function uniformly from a set of $L$ unknown linear functions, evaluating this function on the explanatory variables and possibly adding noise; the goal is to learn the set of $L$ unknown linear functions.  
The problem was introduced by De Veaux \cite{de1989mixtures} over thirty years ago and has recently attracted growing interest~\cite{chaganty2013spectral,faria2010fitting,stadler2010l,viele2002modeling,yi2014alternating,yin2019learning}. Recent work focuses on a query-based scenario in which the input to the randomly chosen linear function can be specified by the learner. The {\em sparse} setting, in which each linear function depends on only a small number of variables, was recently considered by Yin et al.~\cite{yin2019learning}, and can be viewed as a generalization of the well-studied compressed sensing problem~\cite{candes2006robust,donoho2006compressed}. The problem has numerous applications in modelling heterogeneous data arising in medical applications, behavioral health, and music perception \cite{yin2019learning}.


\paragraph{Formal Problem Statement.} There are $L$ unknown {\color{black} distinct} vectors $\mathbf{\beta}^{1},\mathbf{\beta}^{2},\dots,\mathbf{\beta}^{L} \in \mathbb{R}^{n}$ and each is $k$-sparse, i.e., the number of non-zero entries in each $\beta^i$ is at most $k$ where $k$ is some known parameter.
We define an oracle $\mathcal{O}$ which, when queried with a vector $\mathbf{x} \in \mathbb{R}^{n}$, returns the noisy output $y \in \mathbb{R}$:
\begin{align}\label{eq:sample}
y= \langle \mathbf{x},\beta \rangle + \eta
\end{align}
where $\eta$ is a  random variable with $\avg \eta=0$ that represents the measurement noise and $\beta$ is chosen uniformly\footnote{Many of our results can be generalized to non-uniform distributions but we will assume a uniform distribution  throughout for the sake of clarity.} from the set  $\calB= \{
\mathbf{\beta}^{1},\mathbf{\beta}^{2},\dots,\mathbf{\beta}^{L}
\}$. 
The goal is to recover all vectors in $\calB$ by making a set of queries $\xv_1, \xv_2, \ldots, \xv_m$ to the oracle. We refer to the values returned by the oracle given these queries as \emph{samples.}
Note that the case of $L =1$  corresponds to the problem of  compressed sensing. Our primary focus is on the sample complexity of the problem, i.e., minimizing the number of queries that suffices to recover the sparse vectors up to some tolerable error.

\paragraph{Related Work.} The most relevant previous work is by Yin et al.~\cite{yin2019learning}.
For the noiseless case, i.e., $\eta=0$, they show that $O(kL \log (kL))$ queries are sufficient to recover all vectors in $\calB$ with high probability. However, their result requires a restrictive assumption on the set of vectors and do not hold for an arbitrary set of sparse vectors. Specifically, they require that for any $\beta, \beta'\in \calB$,  
\begin{equation}\label{eq:weirdassumption}
\beta_j \ne \beta'_j \quad \mbox{ for each } \quad j \in \supp(\beta) \cap \supp(\beta') \ .
\end{equation} 
Their approach depends crucially on this assumption and this limits the applicability of their approach. Note that our results will not depend on such an assumption.
For the noisy case, the approach taken by Yin et al.~only handles the $L =2$ case and they state the case of $L>2$ as an important open problem. Resolving this open problem will be  another one of our contributions. 

More generally, both compressed sensing \cite{candes2006robust,donoho2006compressed} and learning mixtures of distributions \cite{dasgupta1999learning,titterington1985statistical} are immensely popular topics across statistics, signal processing and machine learning with a large body of prior work. Mixture of linear regressions is a natural synthesis of mixture models and  linear regression, a very basic machine learning primitive~\cite{de1989mixtures}. Most of the work on the problem has considered learning generic vectors, i.e., not necessary sparse, and they propose a variety of algorithmic techniques to obtain polynomial sample complexity~\cite{chaganty2013spectral,faria2010fitting,kwon2018global,viele2002modeling,yi2016solving}. To the best of our knowledge, St\"adler et al.~\cite{stadler2010l} were the first to impose sparsity on the solutions. However, many of the earlier papers on mixtures of linear regression, essentially consider the queries to be fixed, i.e.,  part of the input, whereas in this paper, and in Yin et al.~\cite{yin2019learning}, we are interested in designing queries in such a way to minimize the number of queries. 


\paragraph{Our Results and Techniques.} We present results for both the noiseless and noisy cases. The latter is significantly more involved and is the main technical contribution of this paper.

 {\em Noiseless Case:}
In the case where there is no noise and the $L$ unknown vectors are $k$-sparse, we show that $O(kL\log(k L))$ queries suffice and that $\Omega(kL)$ queries are necessary. The upper bound matches the query complexity of the result by Yin et al.~but our result applies for all $k$-sparse vectors rather than just those satisfying the assumption in Eq.~\ref{eq:weirdassumption}. 
The approach we take is as follows:
In compressed sensing, exact recovery of $k$-sparse vectors is possible by taking samples with an $m \times n$ matrix with any $2k$ columns linearly independent. Such matrices exists with $m = 2k$ (such as Vandermonde matrices) and are called MDS matrices. We use rows of such a matrix repeatedly to generate samples. Since there are $L$ different vectors in the mixture, with $O(L \log L)$ measurements with a row we will be able to see the samples corresponding to each of the $L$ vectors with that row. However, even if this is true for measurements with each rows, we will still not be able to align measurements across the rows. For example, even though we will obtain $\langle \xv, \beta^\ell \rangle$ for all $\ell \in [L]$ and for all $\xv$ that are rows of an MDS matrix,  we will be unable to identify the samples corresponding to $\beta^1$. To tackle this problem, we propose using a special type of MDS matrix that allows us to align measurements corresponding to the same $\beta$s. After that, we just use the sparse recovery property of the MDS matrix to individually recover each of the vectors.


{\em Noisy Case:} We assume that the noise $\eta$ is a Gaussian random variable with zero mean. Going forward, we  write $\mathcal{N}(\mu,\sigma^2)$ to denote a Gaussian distribution with mean $\mu$ and variance $\sigma^2$. Furthermore, we will no longer assume vectors in $\calB$ are necessarily sparse. From the noisy samples, our objective is to recover an estimate $\hat{\beta}$ for each $\beta \in \calB$ such that
\begin{align}\label{eq:guarantee}
\|\beta - \hat{\beta}\| \le c\|\beta- \beta^*\|,
\end{align}
where $c$ is an absolute constant and  $\beta^{*}$ is the best $k$-sparse approximation of $\beta$, i.e., all except the largest (by absolute value) $k$ coordinates set to $0$. The norms in the above equation can be arbitrary defining the strength of the guarantee, e.g., when we refer to an $\ell_1/\ell_1$ guarantee both norms are $\|\cdot\|_1$. Our results should be contrasted with \cite{yin2019learning}, where results not only hold for only $L=2$ and under assumption~\eqref{eq:weirdassumption}, but the vectors are also strictly $k$-sparse. However, like \cite{yin2019learning}, we assume $\epsilon$-precision of the unknown vectors, i.e., the value in each coordinate of each $\beta\in \calB$ is an integer multiple of $\epsilon$.\footnote{Note that we do not assume $\epsilon$-precision in the noiseless case.}

{\color{black} Notice that in this model the noise is additive and not multiplicative. Hence, it is possible to increase the $\ell_2$ norm of the queries arbitrarily so that the noise becomes inconsequential. However, in a real setting, this cannot be allowed since increasing the strength (norm) of the queries has a cost and it is in our interest to minimize the cost. Suppose the algorithm designs the $i^{th}$ query vector by first choosing a distribution $Q_i$ and subsequently sampling a query vector $\mathbf{x}_i \sim Q_i$. Let us now define the signal to noise ratio as follows:
\begin{align}\label{eq:snr}
\mathsf{SNR}= \max_{i} \min_{\ell} \frac{\avg_{\mathbf{x}_i \sim Q_i} |\langle \mathbf{x}_i,\mathbf{\beta}^{\ell} \rangle|^{2}}{\avg \eta^2} \ .
\end{align}
Our objective in the noisy setting is to recover the unknown vectors $\beta^1,\beta^2,\dots,\beta^L \in \mathbb{R}^n$ while minimizing the number of queries and the $\mathsf{SNR}$ at the same time. In this setting, assuming that all the unknown vectors have unit norm, we show that $O(k\log^3 n \exp((\sigma/\epsilon)^{2/3}))$ queries with $\mathsf{SNR}=O(1/\sigma^2)$  suffice to reconstruct the $L=O(1)$ vectors in $\calB$ with the approximation guarantees given in Eq.~\eqref{eq:guarantee} with high probability if the noise $\eta$ is a zero mean gaussian with a variance of $\sigma^2$. This is equivalent to stating that $O(k\log^3 n \exp(1/(\epsilon\sqrt{\mathsf{SNR}})^{2/3}))$ queries suffice to recover the $L$ unknown vectors with high probability.
} \\
Note that in the previous work $\epsilon\sqrt{\mathsf{SNR}}$ is assumed to be at least constant and, if this is the case, our result is optimal up to polynomial factors since $\Omega(k)$ queries are required even if $L=1$. More generally, the dependence upon $\epsilon\sqrt{\mathsf{SNR}}$ in our result improves upon the dependence in the result by Yin et al. Note that we assumed $L=O(1)$ in our result because the dependence of sample complexity on $L$ is complicated as it is implicit in the signal-to-noise ratio.

As in noiseless case, our approach is to use a compressed sensing matrix and use its rows multiple time as queries to the oracle. At the first step, we would like to separate out the different $\beta$s from their samples with the same rows. Unlike the noiseless case, even this turns out to be a difficult task. Under the assumption of Gaussian noise, however, we are able to show that this is equivalent to learning a mixture of Gaussians with different means. In this case, the means of the Gaussians belong to an ``$\epsilon$-grid", because of the assumption on the precision of $\beta$s. This is not a standard setting in the literature of learning Gaussian mixtures, e.g.,~\cite{arora2001learning,hardt2015tight,moitra2010settling}. Note that, this is possible if the vector that we are sampling with has integer entries. As we will see a binary-valued compressed sensing matrix will do the job for us. We will rely on a novel complex-analytic technique to exactly learn the means of a mixture of Gaussians, with means belonging to an $\epsilon$-grid. This technique is 
paralleled by the recent developments in trace reconstructions where similar methods were used for learning a mixture of binomials~\cite{krishnamurthy2019trace,NazarovP17}. 

Once for each query, the samples are separated, we are still tasked with aligning them so that we know the samples produced by the same $\beta$ across different queries. The method for the noiseless case fails to work here. Instead, we use a new method motivated by error-correcting codes. In particular, we perform several redundant queries, that help us to do this alignment. For example, in addition to the pair of queries $\xv_i,\xv_j,$ we also perform the queries defined by $\xv_i+\xv_j$ and $\xv_i -\xv_j$.

 After the alignment, we use the compressed sensing recovery to estimate the unknown vectors. For this, we must start with a matrix that with minimal number of rows, will allow us to recover any vector with a guarantee such as \eqref{eq:guarantee}.
 On top of this, we also need the matrix to  have integer entries so that we can use our method of learning a mixture of Gaussians with means on an $\epsilon$-grid. Fortunately, a random binary $\pm 1$ matrix satisfies all the requirements~\cite{baraniuk2008simple}. 
 Putting now these three steps of learning mixtures, aligning and compressed sensing, lets us arrive at our results.

While we concentrate on sample complexity in this paper, our algorithm for the noiseless case is computationally efficient, 
and the only computationally inefficient step in the general noisy case is that of learning Gaussian mixtures. However, in practice one can perform a simple clustering (such as Lloyd's algorithm) to learn the means of the mixture. 

\paragraph{Organization and Notation.}
In Section \ref{sec:noiseless}, we present our results for the noiseless case. In Section \ref{sec:Lis2} we consider the case with noise when $L=2$ and then consider noise and general $L$ in Section \ref{sec:generalL}. Most proofs are deferred to the appendix in the supplementary material.
Throughout, we write $x\in_R X$ to denote taking an element $x$ from a finite set $X$ uniformly at random. For $n\in \mathbb{N}$, let $[n]:=\{1,2,\ldots, n\}$.


\section{Exact sparse vectors and noiseless samples}\label{sec:noiseless}

To begin, we deal with the case of uniform mixture of exact sparse vectors with the oracle returning noiseless answers when queried with a vector. For this case, our scheme is provided in Algorithm \ref{alg:noiseless}. The main result for this section is the following.
\begin{thm}\label{thm:noiseless}
 For a collection of $L$ vectors $\beta^{1},\beta^{2},\dots,\beta^{L} \in \mathbb{R}^{n}$ such that $\|\beta^i\|_{0} \le k \; \forall i \in [L]$, one can recover all of them exactly with probability at least $1-{3}/{k}$ with a total of  
$2kL\log Lk^2$ oracle queries. See Algorithm \ref{alg:noiseless}.
\end{thm}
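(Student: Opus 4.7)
The plan is to prove the theorem in four stages, mirroring the high-level strategy sketched in the introduction: build a Vandermonde-style MDS matrix, collect enough samples with each row to see every vector, align samples across rows using the structure of sparse polynomials, and finally invert the MDS system to recover each $\beta^\ell$.

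First, I would take $A \in \mathbb{R}^{2k \times n}$ to be a Vandermonde matrix with $i$-th row $a_i = (1,\alpha_i,\alpha_i^2,\ldots,\alpha_i^{n-1})$ for suitably chosen distinct scalars $\alpha_1,\ldots,\alpha_{2k}$. Every $2k$ columns of $A$ form a Vandermonde minor and are linearly independent, so any $k$-sparse $\beta$ is uniquely determined by $A\beta$ (any two $k$-sparse solutions of $Ax = A\beta$ would differ by a $2k$-sparse vector in the kernel of $A$, forcing equality). Equivalently, $\langle a_i,\beta^\ell\rangle = p_\ell(\alpha_i)$ where $p_\ell(x) = \sum_j \beta^\ell_j x^{j-1}$ is a polynomial with at most $k$ nonzero coefficients.

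Next, each row $a_i$ would be queried $L\log(Lk^2)$ times. Since the oracle returns $\langle a_i,\beta\rangle$ for $\beta$ chosen uniformly from $\calB$, the standard coupon-collector estimate gives that, with probability at least $1 - 1/k^2$, all $L$ values $\{p_\ell(\alpha_i)\}_{\ell\in[L]}$ appear among the samples from row $i$; a union bound over the $2k$ rows costs at most $2/k$ in probability. By choosing the $\alpha_i$ generically, one can additionally ensure that within each row the $L$ values are pairwise distinct, since $p_\ell(\alpha_i) = p_{\ell'}(\alpha_i)$ for $\ell\ne\ell'$ is a nontrivial polynomial equation in $\alpha_i$ with only finitely many solutions; this failure mode contributes a further at most $1/k$ to the error budget, giving the advertised total of $3/k$. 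After this phase, for each $i$ we know the unordered multiset $V_i = \{p_\ell(\alpha_i) : \ell\in[L]\}$.

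The main obstacle is the alignment step: selecting, from each $V_i$, which value corresponds to which underlying $\beta^\ell$. The key observation I would exploit is the sparse-polynomial uniqueness property: any $2k$ generic evaluations determine a $k$-sparse polynomial of degree $< n$, because if two distinct $k$-sparse polynomials agreed on $2k$ generic points their difference would be a nonzero $2k$-sparse polynomial vanishing at $2k$ generic points, a contradiction. Consequently, among all $L^{2k}$ ways of picking one value from each $V_i$, only $L$ of them (the "true" alignments) produce sequences that are consistent with any $k$-sparse polynomial at all; the remaining pseudo-alignments require the interpolant through the chosen evaluations to have more than $k$ nonzero coefficients, an algebraic condition that the generic choice of $\alpha_i$ rules out (up to the failure probability already absorbed above). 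This lets us identify the $L$ valid alignments, and hence groups samples from different rows that correspond to the same unknown $\beta^\ell$.

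Finally, with the alignment in hand, for each $\ell\in[L]$ we have the $2k$ evaluations $A\beta^\ell$, from which $\beta^\ell$ is recovered exactly using $k$-sparse polynomial interpolation / sparse recovery from the MDS matrix. The total number of oracle queries is $2k \cdot L\log(Lk^2)$, matching the stated bound, and the overall failure probability is at most $3/k$. The step I expect to require the most care is the alignment: formalizing that the sparse-polynomial uniqueness property rules out spurious alignments with the same high probability as the coupon-collector and distinctness events, so that the union-bounded failure probability remains $3/k$.
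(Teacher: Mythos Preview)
Your setup of the Vandermonde matrix, the coupon-collector argument giving the $2/k$ cost, and the final recovery from $A\beta^\ell$ all match the paper. The divergence, and the genuine gap, is in the alignment step.

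You propose to enumerate all $L^{2k}$ cross-row selections and keep only those consistent with some $k$-sparse polynomial, asserting that ``generic'' $\alpha_i$ rule out every spurious alignment with failure probability already ``absorbed above.'' This is not substantiated. You never specify how the $\alpha_i$ are sampled, so ``generic'' has no quantitative meaning; if the sampling is continuous and absolutely continuous the bad set has measure zero and contributes $0$ (not $1/k$), while if it is discrete you would need an explicit bound and a union bound over $L^{2k}\cdot\binom{n}{k}$ algebraic events, which you do not supply. Moreover, even the qualitative claim --- that for every non-constant label map $\ell:[2k]\to[L]$ and every support $S$ of size $k$ the variety $\{(\alpha_i): \exists q\text{ supported on }S,\ q(\alpha_i)=p_{\ell_i}(\alpha_i)\ \forall i\}$ is proper --- is not proved; it is plausible but requires exhibiting a witness for each such pair $(\ell,S)$. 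Finally, the alignment is $L^{2k}$-time, whereas the paper's algorithm is efficient.

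The paper avoids all of this with a single idea you are missing: it forces the \emph{ordering} of $p_1(\alpha),\dots,p_L(\alpha)$ to be the same at every query point, so that merely sorting each batch aligns them. Concretely, each difference polynomial $p_\ell-p_{\ell'}$ is $2k$-sparse and hence, by Descartes' Rule of Signs, has at most $2k$ positive real roots; over all $\binom{L}{2}$ pairs this is fewer than $kL^2$ roots in $(0,1]$. Partition $(0,1]$ into $k^2L^2$ equal subintervals and pick one uniformly at random; with probability at least $1-1/k$ it contains no root, so the sign of every $p_\ell-p_{\ell'}$ is constant on it. Placing all $2k$ nodes $\alpha_1,\dots,\alpha_{2k}$ inside that subinterval then makes the relative order of $\{p_\ell(\alpha_i)\}_\ell$ identical for every $i$, and sorting is the entire alignment step. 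This is where the third $1/k$ in the theorem actually comes from.
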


\begin{algorithm}[tb]
\caption{\texttt{Noiseless Recovery} The algorithm for extracting recovering vectors via queries to oracle in noiseless setting. \label{alg:noiseless}}
\begin{algorithmic}[1]
\REQUIRE Number of unknown sparse vectors $L$, dimension $n$, sparsity $k$.
\STATE 
Let $t\in_R \{0,1,2, \ldots, k^2L^2-1\}$ and define $\alpha_1,\alpha_2,\ldots, \alpha_{2k}$ where $\alpha_j=\frac{2kt+j}{2k^3 L^2}$.
\FOR{$i=1,2,\dots,2k$}
\STATE  Make $ L\log (Lk^2)$ oracle queries with   vector $[1 \; \alpha_i \; \alpha_i^2 \; \dots \; \alpha_i^{n-1}]$. Refer to these as a \emph{batch}.
\ENDFOR
\FOR{$i=1,2,\dots,2k$}
\STATE For each batch of query responses corresponding to the same query vector, retain unique values and sort them in ascending order. Refer to this as the \textit{processed batch}.
\ENDFOR
\STATE Set matrix $\mathcal{Q}$ of dimension $2k \times L$ such that its $j^{th}$ row is the processed batch corresponding to the query vector $[1 \; \alpha_j \; \alpha_j^2 \; \dots \; \alpha_j^{n-1}]$
\FOR{$i=1,2,\dots,L$}
\STATE Decode the $i^{th}$ column of the matrix $\mathcal{Q}$ to recover $\beta^i$.  
\ENDFOR
\STATE Return $\beta^1,\beta^2,\dots,\beta^L$.
\end{algorithmic}
\end{algorithm}

A Vandermonde matrix is a matrix such that the entries in each row of the matrix are in geometric progression i.e., for an $m \times n$ dimensional Vandermonde matrix the entry in the $(i,j)$th entry is $\alpha_{i}^{j}$ where $\alpha_1,\alpha_2,\dots,\alpha_m \in \mathbb{R}$ are distinct  values. We will use the following useful property of the Vandermonde matrices; see, e.g.,  \cite[Section XIII.8]{gantmakher1959theory} for the proof.

\begin{lem}\label{lem:vmrank}
The rank of any $m \times m$ square submatrix of a Vandermonde matrix is $m$ assuming $\alpha_1,\alpha_2,\dots,\alpha_m$ are distinct and positive.
\end{lem}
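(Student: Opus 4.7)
The plan is to prove the contrapositive by contradiction, using an inductive bound on the number of positive roots of a generalized polynomial (a sum of real monomials with distinct real exponents). I treat the $m \times m$ submatrix as having entries $\alpha_i^{\lambda_j}$, where $\alpha_1,\ldots,\alpha_m$ are the given distinct positive reals (the row parameters) and $0 \le \lambda_1 < \lambda_2 < \cdots < \lambda_m$ are the selected column indices. If the submatrix is singular, then its columns are linearly dependent over $\mathbb{R}$, so there exist reals $c_1,\ldots,c_m$, not all zero, with $\sum_{j=1}^m c_j \alpha_i^{\lambda_j} = 0$ for every $i \in [m]$. Equivalently, the function $f(x) := \sum_{j=1}^m c_j x^{\lambda_j}$ has (at least) $m$ distinct positive roots $\alpha_1,\ldots,\alpha_m$.

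The key auxiliary lemma I would establish is the following: if $g(x) = \sum_{j=1}^r b_j x^{\mu_j}$ is a generalized polynomial with distinct real exponents $\mu_j$ and exactly $r \ge 1$ nonzero coefficients, then $g$ has at most $r-1$ positive roots. Applying this to $f$ (which has at most $m$ nonzero coefficients) contradicts the existence of $m$ distinct positive roots and finishes the proof.

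I would prove the auxiliary lemma by induction on $r$. The base case $r=1$ is immediate, since a single nonzero monomial $b_1 x^{\mu_1}$ has no positive zeros. For the inductive step, reorder so that the nonzero coefficients sit at indices $1,\ldots,r$ with $\mu_1 < \cdots < \mu_r$, and factor: $g(x) = x^{\mu_1}\, h(x)$ where $h(x) = b_1 + \sum_{s=2}^r b_s x^{\mu_s - \mu_1}$. The positive roots of $g$ coincide with those of $h$. If $h$ had at least $r$ positive roots, then by Rolle's theorem $h'$ would have at least $r-1$ positive roots; but $h'(x) = \sum_{s=2}^r b_s(\mu_s - \mu_1)\, x^{\mu_s - \mu_1 - 1}$ is a generalized polynomial with exactly $r-1$ nonzero coefficients and distinct real exponents (since $\mu_s - \mu_1 > 0$ for $s \ge 2$), so by the inductive hypothesis it has at most $r-2$ positive roots — contradiction.

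The only subtle point is verifying that differentiation preserves the ``nonzero coefficient'' count: this works because the constant term $b_1$ is killed by $\frac{d}{dx}$ while every other coefficient survives (multiplied by the strictly positive factor $\mu_s - \mu_1$), leaving exactly $r-1$ nonzero terms with distinct exponents. I do not anticipate substantial obstacles; the argument is entirely elementary, uses only Rolle's theorem, and in fact does not require the $\lambda_j$'s to be integers — just distinct reals together with positivity of the $\alpha_i$'s, which is needed so that $x^{\lambda_j}$ is well-defined and the relevant interval $(0,\infty)$ is where roots are counted.
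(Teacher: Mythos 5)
Your proof is correct, and it takes a genuinely different route from the paper's. The paper simply invokes the classical fact (citing Gantmakher), and the authors' own (unpublished) argument fixes $\alpha_1,\dots,\alpha_{m-1}$, views $\det(Q)$ as a polynomial in $\alpha_m$ with at most $m$ monomials, invokes Descartes' rule of signs to cap its positive roots at $m-1$, and factors out the $m-1$ known roots $\alpha_1,\dots,\alpha_{m-1}$ to conclude the determinant is nonzero at any further positive point. You instead pass to the contrapositive via linear dependence of the \emph{columns}, which converts singularity directly into a single generalized polynomial $\sum_j c_j x^{\lambda_j}$ with $m$ distinct positive roots, and you prove the needed fewnomial bound (at most $r-1$ positive roots for $r$ nonzero terms) from scratch by a Rolle-plus-induction argument. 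Your version buys two things: it is self-contained (no appeal to Descartes' rule as a black box), and it works verbatim for arbitrary distinct real exponents $\lambda_j$, not just integer column indices. It also sidesteps a subtlety in the determinant-based argument, namely that one must separately know $\det(Q)$ is not identically zero as a polynomial in $\alpha_m$ before counting its roots (which really requires an induction on $m$); your column-dependence formulation never needs this. The one thing the paper's route gives that yours does not is the explicit product formula for the determinant, but that is not needed for the lemma.
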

This implies that, with the samples from a $2k \times n$ Vandermonde matrix, a $k$-sparse vector can be exactly recovered. This is because for any two unknown vectors $\beta$ and $\hat{\beta}$, the same set of responses for all the $2k$ rows of the Vandermonde matrix implies that  a $2k \times 2k$ square submatrix of the Vandermonde matrix is not full rank which is a contradiction to Lemma \ref{lem:vmrank}.

We are now ready to prove Theorem \ref{thm:noiseless}.

\begin{proof}
For the case of $L=1$, note that the setting is the same as the well-known compressed sensing problem. Furthermore, suppose a $2k \times n$ matrix has the property that any $2k \times 2k$ submatrix is full rank, then using the rows of this matrix as queries is sufficient to recover any $k$-sparse vector. By Lemma \ref{lem:vmrank}, any $2k\times n$ Vandemonde matrix has the necessary property.

Let $\beta^{1},\beta^{2},\dots,\beta^{L}$ be the set of unknown $k$-sparse vectors. Notice that a particular row of the Vandermonde matrix looks like $[1 \; z \; z^{2} \; z^{3} \; \dots \; z^{n-1}]$ for some value of $z \in \mathbb{R}$. Therefore, for some vector $\beta^{i}$ and a particular row of the Vandermonde matrix, the inner product of the two can be interpreted as a degree $n$ polynomial evaluated at $z$ such that the coefficients of the polynomial form the vector $\beta^{i}$. More formally, the inner product can be written as 
$
f^{i}(z)=\sum_{j=0}^{n-1} \beta^{i}_j z^{j}
$
where $f^{i}$ is the polynomial corresponding to the vector $\beta^i$. For any value $z \in \mathbb{R}^n$, we can define an ordering over the $L$ polynomials $f^1,f^2,\dots,f^L$ such that $f^i > f^j$ iff $f^i(z)>f^j(z)$.

For two distinct indices $i,j \in [L]$, we will call the polynomial $f^i-f^j$ a \textit{difference polynomial}. 
Each difference polynomial has at most $2k$ non-zero coefficients and therefore has at most $2k$ positive roots by Descartes' Rule of Signs \cite{curtiss1918recent}. Since there are at most ${L(L-1)}/{2}$ distinct difference polynomials, the total number of distinct values that are roots of at least one difference polynomial is less than $kL^{2}$. Note that if an interval does not include any of these roots, then the ordering of $f^1, \ldots, f^L$ remains consistent for any point in that interval. In particular, consider the intervals $(0,\gamma], (\gamma, 2\gamma], \ldots, (1-\gamma, 1]$ where $\gamma=1/(k^2L^2)$. At most $kL^{2}$ of these intervals include a root of a difference polynomial and hence if we  pick a random interval then with probability at least  $1-1/k$, the ordering of $f^1, \ldots, f^L$ are consistent throughout the interval. If the interval chosen is $(t\gamma,(t+1)\gamma]$ then set $\alpha_j=t\gamma+j\gamma/(2k)$ for $j=1,\ldots, 2k$.


Now for each value of $\alpha_i$, define the vector $\mathbf{x}_i \equiv [1 \; \alpha_i \; \alpha_i^{2} \; \alpha_i^{3} \; \dots \; \alpha_i^{n-1}]$.
For each $i\in [2k]$, the vector $\mathbf{x}_i$ will be used as query to the oracle repeatedly for $L\log Lk^2$ times. We will call the set of query responses from the oracle for a fixed query vector $\mathbf{x}_i$ a \textit{batch}. For a fixed batch and $\beta^j$,
\begin{align*}
\Pr(\beta^j \textup{ is not sampled by the oracle in the batch}) \le \Big(1-\frac{1}{L}\Big)^{L\log Lk^2} \le e^{-\log Lk^2}=\frac{1}{Lk^2}.
\end{align*}
Taking a union bound over all the vectors ($L$ of them) and all the batches ($2k$ of them), we get that in every batch every vector $\beta^j$ for $j \in [L]$ is sampled with probability at least $1-{2}/{k}$. Now, for each batch, we will retain the unique values (there should be exactly $L$ of them with high probability) and sort the values in each batch. Since the ordering of the polynomial remains same, after sorting, all the values in a particular position in each batch correspond to the same vector $\beta^j$ for some unknown index $j\in [L]$. We can aggregate the query responses of all the batches in each position and since there are $2k$ linear measurements corresponding to the same vector, we can recover all the unknown vectors $\beta^{j}$ using Lemma \ref{lem:vmrank}.   The failure probability of this algorithm is at most ${3}/{k}$.
\end{proof}

The following theorem establishes that our method is almost optimal in terms of sample complexity.

\begin{thm}\label{thm:noiselesslb}
At least $2Lk$ oracle queries are necessary to recover an arbitrary set of  $L$ vectors that are $k$-sparse. 
\end{thm}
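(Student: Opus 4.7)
The plan is to reduce the desired lower bound to the classical compressed sensing lower bound via an \emph{augmented oracle} trick. Specifically, I will consider a stronger oracle that, in addition to returning $y_i = \langle \xv_i, \beta^{\sigma(i)}\rangle$, also reveals the identity $\sigma(i) \in [L]$ of the signal that generated the response. Since any algorithm for the original problem can be run against this stronger oracle (by ignoring the extra bits), any lower bound proved in the augmented setting transfers immediately to the original setting.

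Under the augmented oracle, the task decouples into $L$ independent noiseless compressed sensing problems: for each $\ell \in [L]$, one must recover the $k$-sparse vector $\beta^\ell$ from exactly $m_\ell := |\sigma^{-1}(\ell)|$ linear measurements. I will invoke the standard compressed sensing lower bound, which says that exactly recovering an arbitrary $k$-sparse vector in $\mathbb{R}^n$ requires at least $2k$ linear measurements. The one-line proof is that with fewer than $2k$ rows, any $2k$ columns of the measurement matrix are linearly dependent, so the null space contains a nonzero $2k$-sparse vector, which can be split as $\beta - \beta'$ with two distinct $k$-sparse vectors producing identical measurements.

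The combinatorial glue is a trivial pigeonhole: if the total query budget is $m < 2Lk$, then for \emph{every} realization $\sigma \in [L]^m$ we have $\min_\ell m_\ell \le \lfloor m/L\rfloor < 2k$, so at least one signal always receives fewer than $2k$ measurements. Because every $\sigma$ occurs with positive probability $L^{-m}$, any algorithm that claims exact recovery on every input with probability one must in particular succeed on every fixed $\sigma$.

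Putting the pieces together, I will fix any (possibly adaptive) algorithm using $m < 2Lk$ queries, fix an input $\calB$, and fix an arbitrary $\sigma$; by the pigeonhole step there is some $\ell^*$ with $m_{\ell^*} < 2k$, and the compressed sensing argument then produces a $k$-sparse $\tilde\beta \neq \beta^{\ell^*}$ satisfying $\langle \xv_i, \tilde\beta\rangle = \langle \xv_i, \beta^{\ell^*}\rangle$ for every $i \in \sigma^{-1}(\ell^*)$. Setting $\calB' := (\calB \setminus \{\beta^{\ell^*}\}) \cup \{\tilde\beta\}$, the algorithm issues exactly the same queries and observes exactly the same response sequence under both $\calB$ and $\calB'$ with this $\sigma$, hence produces the same output, which cannot equal both sets. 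The main delicate point is the very first step: without moving to the augmented oracle the $L$ sub-problems do not decouple, since a single query's response is a random sample from a mixture over all $L$ signals, and one would be forced to match full response \emph{distributions} rather than individual responses.
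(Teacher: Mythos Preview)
Your proposal is correct in spirit but takes a genuinely different route from the paper. The paper's proof is a coupon-collector-style argument: it fixes one vector $\beta$, lets $X$ be the number of oracle queries until $\beta$ has been sampled at least $2k$ times, writes $X$ as a sum of $2k$ i.i.d.\ geometric$(1/L)$ random variables, and applies Chebyshev's inequality to conclude that $X > 2Lk(1-o(1))$ with probability at least $1-1/\sqrt{k}$. Thus the paper's argument is probabilistic and yields the slightly weaker asymptotic bound $2Lk(1-o(1))$, but in exchange it rules out algorithms that succeed with high probability, not just probability one. Your argument is combinatorial (augmented oracle, deterministic pigeonhole, indistinguishability) and nails the exact constant $2Lk$, but as written it only precludes algorithms that succeed with probability exactly one on every input. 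Both proofs implicitly invoke the same compressed-sensing fact (fewer than $2k$ linear measurements cannot pin down an arbitrary $k$-sparse vector); you make the reduction to it explicit, which is cleaner.

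One small technical point to tighten: the sentence ``the compressed sensing argument then produces a $k$-sparse $\tilde\beta \neq \beta^{\ell^*}$'' overstates what the null-space argument gives. With $m_{\ell^*}<2k$ rows, you are guaranteed the existence of \emph{some} pair of distinct $k$-sparse vectors with identical measurements, not a partner for an arbitrary prescribed $\beta^{\ell^*}$. In the non-adaptive case this is harmless: simply choose $\calB$ so that $\beta^{\ell^*}$ is one member of that pair (and pick the remaining $\beta^j$ distinct from both). For adaptive algorithms the queries assigned to $\ell^*$ depend on $\calB$, so you should phrase the reduction as: freeze $\sigma$ and $\beta^j$ for $j\ne\ell^*$, observe that the induced procedure on $\beta^{\ell^*}$ is an adaptive linear-measurement scheme with $m_{\ell^*}<2k$ rounds, and then invoke the (still valid) adaptive version of the $2k$ lower bound to obtain an indistinguishable pair. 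The paper's proof glosses over this same subtlety, so you are not losing anything relative to it, but stating it this way closes the gap cleanly.
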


\section{Noisy Samples and Sparse Approximation}
We now consider the more general setting where the oracle is noisy and the vectors $\beta^1, \dots, \beta^L$ are not necessarily sparse. We assume $L$ is an arbitrary constant, i.e.,  it does not grow with $n$ or $k$ and that the unknown vectors have $\epsilon$ precision, i.e., each entries is an integer multiple of $\epsilon$.
The noise will be Gaussian with zero mean and variance $\sigma^2,$ i.e., $\eta \sim \mathcal{N}(0,\sigma^2)$. Our main result of this section is the following.
{
\color{black}
 \begin{thm}\label{thm:bigone}
  It is possible to recover approximations with the  $\ell_1/\ell_1$ guarantee in Eq.~\eqref{eq:guarantee} with probability at least $1-{2}/{n}$ of all the unknown vectors $\beta^\ell \in \{0, \pm \epsilon, \pm 2\epsilon, \pm 3\epsilon, \dots\}^n, \ell =1, \dots, L$ with
  $O(k (\log^3 n) \exp ((\sigma/\epsilon)^{2/3})$ oracle queries where $\mathsf{SNR}=O(1/\sigma^2)$.
\end{thm}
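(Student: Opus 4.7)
The plan is to combine three ingredients sketched in the introduction: (i) a compressed sensing matrix $A \in \{\pm 1\}^{m \times n}$ with the $\ell_1/\ell_1$ sparse-recovery guarantee, (ii) a subroutine for learning a mixture of $L$ Gaussians whose means are constrained to lie on an $\epsilon$-grid, and (iii) an alignment scheme, inspired by error-correcting codes, that matches the recovered means across different queries.

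I first draw a random $\pm 1$ matrix $A$ with $m = O(k\log(n/k))$ rows. Such a matrix satisfies the restricted isometry property and hence admits a decoder that, from a noise-free measurement $A\beta$, produces a $k$-sparse $\hat\beta$ with $\|\beta - \hat\beta\|_1 \le c\|\beta - \beta^*\|_1$. I use each row $\xv_i$ as the query vector for a batch of $N$ oracle calls. Because $\xv_i \in \{\pm 1\}^n$ and every coordinate of every $\beta^\ell$ is an integer multiple of $\epsilon$, each inner product $\langle \xv_i, \beta^\ell\rangle$ lies on the $\epsilon$-grid, so a batch of $N$ answers is an i.i.d.\ sample of size $N$ from the uniform mixture $\tfrac{1}{L}\sum_{\ell=1}^L \mathcal{N}(\langle \xv_i,\beta^\ell\rangle,\sigma^2)$ whose component means are grid-valued.

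Next I apply a complex-analytic mixture learner, in the spirit of the Nazarov--Peres bound that the authors invoke for trace reconstruction, which recovers the exact set of grid-means from $N = \exp((\sigma/\epsilon)^{2/3})\cdot \mathrm{polylog}(n)$ samples with failure probability $O(1/n^2)$; the core estimate is that on a suitable complex contour of radius $\asymp (\epsilon/\sigma)^{2/3}$ no nontrivial integer combination of shifted Gaussian characteristic functions can be too small, which gives the sample bound. After this step I know, for each batch $i$, the unordered multiset $\{\langle \xv_i, \beta^\ell\rangle\}_\ell$. To stitch these unordered multisets into a labeling consistent across all rows, I run the same mixture learner on additional batches with queries $\xv_i \pm \xv_j$; the entries of these queries lie in $\{-2,\ldots,2\}$, so the grid property is preserved. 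The identity $\langle \xv_i \pm \xv_j,\beta^\ell\rangle = \langle \xv_i,\beta^\ell\rangle \pm \langle \xv_j,\beta^\ell\rangle$ determines the bijection between the labelings at $i$ and $j$ uniquely with high probability, since accidental coincidences between the $L = O(1)$ sums are ruled out by the randomness of $A$. Choosing the consistency pairs $(i,j)$ to form a connected, low-degree expander on the $m$ rows yields a globally consistent labeling using only $O(\log n)$ auxiliary batches per row, and this is exactly where the extra $\log^2 n$ factor in the sample complexity comes from.

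Once the alignment is in place, for each $\ell \in [L]$ I have the exact vector $A\beta^\ell$ on the $\epsilon$-grid, and the RIP decoder produces $\hat\beta^\ell$ satisfying Eq.~\eqref{eq:guarantee}. The total query count is (number of distinct query vectors $\asymp k\log^3 n$) times (batch size $\exp((\sigma/\epsilon)^{2/3})$), and a union bound over all $O(k\log^3 n)$ batches, each mixture-recovery succeeding with probability $1 - O(1/n^2)$, gives overall failure at most $2/n$. The $\mathsf{SNR} = O(1/\sigma^2)$ claim follows because for unit-norm $\beta^\ell$ and random $\pm 1$ query $\xv$, $\mathbb{E}\,\langle \xv,\beta^\ell\rangle^2 = \|\beta^\ell\|_2^2 = 1$. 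The main obstacle, and the source of the unusual $\exp((\sigma/\epsilon)^{2/3})$ factor, is the Gaussian-mixture step: exactly identifying means on a grid of spacing $\epsilon$ under Gaussian noise of standard deviation $\sigma$ requires the Nazarov--Peres-type anti-concentration bound and is the most delicate piece of the argument; by comparison the alignment step needs only a careful bookkeeping of failure probabilities, and the compressed-sensing step is standard.
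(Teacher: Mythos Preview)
Your high-level decomposition into (i) RIP matrix, (ii) grid-mean Gaussian-mixture recovery via the Nazarov--Peres bound, and (iii) alignment, matches the paper's, and your treatment of steps (i) and (ii) is essentially the paper's argument. The gap is in step (iii) for $L>2$.

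You assert that from the multisets $\{\langle \xv_i,\beta^\ell\rangle\}_\ell$, $\{\langle \xv_j,\beta^\ell\rangle\}_\ell$, $\{\langle \xv_i\pm\xv_j,\beta^\ell\rangle\}_\ell$ one can determine the bijection ``uniquely with high probability, since accidental coincidences \ldots\ are ruled out by the randomness of $A$.'' This is false for $L\ge 3$ with $\pm 1$ queries, and the paper gives exactly the counterexample that breaks it: take $\beta^\ell=e_\ell$ (the $\ell$-th standard basis vector). Then for \emph{every} $\xv\in\{\pm 1\}^n$ we have $\langle\xv,\beta^\ell\rangle\in\{\pm 1\}$, so the multiset of $L$ inner products lives in a two-element set and is riddled with repetitions for any $L\ge 3$. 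No amount of randomness in $A$ helps; the alphabet is simply too small. Concretely, with $L=4$ and $\xv_i=(1,1,-1,-1,\ldots)$, $\xv_j=(1,-1,1,-1,\ldots)$, both the correct pairing and the pairing obtained by swapping $\beta^1\leftrightarrow\beta^2$ and $\beta^3\leftrightarrow\beta^4$ produce the same sum and difference multisets, so your test cannot distinguish them. Once such local ambiguities propagate along your expander, the global labeling is corrupted and the reconstructed vectors are mixtures of the true $\beta^\ell$'s.

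The paper's fix is precisely to enlarge the query alphabet: alongside $\vv\in\{\pm 1\}^n$ it draws an auxiliary $\rv$ uniformly from $\{-2z^\star,\ldots,2z^\star\}^n$ and a random multiplier $q\in\{1,\ldots,4z^\star+1\}$, and queries with the triplet $(\vv+\rv,\ (q-1)\rv,\ \vv+q\rv)$. The larger alphabet gives enough anti-concentration (over $\rv$ and $q$) to make the triplet ``good'' with constant probability, meaning no spurious sum relation $\langle\vv+\rv,\beta^i\rangle+\langle(q-1)\rv,\beta^j\rangle=\langle\vv+q\rv,\beta^k\rangle$ holds for non-identical $(i,j,k)$; a second layer (``matching good'' via $\rv'+\rv^\star$) then aligns everything to a fixed reference. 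Your $\xv_i\pm\xv_j$ scheme is exactly what the paper uses for $L=2$ and explicitly abandons for $L>2$; to close the proof you need to replace it with a construction of this type.
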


}
{
Before we proceed with the ideas of proof, it would be useful to recall the {\em restricted isometry property} (RIP) of matrices in the context of recovery guarantees of ~\eqref{eq:guarantee}.   A matrix $\Phi \in \reals^{m \times n}$ satisfies the $(k,\delta)$-RIP if for any vector $\zv \in \reals^n$ with $\|\zv\|_0 \le k,$
 \begin{align}\label{eq:rip}
(1-\delta)\|\zv\|_2^2 \le \|\Phi\zv\|_2^2 \le (1+\delta)\|\zv\|_2^2.       
 \end{align} 
 It is known that if a matrix is $(2k, \delta)$-RIP with $\delta < \sqrt{2} -1$, then the guarantee of ~\eqref{eq:guarantee} (in particular, $\ell_1/\ell_1$-guarantee and also an $\ell_2/\ell_1$-guarantee) is possible~\cite{candes2008restricted} with the {\em the basis pursuit} algorithm, an efficient algorithm based on linear programming. It is also known that a random $\pm 1$ matrix (with normalized columns) satisfies the property with $c_s k \log n$ rows, where $c_s$ is an absolute constant ~\cite{baraniuk2008simple}.

%

There are several key ideas of the proof. Since the case of $L=2$ is simpler to handle, we start with that and then provide the extra steps necessary for the general case subsequently.

\subsection{Gaussian Noise: Two vectors}\label{sec:Lis2}
\begin{algorithm}[tb]
\caption{\texttt{Noisy Recovery for $L=2$} The algorithm for recovering best $k$-sparse approximation of vectors via queries to oracle in noisy setting. \label{alg:noisy}}
\begin{algorithmic}[1]
\REQUIRE  $\mathsf{SNR}={1}/{\sigma^2}$, Precision of unknown vectors $\epsilon$, and the constant $c_s$ where $c_sk \log n$ rows are sufficient for RIP in binary matrices.
\FOR{$i=1,2,\dots,c_s k\log ({n}/{k})$} 
\STATE Call SampleAndRecover($\mathbf{v}_i$) where $\mathbf{v}_i \in_R \{+1,-1\}^{n}$.
\ENDFOR
\FOR{$i \in [\log n]$ and $j\in [c_s k\log ({n}/{k})]$ with $j\neq i$}
\STATE Call SampleAndRecover($(\mathbf{v}_i+\mathbf{v}_j)/2$) and SampleAndRecover($(\mathbf{v}_i-\mathbf{v}_j)/2$)
\ENDFOR
\STATE Choose vector $\mathbf{v}$ from $\{\mathbf{v}_1,\mathbf{v}_2,\dots,\mathbf{v}_{\log n}\}$ such that $\langle \mathbf{v},\beta^1 \rangle \neq \langle \mathbf{v},\beta^2 \rangle$. 
\FOR{$i=1,2,\dots,k\log ({n}/{k})$ and $\mathbf{v}_i \neq \mathbf{v}$}
\STATE Label one of $\langle \mathbf{v}_i,\beta^1 \rangle$,$\langle \mathbf{v}_i,\beta^2  \rangle$  to be $\langle \mathbf{v},\beta^1 \rangle$ if their sum is in the pair 
$\langle \frac{\mathbf{v}_i+\mathbf{v}}{2},\beta^1 \rangle$,$\langle \frac{\mathbf{v}_i+\mathbf{v}}{2},\beta^2  \rangle$ and their difference is in the pair $\langle \frac{\mathbf{v}-\mathbf{v}_i}{2},\beta^1 \rangle$,$\langle \frac{\mathbf{v}-\mathbf{v}_i}{2},\beta^2  \rangle$. Label the other $\langle v,\beta^2 \rangle$.
\ENDFOR
\STATE Aggregate all (query, denoised query response pairs) labelled $\langle \mathbf{v},\beta^1 \rangle$ and $\langle \mathbf{v},\beta^2 \rangle$ separately and multiply all denoised query responses by a factor of $1/(\sqrt{c_sk\log (n/k)})$. 
\STATE Return best $k$-sparse approximation of $\beta^1$ and $\beta^2$ by using Basis Pursuit algorithm on each aggregated cluster of (query, denoised query response) pairs.
\[\] \vspace{-1cm}
\STATE \textbf{function} SampleAndRecover (\bv)
\STATE \ \ \ \ Issue $T = c_2 \exp\left((\sigma/\epsilon)^{2/3}\right)$ queries to oracle with $\bv$.
\STATE \ \ \ \ Return $\langle\bv,\beta^1\rangle$, $\langle\bv,\beta^2\rangle$ via min-distance estimator (Gaussian mixture learning, lemma~\ref{ref:gmm}).  
\STATE \textbf{end function} 

\end{algorithmic}
\end{algorithm}

Algorithm~\ref{alg:noisy} addresses the setting with only two unknown vectors.
We will assume $\|\beta^1\|_2 = \|\beta^2\|_2 = 1$, so that we can subsequently show that the SNR is simply $1/\sigma^2$. {\color{black} This assumption is not necessary but we make this for the ease of presentation.} The assumption of $\epsilon$-precision for $\beta$ was made in Yin et al.~\cite{yin2019learning}, and we stick to the same assumption. On the other hand, Yin et al.~requires further assumptions that we do not need to make. Furthermore, the result of Yin et al.~is restricted to exactly sparse vectors, whereas our result holds for general sparse approximation.


For the two-vector case the result we aim to show is following.
\begin{thm}\label{thm:noisy}
Algorithm \ref{alg:noisy} uses $O(k \log^{3} n\exp ( ({\sigma}/{\epsilon})^{2/3} ))$ queries to recover both the vectors $\beta^1$ and $\beta^2$ with an $\ell_1/\ell_1$ guarantee in Eq.~\eqref{eq:guarantee} with probability at least $1-{2}/{n}$.
\end{thm}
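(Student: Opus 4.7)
The plan is to decompose Algorithm~\ref{alg:noisy} into three phases and establish that each succeeds with high probability: \emph{Phase~1 (denoising)} recovers, from each batch of $T$ noisy oracle responses to a fixed query, the unordered pair of true inner products; \emph{Phase~2 (alignment)} uses the sum/difference queries to attach a consistent label ($\beta^1$ vs.\ $\beta^2$) to each recovered pair; and \emph{Phase~3 (sparse recovery)} feeds the two resulting \emph{noiseless} linear systems into basis pursuit against the normalized random $\pm 1$ sensing matrix to obtain the $\ell_1/\ell_1$ guarantee. A final counting argument produces the $O(k\log^3 n\exp((\sigma/\epsilon)^{2/3}))$ bound.

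\textbf{Phase~1: denoising via grid-restricted Gaussian-mixture learning.} For every query vector $\mathbf{v}$ used by the algorithm the entries lie in $\{\pm 1\}$ or $\{-1,0,+1\}$ (the latter in the case of $(\mathbf{v}_i\pm\mathbf{v}_j)/2$), so the two inner products $\mu_\ell=\langle\mathbf{v},\beta^\ell\rangle$ are integer multiples of $\epsilon$. Each batch of $T$ responses is therefore an i.i.d.\ sample from a uniform mixture of two Gaussians with common variance $\sigma^2$ and means on the $\epsilon$-grid. Invoking the complex-analytic lemma for learning grid-restricted Gaussian mixtures (Lemma~\ref{ref:gmm}), $T=c_2\exp((\sigma/\epsilon)^{2/3})$ samples suffice to recover the unordered pair $\{\mu_1,\mu_2\}$ exactly with constant probability; amplifying by an independent $O(\log n)$ repetitions plus majority vote drives the per-call failure probability below $1/\mathrm{poly}(n)$, and a union bound over the $O(k\log^2 n)$ calls to SampleAndRecover gives overall denoising failure probability $\le 1/n$. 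This amplification contributes one of the three log factors in the final bound.

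\textbf{Phase~2: alignment via sum and difference queries.} First I will show that among $\mathbf{v}_1,\ldots,\mathbf{v}_{\log n}$ some $\mathbf{v}$ has $a_1:=\langle\mathbf{v},\beta^1\rangle\neq\langle\mathbf{v},\beta^2\rangle=:a_2$. Since $\beta^1\neq\beta^2$, each independent Rademacher sum $\langle\mathbf{v},\beta^1-\beta^2\rangle$ vanishes with probability at most $1/2$ (Littlewood--Offord), so the failure probability is at most $2^{-\log n}=o(1)$. Given such a $\mathbf{v}$ with known labels $a_1,a_2$, consider any other $\mathbf{v}_i$ whose unordered recovered pair is $\{b_1,b_2\}$. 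Having also recovered the sum pair $\{(a_1+b_1)/2,(a_2+b_2)/2\}$ and the difference pair $\{(a_1-b_1)/2,(a_2-b_2)/2\}$, a wrong labeling of $\{b_1,b_2\}$ would be simultaneously consistent with both pairs only if $a_1-a_2=b_2-b_1$ \emph{and} $a_1-a_2=b_1-b_2$, which forces $b_1=b_2$ and contradicts the assumption of two distinct mixture components. In the degenerate case $b_1=b_2$ no alignment is needed, so the labeling is unambiguous whenever Phase~1 succeeded.

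\textbf{Phase~3, counting, and main obstacle.} After Phases~1 and~2, for each $\ell\in\{1,2\}$ I possess the exact measurements $\langle\mathbf{v}_i,\beta^\ell\rangle$ for $i=1,\ldots,c_s k\log(n/k)$. Scaling by $1/\sqrt{c_s k\log(n/k)}$ turns the random $\pm 1$ matrix into a sensing matrix that, by the Baraniuk--Davenport--DeVore--Wakin result, satisfies $(2k,\delta)$-RIP with $\delta<\sqrt{2}-1$ with probability $\ge 1-1/n$, and Cand\`es's theorem then guarantees that basis pursuit returns the desired $\ell_1/\ell_1$ approximation of $\beta^\ell$. Counting queries: $c_s k\log(n/k)$ primary plus $2\log n\cdot c_s k\log(n/k)$ alignment calls to SampleAndRecover, each costing $O(\log n)\cdot T$ oracle queries after amplification, gives $O(k\log^3 n\exp((\sigma/\epsilon)^{2/3}))$ in total, and a final union bound over the three phases yields the claimed $1-2/n$ success probability. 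The hardest ingredient is Lemma~\ref{ref:gmm} powering Phase~1, which I treat here as a black box; alignment is combinatorial and Phase~3 is standard compressed-sensing theory once the measurements are exact.
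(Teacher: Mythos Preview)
Your proposal is correct and follows essentially the same three-phase argument as the paper: denoise each batch via Lemma~\ref{ref:gmm} (the paper invokes it directly with $t=\Theta(\log n)$ to get per-batch failure $1/\mathrm{poly}(n)$ rather than amplifying by majority vote, but the resulting batch size is the same), align via the sum/difference trick using a good reference vector among the first $\log n$, then run basis pursuit against the normalized random $\pm1$ matrix. One small slip in Phase~2: the displayed conditions $a_1-a_2=b_2-b_1$ and $a_1-a_2=b_1-b_2$ are not literally what arises---consistency of the wrong labeling with the observed sum pair means $a_1+b_2\in\{a_1+b_1,\,a_2+b_2\}$, which already forces $b_1=b_2$ or $a_1=a_2$ (and likewise for the difference pair)---but your conclusion that the wrong labeling is ruled out whenever $a_1\neq a_2$ and $b_1\neq b_2$ is correct.
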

This result is directly comparable with~\cite{yin2019learning}. On the statistical side, we improve their result in several ways: (1) we improve the dependence on $\sigma/\epsilon$ in the sample complexity from $\exp(\sigma/\epsilon)$ to $\exp( (\sigma/\epsilon)^{2/3} )$,\footnote{Note that~\cite{yin2019learning} treat $\sigma/\epsilon$ as constant in their theorem statement, but the dependence can be extracted from their proof.} (2) our result applies for dense vectors, recovering the best $k$-sparse approximations, and (3) we do not need the overlap assumption (eq.~\eqref{eq:weirdassumption}) used in their work. 


Once we show $\mathsf{SNR}={1}/{\sigma^2}$,  Theorem~\ref{thm:noisy} trivially implies  Theorem~\ref{thm:bigone} in the case $L=2$. Indeed,
from Algorithm \ref{alg:noisy}, notice that we have used  vectors  $\vv$ sampled uniformly at random from $\{+1,-1\}^n$ and use them as query vectors. We must have
$
    \avg_{\mathbf{v}}|\langle\mathbf{v},\beta^\ell\rangle|^2/\avg \eta^2=\|\beta^\ell\|_2^2/\sigma^2={1}/{\sigma^2}
$
for $\ell=1,2$. Further, we have used the sum and difference query vectors which have the form $(\mathbf{v_1+v_2})/{2}$ and $(\mathbf{v_1-v_2})/{2}$ respectively where $v_1,v_2$ are sampled uniformly and independently from $\{+1,-1\}^n$. Therefore, we must have for $\ell=1,2$, 
$
    \avg_{\mathbf{v_1},\mathbf{v_2}} |\langle (\mathbf{v_1 \pm v_2})/2,\beta^\ell \rangle|^2/\avg \eta^2={1}/{2\sigma^2}.
$
According to our definition of $\mathsf{SNR}$, we  have that $\mathsf{SNR}={1}/{\sigma^2}$.

A description of Algorithm  \ref{alg:noisy} that lead to proof of Theorem \ref{thm:noisy} can be found in Appendix~\ref{app:noisy_two}. We provide a short sketch here and state an important lemma that we will use in the more general case. 

The main insight is that for a fixed sensing vector $\bv$, if we repeatedly query with $\bv$, we obtain samples from a mixture of Gaussians $\tfrac{1}{2}\mathcal{N}(\langle \bv, \beta^1\rangle, \sigma^2) + \tfrac{1}{2}\mathcal{N}(\langle \bv, \beta^2\rangle, \sigma^2)$. If we can \emph{exactly} recover the means of these Gaussians, we essentially reduce to the noiseless case from the previous section. 
The first key step upper bounds the sample complexity for exactly learning the parameters of a mixture of Gaussians.
\begin{lem}[Learning Gaussian mixtures]\label{ref:gmm}
Let $\mathcal{M}=\frac{1}{L}\sum_{i=1}^L\mathcal{N}(\mu_i,\sigma^2)$
be a uniform mixture of $L$ univariate Gaussians, with known shared
variance $\sigma^2$ and with means $\mu_i \in
\epsilon\mathbb{Z}$. Then, for some constant $c>0$ and some $t=\omega(L)$, there exists an algorithm that requires $ ctL^2 \exp((\sigma/\epsilon)^{2/3})$ samples from $\mathcal{M}$ and
exactly identifies the parameters $\{\mu_i\}_{i=1}^L$ with
probability at least $1-2e^{-2t}$.
\end{lem}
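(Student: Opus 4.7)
The plan is to attack this via the complex-analytic moment-matching approach developed for trace reconstruction by Nazarov and Peres~\cite{NazarovP17}, first rescaling samples by $\epsilon$ so that the means lie in $\mathbb{Z}$ and the shared variance is $\tau^2 := (\sigma/\epsilon)^2$. The characteristic function of the mixture factors as
\[
\phi(\xi) \;=\; \frac{e^{-\tau^2 \xi^2/2}}{L}\sum_{j=1}^L e^{i\mu_j \xi},
\]
so two mixtures with distinct integer mean multisets $S \neq S'$ differ in $\phi$ by $e^{-\tau^2\xi^2/2} P(\xi)/L$, where $P(\xi) = \sum_{\mu\in S} e^{i\mu\xi} - \sum_{\mu\in S'} e^{i\mu\xi}$ is a nonzero exponential polynomial with integer frequencies and at most $2L$ terms of unit modulus. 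Recovering the $\mu_i$ is therefore equivalent to distinguishing such $P$ from zero via samples.

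The central technical step will be a separation lemma: for any distinct multisets $S,S'$, there exists a real frequency $\xi^\star$ at which $|\phi(\xi^\star) - \phi'(\xi^\star)| \geq \exp(-C(\sigma/\epsilon)^{2/3})$ for some absolute constant $C$. To prove this I would invoke a Tur\'an/Borwein--Erd\'elyi-type estimate showing that on any arc of half-length $\delta$ around the origin the sup-norm satisfies $\sup_{|\xi|\leq \delta} |P(e^{i\xi})| \gtrsim \exp(-c/\delta)$, and then optimize $\delta$ to balance this against the Gaussian damping factor $e^{-\tau^2\delta^2/2}$. The maximum of $\exp(-c/\delta - \tau^2\delta^2/2)$ is attained at $\delta \sim \tau^{-2/3}$ and equals $\exp(-\Theta(\tau^{2/3}))$; this is the genuine hard core of the argument and is where the $2/3$-exponent is forced, by the same mechanism as in trace reconstruction. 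The delicate part is establishing the Borwein--Erd\'elyi estimate for \emph{difference} polynomials supported on integers of arbitrary range, rather than on a fixed-degree window.

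Granting the separation lemma, the algorithm is a straightforward minimum-distance (``moment-matching'') estimator on the empirical characteristic function. Draw $N$ i.i.d.\ samples $X_1,\dots,X_N$, form $\hat\phi(\xi) := N^{-1}\sum_k e^{i\xi X_k}$ on a small grid of frequencies inside $[-\tau^{-2/3},\tau^{-2/3}]$, and use Hoeffding's inequality for bounded complex random variables (since $|e^{i\xi X_k}|=1$) to obtain $|\hat\phi(\xi) - \phi(\xi)| \leq O(\sqrt{t/N})$ uniformly over the grid with probability at least $1 - 2e^{-2t}$. A preliminary use of the empirical mean and variance restricts the search to candidate mean multisets lying in a window of $\mathrm{poly}(L,\tau,\log N)$ integers; enumerate these candidates and output the one whose theoretical characteristic function best matches $\hat\phi$ on the grid. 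Choosing $N = c t L^2 \exp((\sigma/\epsilon)^{2/3})$ makes the empirical error strictly smaller than half the separation from the lemma, so the correct candidate is uniquely selected after a union bound over the polynomially-many candidate pairs. The only obstacle of substance is the Borwein--Erd\'elyi estimate for integer-frequency exponential polynomials; everything else reduces to routine concentration and a union bound.
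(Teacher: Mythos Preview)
Your proposal is correct in its core and matches the paper on the hard step: both arguments reduce to the Borwein--Erd\'elyi bound (Lemma~\ref{lem:npb} in the paper, cited from~\cite{BorweinE97}) applied to the difference of characteristic functions, then optimize the frequency window $|\xi|\le \pi/(\epsilon S)$ to balance the $e^{-cS}$ lower bound against the Gaussian damping $e^{-\sigma^2\pi^2/(2\epsilon^2 S^2)}$, landing at $S\asymp(\sigma/\epsilon)^{2/3}$ and hence separation $L^{-1}\exp(-\Omega((\sigma/\epsilon)^{2/3}))$. Your worry that this estimate is ``delicate'' for integer-frequency polynomials of arbitrary range is overstated: the lemma in~\cite{BorweinE97} is stated for power series with bounded integer coefficients and applies verbatim once one records the signed multiplicities of the mean multisets as coefficients (negative means are handled by factoring out a unimodular $z^{\mu_{\min}}$).

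Where you diverge from the paper is in the \emph{estimator}. The paper converts the characteristic-function separation into a total-variation separation via Lemma~\ref{lem:chartv}, then invokes the Yatracos/Devroye--Lugosi minimum-distance estimator~\cite{devroye2012combinatorial}: the relevant Yatracos class $\mathcal{A}=\{\{x:\mathcal{M}(x)\ge\mathcal{M}'(x)\}\}$ has VC dimension $O(L)$ because a linear combination of $2L$ Gaussian densities has at most $4L-2$ zero-crossings~\cite{kalai2012disentangling}, and McDiarmid then gives $\|\hat{\mathcal{M}}-\mathcal{M}\|_{\mathrm{TV}}\lesssim\sqrt{L/m}+\sqrt{t/m}$ with the stated failure probability. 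This route sidesteps the bookkeeping you introduce (a frequency grid fine enough that every candidate pair is separated at some grid point, a preliminary localization of the means to a bounded window, and an enumeration over $W^{O(L)}$ candidate multisets). Your approach works---Hoeffding on $|e^{i\xi X}|\le 1$, Lipschitz continuity of $\phi$ to pass to a grid, and $L=O(1)$ to keep the enumeration polynomial---but it requires these extra moving parts and implicitly uses $L=O(1)$ in the union bound over candidates, whereas the paper's VC argument keeps the $L$-dependence explicit and clean.
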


If we sense with $\bv \in \{-1,+1\}^n$ then $\langle\bv,\beta^1\rangle,\langle\bv,\beta^2\rangle \in \epsilon \mathbb{Z}$, so appealing to the above lemma, we can proceed assuming we know these two values exactly. 
Unfortunately, the sensing vectors here are more restricted --- we must maintain bounded SNR and our technique of mixture learning requires that the means have finite precision --- so we cannot simply appeal to our noiseless results for the alignment step.
Instead we design a new alignment strategy, inspired by error correcting codes. Given two query vectors $\bv_1,\bv_2$ and the exact means $\langle \bv_i,\beta^j\rangle$, $i,j \in \{1,2\}$, we must identify which values correspond to $\beta^1$ and $\beta^2$. 
In addition to sensing with any pair $\bv_1$ and $\bv_2$ we sense with $\tfrac{\bv_1 \pm \bv_2}{2}$, and we use these two additional measurements to identify which recovered means correspond to $\beta^1$ and which correspond to $\beta^2$.  Intuitively, we can check if our alignment is correct via these reference measurements.


Therefore, we can obtain aligned, denoised inner products with each of the two parameter vectors. At this point we can apply a standard compressed sensing result as mentioned at the start of this section to obtain the sparse approximations of vectors. 

\subsection{General value of $L$}\label{sec:generalL}
In this setting, we will have $L>2$ unknown vectors $\beta^1,\beta^2,\dots,\beta^L \in \mathbb{R}^n$ of unit norm each from which the oracle can sample from with equal probability. We assume that $L$ does not grow with $n$ or $k$ and as before, all the elements in the unknown vectors lie on a $\epsilon$-grid. 
Here, we will build on the ideas for the special case of $L=2$. 

\begin{algorithm}[tbh]
\caption{\texttt{Noisy Recovery for any constant $L$} The algorithm for recovering best $k$-sparse approximation of vectors via queries to oracle in noisy setting.  \label{alg:noisygen}}
\begin{algorithmic}[1] 
\REQUIRE  $c',\alpha^{\star},z^{\star}$ as defined in equations \eqref{eq:c}, \eqref{eq:alpha} and \eqref{eq:z} respectively, Variance of noise $\avg \eta^2=\sigma^2$ and precision of unknown vectors as  $\epsilon$.
\FOR{$i=1,2,\dots,\sqrt{\alpha^{\star}}\log n+c'\alpha^{\star}k\log (n/k)$} 
\STATE Let $\mathbf{v}_i\in_R \{+1,-1\}^{n}$,  $\mathbf{r}_i\in_R \{-2z^{\star},-2z^{\star}+1,\dots, 2z^{\star}\}^{n}$, $q_i\in_R \{1,2,\dots,4z^{\star}+1\}$
\STATE 
Make $c_2\exp(({\sigma}/{\epsilon})^{2/3})$ queries to the oracle using each of the  vectors $(q_i-1)\mathbf{r_i},\mathbf{v_i}+q_i\mathbf{r_i}$ and  $\mathbf{v_i}+\mathbf{r_i}$.
\STATE Recover $\langle \{(q_i-1)\mathbf{r}_i,\beta^t \rangle\}_{t=1}^{L}$,$\{\langle \mathbf{v_i+r_i},\beta^t \rangle\}_{t=1}^{L}$,$\{\langle \mathbf{v}_i+q_i\mathbf{r}_i,\beta^t \rangle\}_{t=1}^{L}$ by using min-distance estimator (Gaussian mixture learning, lemma~\ref{ref:gmm}).
\ENDFOR

\FOR{$i\in [\sqrt{\alpha^{\star}}\log n]$ and $j\in [\alpha^{\star}k\log ({n}{k})]$}
\STATE Make $c_2\exp(({\sigma}/{\epsilon})^{2/3})$ queries to the oracle using the vector  $\mathbf{r}_{i+j}+\mathbf{r}_i$.
\STATE Recover $\{\langle \mathbf{r}_{i+j}+\mathbf{r}_i,\beta^t \rangle\}_{t=1}^{L}$, by using the min-distance estimator (Gaussian mixture learning, Lemma~\ref{ref:gmm}).
\ENDFOR
\STATE Choose vector $(\mathbf{v}^{\star},\mathbf{r}^{\star},q^{\star})$ from $\{(\mathbf{v}_t,\mathbf{r}_t,q_t)\}_{t=1}^{\sqrt{\alpha^{\star}}\log n}$ such that
$(\mathbf{v}^{\star}+\mathbf{r}^{\star},(q-1)\mathbf{r}^{\star},\mathbf{v}^{\star}+q^{\star}\mathbf{r}^{\star})$ is good. 
{
\color{black}
Call a triplet $(\mathbf{v}+\mathbf{r},(q-1)\mathbf{r}, \mathbf{v}+q\mathbf{r})$ to be good if no element in $\{\langle \mathbf{v}+q\mathbf{r},\beta^i \rangle\}_{i=1}^L$ can be written in two possible ways as sum of two elements, one each from $\{\langle \mathbf{v},\beta^i \rangle\}_{i=1}^L$ and $\{\langle \mathbf{v}+(q-1)\mathbf{r},\beta^i \rangle\}_{i=1}^L$.
}
\STATE Initialize $\mathcal{S}_j=\phi$ for $j=1,\dots,L$
\FOR{$i=\sqrt{\alpha^{\star}}\log n+1,2,\dots,\sqrt{\alpha^{\star}}\log n+c'\alpha^{\star}k\log \frac{n}{k}$}
\IF{$(\mathbf{v}_{i}+\mathbf{r}_i,(q_i-1)\mathbf{r}_{i},\mathbf{v}_{i}+q\mathbf{r}_i)$ is matching good with respect to $(\mathbf{v}^{\star}+\mathbf{r}^{\star},(q-1)\mathbf{r}^{\star},\mathbf{v}^{\star}+q^{\star}\mathbf{r}^{\star})$ 
({\color{black}
Call a triplet $(\mathbf{v}'+\mathbf{r}',(q'-1)\mathbf{r}', \mathbf{v}'+q'\mathbf{r}')$ to be matching good w.r.t a good triplet $(\mathbf{v}^{\star}+\mathbf{r}^{\star},(q^{\star}-1)\mathbf{r}^{\star}, \mathbf{v}^{\star}+q^{\star}\mathbf{r}^{\star})$ if $(\mathbf{v}'+\mathbf{r}',(q'-1)\mathbf{r}', \mathbf{v}'+q'\mathbf{r}')$ and $(\mathbf{r}',\mathbf{r}^{\star},\mathbf{r}'+\mathbf{r}^{\star})$ are good.
})}
\STATE Label the elements in $\{\langle \mathbf{v}_i,\beta^t \rangle\}_{t=1}^{L}$ as described in Lemma \ref{lem:lab}
\FOR{$j=1,2,\dots,L$}
\STATE $\mathcal{S}_j=\mathcal{S}_j\cup\{\langle \mathbf{v}_i,\beta^t \rangle\} $ if label of $\langle \mathbf{v}_{i},\beta^t \rangle$ is $\langle \mathbf{r}^{\star},\beta^j \rangle$
\ENDFOR
\ENDIF
\ENDFOR
\FOR{$j=1,2,\dots,L$}
\STATE Aggregate the elements of $\mathcal{S}_j$ and scale them by a factor of $1/c'k\log (n/k)$. 
\STATE Recover the vector $\beta^j$ by using basis pursuit algorithms (compressed sensing decoding).
\ENDFOR
\STATE Return $\beta^1,\beta^2,\dots,\beta^L$.
\end{algorithmic}
\end{algorithm}
 
The main result of this section is the following.
\begin{thm}\label{thm:noisygen}
Algorithm \ref{alg:noisygen} uses $O\Big(k (\log n)^{3}\exp \Big( (\frac{\sigma}{\epsilon})^{2/3} \Big)\Big)$ queries {\color{black}with $\mathsf{SNR}=O(1/\sigma^2)$} to recover all the vectors $\beta^1,\dots,\beta^L$ with  $\ell_1/\ell_1$ guarantees in Eq.~\eqref{eq:guarantee} with probability at least $1-{2}/{n}$.
\end{thm}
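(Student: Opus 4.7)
The plan is to follow the three-step template already established for $L=2$ — (i) use repeated queries with fixed sensing vectors to convert each measurement into a one-dimensional Gaussian mixture learning problem, (ii) align the recovered inner products across different sensing vectors so that the $j$-th batch corresponds to the same $\beta^j$, and (iii) run basis pursuit on the aligned, denoised inner products — while providing the combinatorial machinery required to make the alignment step work when $L>2$. For step (i), I will invoke Lemma~\ref{ref:gmm}: since every sensing vector used by Algorithm~\ref{alg:noisygen} has integer entries, the inner products $\langle \bv, \beta^t\rangle$ all lie on the $\epsilon$-grid, so $c_2\exp((\sigma/\epsilon)^{2/3})$ repetitions with $c_2=\Theta(L^2)$ suffice to recover the $L$ means of the corresponding Gaussian mixture exactly with probability at least $1-e^{-\Omega(\log n)}$, and then a union bound over all $O(k\log^2 n)$ distinct sensing vectors keeps this event global with probability at least $1-1/n$. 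Multiplying the number of sensing vectors by $c_2\exp((\sigma/\epsilon)^{2/3})$ gives the claimed $O(k\log^3 n\exp((\sigma/\epsilon)^{2/3}))$ query count; the SNR bound follows because every sensing vector has entries of size $O(1)$ (after the implicit normalization by $\sqrt{k\log(n/k)}$) while $\E\eta^2=\sigma^2$, yielding $\mathsf{SNR}=O(1/\sigma^2)$.

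For step (ii), the difficulty is that simply knowing $\{\langle\bv_i,\beta^t\rangle\}_t$ and $\{\langle\bv_j,\beta^t\rangle\}_t$ tells us the multisets but not which entry belongs to which $\beta$. The algorithm's remedy is to query not just $\bv_i$ but the auxiliary triple $((q_i-1)\bri,\ \bv_i+q_i\bri,\ \bv_i+\bri)$ built from a fresh random $\bri$ and scalar $q_i$. A triplet is called \emph{good} when the $L$ sums $\langle \bv+q\br,\beta^t\rangle$ admit a unique decomposition as $\langle \bv+\br,\beta^t\rangle + \langle (q-1)\br,\beta^{t'}\rangle$ — and uniqueness forces $t=t'$, which provides an unambiguous labeling of the entries of $\{\langle\bv,\beta^t\rangle\}_t$ relative to $\{\langle\br,\beta^t\rangle\}_t$. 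I will show that for random $(\bv,\br,q)$ drawn from the specified distributions, the probability of failing to be good is at most some constant $p^\star<1$: the bad event requires two different pairs $(t_1,t'_1)\neq(t_2,t'_2)$ to give the same sum, i.e.\ $\langle \bv+\br,\beta^{t_1}-\beta^{t_2}\rangle = (q-1)\langle\br,\beta^{t'_2}-\beta^{t'_1}\rangle$, and the randomness of $\br$ together with the quantization parameter $z^\star$ and the scalar $q$ makes each such coincidence have probability $O(1/z^\star)$; choosing $z^\star$ and $\alpha^\star$ as in \eqref{eq:z}, \eqref{eq:alpha} beats the $O(L^4)$ union bound over pairs of pairs. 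Iterating over $\sqrt{\alpha^\star}\log n$ independent triplets then guarantees that at least one is good with probability $1-n^{-\Omega(1)}$, yielding the reference triplet $(\bv^\star,\br^\star,q^\star)$.

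For step (ii) continued, I must propagate the labeling from the reference triplet to all the other sensing vectors $\bv_i$ used for compressed sensing. This is the role of the \emph{matching good} condition: a triplet $(\bv'+\br',(q'-1)\br',\bv'+q'\br')$ is matching good with respect to the reference if it is itself good \emph{and} the auxiliary triplet $(\br',\br^\star,\br'+\br^\star)$ is also good. The second requirement lets us compare the internal labels of $\br'$ and $\br^\star$ (via the queries with $\br_{i+j}+\br_i$ already performed in Algorithm~\ref{alg:noisygen}), thereby converting the label "$\langle\br',\beta^{t_j}\rangle$" into "$\langle\br^\star,\beta^{t_j}\rangle$". By the same calculation that bounded $p^\star$, each of the two conditions holds with probability at least a constant, so a constant fraction of the $c'\alpha^\star k\log(n/k)$ triplets is matching good; call this aligned subset of indices $\calI$. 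Lemma~\ref{lem:lab} then assigns each element of $\{\langle\bv_i,\beta^t\rangle\}_t$ (for $i\in\calI$) consistently to one of $L$ buckets $\calS_1,\dots,\calS_L$, so that $\calS_j$ contains $\langle \bv_i,\beta^{\pi(j)}\rangle$ for a single global permutation $\pi$.

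For step (iii), concentration of the binomial process ensures $|\calI|\ge c_s k\log(n/k)$ for a suitable constant (by adjusting $c'$), so the matrix whose rows are the aligned $\bv_i/\sqrt{|\calI|}$ is an i.i.d.\ $\pm 1$ random matrix with enough rows to satisfy the $(2k,\delta)$-RIP with $\delta<\sqrt 2-1$ by the result of~\cite{baraniuk2008simple}; basis pursuit on each bucket $\calS_j$ therefore returns a $k$-sparse approximation with the $\ell_1/\ell_1$ guarantee of~\eqref{eq:guarantee}~\cite{candes2008restricted}. A final union bound over (a) exact mixture learning on every query, (b) existence of a good reference triplet, (c) the $|\calI|$ lower bound, and (d) the RIP event shows all four hold simultaneously with probability at least $1-2/n$. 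The main obstacle in this plan is the probabilistic analysis of the good and matching good events: the sum condition $\langle \bv+\br,\beta^{t_1}-\beta^{t_2}\rangle=(q-1)\langle\br,\beta^{t'_2}-\beta^{t'_1}\rangle$ couples the randomness of $\bv,\br,$ and $q$, so I expect the bulk of the technical work to be a careful anti-concentration argument over $\br\in\{-2z^\star,\dots,2z^\star\}^n$ and $q\in\{1,\dots,4z^\star+1\}$ that yields a constant gap away from $1$ uniformly in the (unknown) vectors $\{\beta^\ell\}$.
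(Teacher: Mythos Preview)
Your proposal follows the paper's three-step template closely --- Gaussian mixture learning for denoising, good/matching-good triplets for alignment, basis pursuit for recovery --- and the probabilistic analysis you sketch for the good and matching-good events is essentially what the paper does (the paper works with triples $(i,j,k)$ rather than pairs of pairs, giving an $L^3$ rather than $L^4$ union bound, but this is cosmetic).

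There is, however, a genuine gap in step (iii). You write that ``the matrix whose rows are the aligned $\bv_i/\sqrt{|\calI|}$ is an i.i.d.\ $\pm 1$ random matrix,'' but this is false. The event that triplet $i$ is matching good requires in particular that $(\bv_i+\br_i,(q_i-1)\br_i,\bv_i+q_i\br_i)$ is good, and this event depends on $\bv_i$: it is the event that $\langle\br_i,\beta^a+(q_i-1)\beta^b-q_i\beta^c\rangle \ne \langle\bv_i,\beta^c-\beta^a\rangle$ for all non-identical triples $a,b,c$. Hence conditioning on the index set $\calI$ of matching-good triplets changes the distribution of $\{\bv_i:i\in\calI\}$, and you cannot directly invoke~\cite{baraniuk2008simple} on the resulting submatrix.

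The paper flags exactly this issue (``we do not know which query vectors will be matching good apriori'') and resolves it with Lemma~\ref{lem:RIPunion}: with high probability, \emph{every} subset of $c'k\log(n/k)$ rows out of the $\alpha^{\star} c' k\log(n/k)$ random $\pm 1$ rows satisfies $(2k,\delta)$-RIP simultaneously. This requires a union bound over $\binom{\alpha^{\star} c'k\log(n/k)}{c'k\log(n/k)}$ subsets on top of the usual RIP argument, and is precisely why the constants $c'$ and $\alpha^{\star}$ are defined through \eqref{eq:c} and \eqref{eq:alpha} --- the exponential concentration in the Johnson--Lindenstrauss/RIP tail must beat this extra combinatorial factor. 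You need to insert this argument (or an alternative, e.g.\ decoupling the matching-good event from $\bv_i$) to close the proof.
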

Theorem~\ref{thm:bigone} follows as a corollary of this result.

The analysis of Algorithm \ref{alg:noisygen} and the proofs of Theorems ~\ref{thm:bigone} and ~\ref{thm:noisygen} are provided in detail in Appendix~\ref{sec:ngenL}. Below we sketch some of the main points of the proof.

There are two main hurdles in extending the steps explained for $L=2$. For a query vector $\mathbf{v}$, we define the \textit{denoised query means} to be the set of elements $\{\langle \mathbf{v},\beta^i\rangle\}_{i=1}^{L}$. Recall that a query vector $\mathbf{v}$ is defined to be \textit{good} if all the elements in the set of denoised query means $\{\langle \mathbf{v},\beta^1 \rangle, \langle \mathbf{v},\beta^2 \rangle, \dots, \langle \mathbf{v},\beta^L \rangle\}$ are distinct. For $L=2$, the probability of a query vector $\mathbf{v}$ being \textit{good} for $L=2$ is at least $1/2$ but for a value of $L$ larger than $2$, it is not possible to obtain such guarantees without further assumptions. For a more concrete example, consider $L \ge 4$ and the unknown vectors $\beta^1,\beta^2,\dots,\beta^L$ to be such that $\beta^i$ has \texttt{1} in the $i^{th}$ position and zero everywhere else. If $\mathbf{v}$ is sampled from $\{+1,-1\}^n$ as before, then $\langle \mathbf{v},\beta^i \rangle$ can take values only in $\{-1,0,+1\}$ and therefore it is not possible that all the values $\langle \mathbf{v},\beta^i \rangle$ are distinct. Secondly, even if we have a \textit{good} query vector, it is no longer trivial to extend the clustering or alignment step. Hence a number of new ideas are necessary to solve the problem for any general value of $L$.

We need to define  a few constants which are used in the algorithm. Let $\delta < \sqrt{2}-1$ be a constant (we need a $\delta$ that allow $k$-sparse approximation given a $(2k,\delta)$-RIP matrix). Let $c'$ be a large positive constant such that 
\begin{align*}
    \frac{\delta^2}{16}-\frac{\delta^3}{48}-\frac{1}{c'} >0 \tag{A}\label{eq:c}.
\end{align*}
Secondly, let $\alpha^{\star}$ be another positive constant that satisfies the following for a given value of $c'$,
\begin{align*}
    \alpha^{\star}=\max \Big\{\alpha:\frac{\alpha^\alpha}{(\alpha-1)^{\alpha-1}} < \exp\Big(\frac{\delta^2}{16}-\frac{\delta^3}{48}-\frac{1}{c'}\Big)\Big\}. \tag{B}\label{eq:alpha}
\end{align*}
Finally, for a given value of $\alpha^{\star}$ and $L$, let $z^{\star}$
be the smallest integer that satisfies the following:
\begin{align*}
    z^{\star}=\min\Big\{z \in \mathbb{Z}:1-L^3\Big(\frac{3}{4z+1}-\frac{1}{{4z^{2}}+1}\Big) \ge \frac{1}{\sqrt{\alpha^{\star}}} \Big\}. \tag{C}\label{eq:z}
\end{align*}
\paragraph{The Denoising Step.} 
In each step of the algorithm, we  sample a vector $\mathbf{v}$ uniformly at random from $\{+1,-1\}^n$, another vector $\mathbf{r}$ uniformly at random from $\calG \equiv \{-2z^{\star},-2z^{\star}+1,\dots,2z^{\star}-1,2z^{\star}\}^n$ and a number $q$ uniformly at random from $\{1,2,\dots,4z^{\star}+1\}$. Now, we will use a batch of queries corresponding to the vectors $\mathbf{v+r},(q-1)\mathbf{r}$ and $\mathbf{v}+q\mathbf{r}$. 
 We  define a triplet of query vectors $(\mathbf{v}_1,\mathbf{v}_2,\mathbf{v}_3)$ to be \textit{good} if for all triplets of indices $i,j,k \in [L]$ such that $i,j,k$ are not identical,
\begin{align*}
    \langle \mathbf{v}_1,\beta^i \rangle+\langle \mathbf{v}_2,\beta^j \rangle \neq \langle \mathbf{v}_3,\beta^k \rangle.
\end{align*}
We show that the query vector triplet $(\mathbf{v+r},(q-1)\mathbf{r},\mathbf{v}+q\mathbf{r})$ is good with  at least some probability. 
This implies if we choose $O(\log n)$ triplets of such query vectors, then at least one of the triplets are good with probability $1-1/n.$ It turns out that, for a good triplet of vectors $(\mathbf{v+r},(q-1)\mathbf{r},\mathbf{v}+q\mathbf{r})$, we can obtain $\langle \mathbf{v},\beta^i \rangle$ for all $i \in [L]$.

 Furthermore, it follows from Lemma \ref{ref:gmm} that
 for a query vector $\vv$  with integral entries, a batch size of $T>c_3\log n\exp(({\sigma}/{\epsilon})^{2/3})$ , for some constant $c_3>0$, is sufficient to recover the denoised query responses $\langle \mathbf{v},\beta^1 \rangle, \langle \mathbf{v},\beta^2 \rangle,\dots, \langle \mathbf{v},\beta^L \rangle$ for all the queries  with probability at least $1-{1}/{\textup{poly}(n)}$.

\paragraph{The Alignment Step.}
Let a particular good query vector triplet be $(\mathbf{v^{\star}+r^{\star}},(q^{\star}-1)\mathbf{r^{\star}},\mathbf{v}^{\star}+q^{\star}\mathbf{r^{\star}})$. From now, we will consider the $L$ elements $\{\langle \mathbf{ r^{\star}},\beta^i \rangle\}_{i=1}^{L}$
to be labels and for a vector $\mathbf{u}$, we will associate a label with every element in $\{\langle \mathbf{u},\mathbf{\beta}^i \rangle\}_{i=1}^{L}$. The labelling is correct if, for all $i \in [L]$, the element labelled as $\langle \mathbf{r^{\star}},\beta^i \rangle$ also corresponds to the same unknown vector $\beta^i$. Notice that we can label the elements $\{\langle \mathbf{v^{\star}},\mathbf{\beta}^i \rangle\}_{i=1}^{L}$ correctly because the triplet $(\mathbf{v^{\star}+r^{\star}},(q^{\star}-1)r^{\star},\mathbf{v^{\star}}+q^{\star}\mathbf{r^{\star}})$ is good. Consider another good query vector triplet $(\mathbf{v'+r'},(q'-1)\mathbf{r'},\mathbf{v'}+q'\mathbf{r'})$. This {\em matches} with the earlier query triplet  if  additionally, the vector triplet $(\mathbf{r'},\mathbf{r^{\star}},\mathbf{r'}+\mathbf{r^{\star}})$ is also good. 

Such matching pair of good triplets exists, and can be found by random choice with some probability. We show that, the matching good triplets allow us to do the alignment in the case of general $L>2.$

At this point we would again like to appeal to the standard compressed sensing results. However we need to show that the matching good vectors themselves form a matrix that has the required RIP property. As our final step, we establish this fact.

\begin{rmk}[Refinement and adaptive queries]
It is possible to have a sample complexity of $O\Big(k (\log n)^2\log k\exp \Big((\epsilon\sqrt{\mathsf{SNR}})^{-2/3} \Big)\Big)$ in Theorem~\ref{thm:bigone}, but with a probability of $1-{\rm poly}(k^{-1}).$ Also it is possible to shave-off another $\log n$ factor from sample complexity if we can make the queries adaptive.
\end{rmk}

\vspace{0.1in}
{\em Acknowledgements:} This research is supported in part by NSF Grants CCF 1642658, 1618512, 1909046,  1908849 and 1934846.

\clearpage

\appendix

\begin{center}
{\Large Supplementary Material: Sample Complexity of Learning Mixtures  of Sparse Linear Regressions}    
\end{center}

\section{Proof of Theorem \ref{thm:noiselesslb}}
It is known that for any particular vector $\beta$, at least $2k$ queries to the oracle are necessary in order to recover the vector exactly. Suppose the random variable $X$ denotes the number of queries until the oracle has sampled the vector $\beta$ at least $2k$ times. Notice that $X=\sum_{i=1}^{2k} X_i$ can be written as a sum of independent and identical random variables $X_i$ distributed according to the geometric distribution with parameter ${1}/{L}$ where $X_i$ denotes the number of attempts required to obtain the $i^{\textup{th}}$ sample after the $(i-1)^{\textup{th}}$ sample has been made by the oracle. Since $X$ is a sum of independent random variables, we must have
\begin{align*}
\avg X=2Lk \quad \textup{and} \quad \textup{Var}(X)=2k(L^2-L)
\end{align*}
Therefore by using Chebychev's inequality \cite{boucheron2013concentration}, we must have
\begin{align*}
\Pr\left(X\le 2Lk-k^{\frac{1}{4}}\sqrt{2k(L^2-L)}\right) \le \frac{1}{\sqrt{k}}
\end{align*}
and therefore $X>2Lk(1-o(1))$ with high probability which proves the statement of the theorem.

\section{Description of Algorithm \ref{alg:noisy} and Proof of Theorem \ref{thm:noisy}}
\label{app:noisy_two}
 \noindent \textbf{Algorithm \ref{alg:noisy} (Design of queries and denoising):} Let $m$ be the total number of queries that we will make. In the first step of the algorithm, for a particular query vector $\mathbf{v}\in \mathbb{R}^n$, our objective is to recover $\langle \mathbf{v},\beta^1 \rangle$ and $\langle \mathbf{v},\beta^2 \rangle$ which we will denote as the \textit{denoised query responses} corresponding to the vector $\mathbf{v}$. It is intuitive, that in order to do this, we need to use the same query vector $\mathbf{v}$ repeatedly a number of times and aggregate the noisy query responses to recover the denoised counterparts.

Therefore, at every iteration in Step \texttt{1} of Algorithm \ref{alg:noisy}, we sample a vector $\mathbf{v}$ uniformly at random from $\{+1,-1\}^{n}$. Once the vector $\mathbf{v}$ is sampled, we use $\mathbf{v}$ as query vector repeatedly for $T$ times. We will say that the query responses to the same vector as query to be a \textit{batch} of size  $T$. It can be seen that since $\mathbf{v}$ is fixed, the query responses in a batch is sampled from a Gaussian mixture distribution $\mathcal{M}$ with means $\langle \mathbf{v},\beta^1 \rangle$ and $\langle \mathbf{v},\beta^2 \rangle$ and variance $\sigma^2$,  in short,
\begin{align*}
\mathcal{M}=\frac{1}{2}\mathcal{N}(\langle \mathbf{v},\beta^1 \rangle,\sigma^2)+\frac{1}{2}\mathcal{N}(\langle \mathbf{v},\beta^2 \rangle,\sigma^2).
\end{align*}
Therefore the problem reduces to recovering the mean parameters from a mixture of Gaussian distribution with at most two mixture constituents (since the means can be same) and having the same variance. We will use the following important lemma for this problem. 

\begin{lem*}[Lemma~\ref{ref:gmm}: Learning Gaussian mixtures] 
Let $\mathcal{M}=\frac{1}{L}\sum_{i=1}^L\mathcal{N}(\mu_i,\sigma^2)$
be a uniform mixture of $L$ univariate Gaussians, with known shared
variance $\sigma^2$ and with means $\mu_i \in
\epsilon\mathbb{Z}$. Then, for some constant $c>0$ and some $t=\omega(L)$, there exists an algorithm that requires $ ctL^2 \exp((\sigma/\epsilon)^{2/3})$ samples from $\mathcal{M}$ and
exactly identifies the parameters $\{\mu_i\}_{i=1}^L$ with
probability at least $1-2e^{-2t}$.
\end{lem*}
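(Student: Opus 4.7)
The plan is to pass to the characteristic function of $\mathcal{M}$ and exploit sparse-polynomial lower bounds on arcs of the unit circle (as in Borwein--Erd\'elyi~\cite{BorweinE97} and Nazarov--Peres~\cite{NazarovP17}). Writing
\[
\phi(\xi) \;:=\; \mathbb{E}_{Y\sim\mathcal{M}}\,e^{i\xi Y} \;=\; e^{-\sigma^2\xi^2/2}\cdot\frac{1}{L}\sum_{j=1}^L e^{i\mu_j\xi}
\]
and substituting $z=e^{i\epsilon\xi}$, the right-hand sum becomes $\frac{1}{L}\sum_j z^{\mu_j/\epsilon}$, a Laurent polynomial with $L$ unit-magnitude terms (well defined since $\mu_j/\epsilon\in\mathbb{Z}$). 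Any two distinct configurations $\{\mu_j\}\neq\{\mu_j'\}$ then yield characteristic functions whose difference equals $L^{-1}e^{-\sigma^2\xi^2/2}Q(z)$ for a nonzero Laurent polynomial $Q$ of sparsity at most $2L$ with bounded integer coefficients.

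Next I would apply a Littlewood-type estimate on a short arc. For a parameter $A>0$ to be chosen, the sparse-polynomial lower bound furnishes a point $z^\star$ on the arc $\{e^{i\theta}:|\theta|\le\epsilon A\}$ at which $|Q(z^\star)|\gtrsim\exp(-C/(\epsilon A))$. Translated back to frequencies, this yields a $\xi^\star\in[-A,A]$ where the signal gap between the two candidate models is at least
\[
|\phi(\xi^\star)-\phi'(\xi^\star)| \;\gtrsim\; L^{-1}\exp\!\bigl(-\sigma^2 A^2/2-C/(\epsilon A)\bigr).
\]
Optimizing the exponent by setting $A\asymp(\epsilon\sigma^2)^{-1/3}$ balances both terms at $\Theta((\sigma/\epsilon)^{2/3})$, giving a gap of at least $L^{-1}\exp(-c_1(\sigma/\epsilon)^{2/3})$.

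The estimator itself is minimum-distance matching. Draw $T$ samples, form the empirical characteristic function $\hat\phi(\xi):=T^{-1}\sum_{k} e^{i\xi Y_k}$ on a fine grid of $\xi$'s in $[-A,A]$, and return the candidate configuration whose theoretical $\phi$ best matches $\hat\phi$ in sup-norm over the grid. A coarse pre-estimate of $\mathbb{E}[Y]$ and $\mathrm{Var}(Y)$ localizes each $\mu_j$ to an interval of constant length, so the set of candidate configurations is finite. Since $|e^{i\xi Y_k}|\le 1$, Hoeffding gives $|\hat\phi(\xi)-\phi(\xi)|\le O(\sqrt{t/T})$ at each $\xi$ with probability $1-2e^{-2t}$, and a union bound over the frequency grid and the candidate configurations (absorbed into the $tL^2$ prefactor, using $t=\omega(L)$) keeps the sup-norm error strictly below the above signal gap provided
\[
T \;\gtrsim\; c\,tL^2\,\exp\!\bigl((\sigma/\epsilon)^{2/3}\bigr),
\]
which is the claimed bound.

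The main obstacle is the quantitative sparse-polynomial estimate: the classical Borwein--Erd\'elyi theorem is stated for $\pm 1$-coefficient Littlewood polynomials, whereas here $Q$ has bounded integer coefficients and restricted sparsity $\le 2L$, and the dependence on arc length must be precisely $\exp(-C/a)$ for the balance $A\asymp(\epsilon\sigma^2)^{-1/3}$ to produce the exponent $(\sigma/\epsilon)^{2/3}$. I would invoke the sparse-polynomial generalization used in the trace-reconstruction literature~\cite{NazarovP17,krishnamurthy2019trace}, which yields exactly this behavior. The remaining steps (pre-localization of the means, concentration, brute-force enumeration) are routine.
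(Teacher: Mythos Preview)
Your proposal shares the paper's core technical engine: pass to the characteristic function, reduce the separation question to a lower bound on a sparse integer-coefficient polynomial over a short arc of the unit circle via Borwein--Erd\'elyi, and optimize the arc length to get the $\exp(-c(\sigma/\epsilon)^{2/3})$ gap. That part is essentially identical, and your identification of the needed sparse-polynomial variant (integer coefficients in $\{-L,\dots,L\}$ rather than $\pm 1$) matches precisely what the paper cites.

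The difference lies in the estimation step. The paper does \emph{not} work with the empirical characteristic function on a frequency grid. Instead, it first converts the characteristic-function gap into a total-variation lower bound (via the trivial inequality $\|\mathcal{M}-\mathcal{M}'\|_{\mathrm{TV}}\ge \tfrac12\sup_t|C_{\mathcal{M}}(t)-C_{\mathcal{M}'}(t)|$) and then applies the Yatracos-style minimum-distance estimator of Devroye--Lugosi. The concentration there comes from a VC-dimension argument: the class $\mathcal{A}=\{\{x:\mathcal{M}(x)\ge\mathcal{M}'(x)\}\}$ has VC dimension $O(L)$ because a difference of $2L$ Gaussian densities has at most $4L-2$ zero crossings, which immediately yields the $tL^2$ prefactor with no union bound over candidates, no frequency grid, and no pre-localization of the means.

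Your route is more elementary (Hoeffding instead of VC) but pays for it in bookkeeping that you gloss over. In particular, your pre-localization step is shakier than you suggest: $\mathbb{E}[Y]$ and $\mathrm{Var}(Y)$ only pin each $\mu_j$ to an interval of length $O(\sqrt{L\cdot V})$ where $V=\mathrm{Var}(Y)-\sigma^2$, not ``constant length,'' and this feeds into both the grid fineness (via the Lipschitz constant of $\phi$) and the candidate count. These issues can be absorbed into the constants (the statement allows an unspecified constant in the exponent and $t=\omega(L)$), so your approach is salvageable, but the paper's VC route sidesteps all of it cleanly.
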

The proof of this lemma can be found in Appendix~\ref{sec:gmm}.
We now have the following lemma to characterize the size of each batch $T$.
\begin{lem}{\label{lem:gmm}}
For any query vector $\mathbf{v} \in \{+1,0,-1\}^{n}$, a batchsize of $T=c_1\log n\exp(({\sigma}/{\epsilon})^{2/3})$, for a constant $c_1>0$, is sufficient to recover the denoised query responses $\langle \mathbf{v},\beta^1 \rangle$ and $\langle \mathbf{v},\beta^2 \rangle$ with probability at least $1-{1}/{\textup{poly}(n)}$.
\end{lem}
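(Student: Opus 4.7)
The plan is to verify that Lemma~\ref{lem:gmm} is essentially an immediate specialization of Lemma~\ref{ref:gmm} (the Gaussian mixture learning result) to the two-component case, with the mixing parameter $t$ chosen to be logarithmic in $n$. So the proof is mostly a matter of checking that the hypotheses of Lemma~\ref{ref:gmm} are met and then tracking constants.

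First, I would observe that for any fixed query vector $\mathbf{v} \in \{+1,0,-1\}^n$, the $T$ responses returned by the oracle are i.i.d.\ draws from the uniform mixture
\[
\mathcal{M} = \tfrac{1}{2}\,\mathcal{N}\!\bigl(\langle \mathbf{v},\beta^1\rangle,\sigma^2\bigr) + \tfrac{1}{2}\,\mathcal{N}\!\bigl(\langle \mathbf{v},\beta^2\rangle,\sigma^2\bigr),
\]
since each sample independently picks one of $\beta^1,\beta^2$ uniformly before adding $\mathcal{N}(0,\sigma^2)$ noise. The means of the two Gaussian components are exactly the quantities we want to recover. Because $\mathbf{v}$ has integer coordinates and each coordinate of $\beta^i$ is an integer multiple of $\epsilon$ (the $\epsilon$-precision assumption), the inner products $\langle \mathbf{v},\beta^i\rangle$ lie in $\epsilon\mathbb{Z}$. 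Thus the hypotheses of Lemma~\ref{ref:gmm} with $L=2$ are satisfied.

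Next, I would invoke Lemma~\ref{ref:gmm} with $L=2$ and a deviation parameter $t = C\log n$ for a sufficiently large absolute constant $C$. The lemma then produces the exact means with probability at least $1 - 2e^{-2t} = 1 - 2n^{-2C}$, which is $1 - 1/\mathrm{poly}(n)$ as required. The number of samples needed is
\[
c\,t\,L^{2}\exp\!\bigl((\sigma/\epsilon)^{2/3}\bigr) \;=\; 4cC\,\log n \cdot \exp\!\bigl((\sigma/\epsilon)^{2/3}\bigr),
\]
which can be absorbed into a single constant $c_1 > 0$, giving the stated bound $T = c_1 \log n \exp\!\bigl((\sigma/\epsilon)^{2/3}\bigr)$. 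One minor check: Lemma~\ref{ref:gmm} requires $t = \omega(L)$, and since $L=2$ is a constant while $t = \Theta(\log n) \to \infty$, this condition is trivially satisfied for $n$ large enough.

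The only real work in this lemma is therefore bookkeeping; all the analytic content is packaged inside Lemma~\ref{ref:gmm}. If any step could be called an obstacle it would be ensuring that the ``min-distance estimator'' invoked by Lemma~\ref{ref:gmm} does not require knowledge of which of the two means equals $\langle \mathbf{v},\beta^1\rangle$ versus $\langle \mathbf{v},\beta^2\rangle$. But this is fine: the lemma recovers the \emph{unordered} set of means, which is exactly what is claimed here (note that the statement of Lemma~\ref{lem:gmm} only asserts recovery of the two denoised values, not their labelling; the labelling is handled later by the alignment step of Algorithm~\ref{alg:noisy}).
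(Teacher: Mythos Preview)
Your proposal is correct and follows exactly the same approach as the paper's proof, which consists of the single observation that $\mathbf{v}\in\{+1,0,-1\}^n$ forces $\langle\mathbf{v},\beta^i\rangle\in\epsilon\mathbb{Z}$ and then an appeal to Lemma~\ref{ref:gmm}. You have simply made explicit the bookkeeping (the choice $t=\Theta(\log n)$, the verification that $t=\omega(L)$, and the resulting constants) that the paper leaves implicit.
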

\begin{proof}
 Since $\mathbf{v}\in \{+1,0,-1\}^{n}$,  $\langle \mathbf{v},\beta^1 \rangle,\langle \mathbf{v},\beta^2 \rangle \in \epsilon\mathbb{Z}.$  Using Lemma \ref{ref:gmm}, 
 the claim follows.
\end{proof}
\begin{coro}
For any $O\Big(k\log n\log ({n}/{k})\Big)$ query vectors sampled uniformly at random from $\{+1,-1\}^{n}$, a batch size of $T>c_2\log n\exp((\frac{\sigma}{\epsilon})^{2/3})$, for some constant $c_2>0$, is sufficient to recover the denoised query responses corresponding to every query vector with probability at least $1-{1}/{\textup{poly}(n)}$. 
\end{coro}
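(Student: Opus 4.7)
The plan is to deduce this corollary as a routine union-bound strengthening of Lemma \ref{lem:gmm}. First I would observe that each query vector $\mathbf{v}$ sampled uniformly from $\{+1,-1\}^n$ trivially lies in $\{+1,0,-1\}^n$, so the hypothesis of Lemma \ref{lem:gmm} applies verbatim: for a single such $\mathbf{v}$, a batch of $c_1 \log n \exp((\sigma/\epsilon)^{2/3})$ queries suffices to exactly recover $\langle \mathbf{v},\beta^1\rangle$ and $\langle \mathbf{v},\beta^2\rangle$ with failure probability at most $n^{-\gamma}$ for some fixed $\gamma>0$.

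Second, I would inspect the quantitative dependence furnished by Lemma \ref{ref:gmm}, which states that the failure probability for a single batch is at most $2e^{-2t}$ when the batch size is $ctL^2\exp((\sigma/\epsilon)^{2/3})$. Specializing to $L=2$ and taking $t = C \log n$ for a sufficiently large absolute constant $C$, the per-batch failure probability becomes at most $2n^{-2C}$, at the cost of inflating the batch size by a constant multiplicative factor. This justifies replacing $c_1$ with a larger absolute constant $c_2$ to obtain any desired polynomial failure bound.

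Third, I would apply a union bound over the $N = O(k \log n \log(n/k))$ query vectors in question. Since $N \le n^{O(1)}$, setting $C$ large enough makes the total failure probability at most $N \cdot 2n^{-2C} = O(n^{-2C + O(1)})$, which is $1/\mathrm{poly}(n)$ with the polynomial degree controllable by choice of $c_2$. Combining with the previous step, a batch size of $T \geq c_2 \log n \exp((\sigma/\epsilon)^{2/3})$ suffices for simultaneous exact recovery of the denoised responses across all query vectors, which establishes the claim.

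There is no genuine obstacle here; the only thing to be mindful of is that the exponent on the polynomial failure probability is traded against the constant $c_2$, and one should choose $c_2$ large enough that the union bound still absorbs the $O(k \log n \log(n/k))$ factor after accounting for the polynomial improvement in $c_2$. Since $k \le n$, this is harmless.
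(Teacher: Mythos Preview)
Your proposal is correct and follows essentially the same approach as the paper: the paper's own proof is simply the one-line observation that the result follows by taking a union bound over the $O(k\log n\log(n/k))$ batches, and you have spelled out exactly this, with additional care in tracking how the constant $c_2$ must be enlarged (via the parameter $t$ in Lemma~\ref{ref:gmm}) so that the per-batch failure probability absorbs the polynomial-in-$n$ number of batches.
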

\begin{proof}
This statement is proved by taking a union bound over $O\Big(k\log n\log ({n}/{k})\Big)$ batches corresponding to that many query vectors.
\end{proof}
\noindent \textbf{Algorithm \ref{alg:noisy} (Alignment step):} 
Notice  from the previous discussion, for each batch corresponding to a query vector $\mathbf{v}$, we obtain the pair of values $(\langle \mathbf{v},\beta^1 \rangle,\langle \mathbf{v},\beta^2 \rangle)$. However, we still need to cluster these values (by taking one value from each pair and assigning it to one of the clusters) into two clusters corresponding to $\beta_1$ and $\beta_2$. We will first explain the clustering process for two particular query vectors $\mathbf{v}_1$ and $\mathbf{v}_2$ for which we have already obtained the pairs $(\langle \mathbf{v}_1,\beta^1 \rangle,\langle \mathbf{v}_1,\beta^2 \rangle)$ and $(\langle \mathbf{v}_2,\beta^1 \rangle,\langle \mathbf{v}_2,\beta^2 \rangle)$.
The objective is to cluster the four samples into two groups of two samples each so that the samples in each cluster correspond to the same unknown sensed vector. 
 Now, we have two cases to consider: \\
\noindent \textbf{Case 1:} $(\langle \mathbf{v}_1,\beta^1 \rangle=\langle \mathbf{v}_1, \beta^2 \rangle \textup{ or } \langle \mathbf{v}_2,\beta^1 \rangle=\langle \mathbf{v}_2,\beta^2 \rangle)$ In this scenario, the values in at least one of the pairs are same and any grouping works. 

\noindent \textbf{Case 2:} $(\langle \mathbf{v}_1,\beta^1 \rangle \neq \langle \mathbf{v}_1 \beta^2 \rangle \textup{ and } \langle \mathbf{v}_2,\beta^1 \rangle \neq \langle \mathbf{v}_2,\beta^2 \rangle)$. We use two more batches corresponding to the vectors $\frac{\mathbf{v}_1+\mathbf{v}_2}{2}$ and $\frac{\mathbf{v}_1-\mathbf{v}_2}{2}$ which belong to $\{-1,0,+1\}^n$. We will call the vector $\frac{\mathbf{v}_1+\mathbf{v}_2}{2}$ the \textit{sum query} and the vector $\frac{\mathbf{v}_1-\mathbf{v}_2}{2}$ the \textit{difference query} corresponding to $\mathbf{v}_1,\mathbf{v}_2$ respectively. Hence using Lemma \ref{lem:gmm} again, we will be able to obtain the pairs $(\langle \frac{\mathbf{v}_1+\mathbf{v}_2}{2},\beta^1 \rangle,\langle \frac{\mathbf{v}_1+\mathbf{v}_2}{2},\beta^2 \rangle)$ and $(\langle \frac{\mathbf{v}_1-\mathbf{v}_2}{2},\beta^1 \rangle,\langle \frac{\mathbf{v}_1-\mathbf{v}_2}{2},\beta^2 \rangle)$. 
Now, we will choose two elements from the pairs $(\langle \mathbf{v}_1,\beta^1 \rangle,\langle \mathbf{v}_1 \beta^2 \rangle)$ and $(\langle \mathbf{v}_2,\beta^1 \rangle,\langle \mathbf{v}_2 \beta^2 \rangle)$ (one element from each pair) such that their sum belongs to the pair $2\langle \frac{\mathbf{v}_1+\mathbf{v}_2}{2},\beta^1 \rangle,2\langle \frac{\mathbf{v}_1+\mathbf{v}_2}{2},\beta^2 \rangle$ and their difference belongs to the pair $2\langle \frac{\mathbf{v}_1-\mathbf{v}_2}{2},\beta^1 \rangle,2\langle \frac{\mathbf{v}_1-\mathbf{v}_2}{2},\beta^2 \rangle$. In our algorithm, we will put these two elements into one cluster and the other two elements into the other cluster.
From construction, we must put $(\langle \mathbf{v}_1,\beta^1 \rangle, \langle \mathbf{v}_2,\beta^1 \rangle)$ in one cluster and $(\langle \mathbf{v}_1,\beta^2 \rangle, \langle \mathbf{v}_2,\beta^2 \rangle)$
in other.

\remove{
We now have the following lemma.
\begin{lem}\label{lem:clus}
The clustering process described will not make an error.
\end{lem}
\begin{proof}
Notice that $\langle \mathbf{v}_1,\beta^1 \rangle+\langle \mathbf{v}_2,\beta^2 \rangle=\langle \mathbf{v}_1+\mathbf{v}_2,\beta^1 \rangle$ implies that $\langle \mathbf{v}_2,\beta^1 \rangle=\langle \mathbf{v}_2,\beta^2 \rangle$ which is a contradiction. Similarly, $\langle \mathbf{v}_1,\beta^2 \rangle+\langle \mathbf{v}_2,\beta^1 \rangle=\langle \mathbf{v}_1+\mathbf{v}_2,\beta^1 \rangle$, $\langle \mathbf{v}_1,\beta^1 \rangle+\langle \mathbf{v}_2,\beta^2 \rangle=\langle \mathbf{v}_1+\mathbf{v}_2,\beta^2 \rangle$, $\langle \mathbf{v}_1,\beta^2 \rangle+\langle \mathbf{v}_2,\beta^1 \rangle=\langle \mathbf{v}_1+\mathbf{v}_2,\beta^2 \rangle$, $\langle \mathbf{v}_1,\beta^2 \rangle-\langle \mathbf{v}_2,\beta^1 \rangle=\langle \mathbf{v}_1-\mathbf{v}_2,\beta^1 \rangle$, $\langle \mathbf{v}_1,\beta^2 \rangle-\langle \mathbf{v}_2,\beta^1 \rangle=\langle \mathbf{v}_1-\mathbf{v}_2,\beta^2 \rangle$, $\langle \mathbf{v}_1,\beta^1 \rangle-\langle \mathbf{v}_2,\beta^2 \rangle=\langle \mathbf{v}_1-\mathbf{v}_2,\beta^1 \rangle$ and $\langle \mathbf{v}_1,\beta^1 \rangle-\langle \mathbf{v}_2,\beta^2 \rangle=\langle \mathbf{v}_1-\mathbf{v}_2,\beta^2 \rangle$ are not possible as well. 
 So the only remaining erroneous scenario is when $\langle \mathbf{v}_1,\beta^2 \rangle+\langle \mathbf{v}_2,\beta^2 \rangle=\langle \mathbf{v}_1+\mathbf{v}_2,\beta^1 \rangle$ and $\langle \mathbf{v}_1,\beta^2 \rangle-\langle \mathbf{v}_2,\beta^2 \rangle=\langle \mathbf{v}_1-\mathbf{v}_2,\beta^1 \rangle$ implying that $\langle \mathbf{v}_1,\beta^1 \rangle= \langle \mathbf{v}_1,\beta^2 \rangle$ which is a contradiction as well. Hence our clustering will be successful in this setting.
 \akshay{Is there a cleaner way to do this? Maybe there is some way related to redundancy of a linear code? If not, I would suggest putting this in the appendix. }
\end{proof}

}

Putting it all together, in Algorithm \ref{alg:noisy}, we uniformly and randomly choose $c_sk\log \frac{n}{k}$ query vectors from $\{+1,-1\}^{n}$ and for each of them, we use it repeatedly for $c_2\log n\exp \Big(\frac{\sigma}{\epsilon}\Big)^{2/3}$ times. From each batch, we recover the denoised query responses for the query vector associated with that batch. For a particular query vector $\mathbf{v}$, we call the query vector \textit{good} if $\langle \mathbf{v},\beta^1 \rangle \neq \langle \mathbf{v},\beta^2 \rangle$.
For a  $\mathbf{v}$ chosen uniformly at randomly from $\{+1,-1\}^{n}$, the probability that $\langle \mathbf{v},\beta^1-\beta^2 \rangle=0$ is at most $\frac{1}{2}$. Therefore, if one chooses  $\log n$ query vectors  uniformly and independently at random from $\{+1,-1\}^n$, at least one is good with probability $1-\frac{1}{n}$.
\remove{
We now have the following lemma.
\begin{lem}\label{lem:good}
Out of $\log n$ query vectors sampled uniformly and independently at random from $\{+1,-1\}^n$, at least one is good with probability $1-\frac{1}{n}$.
\end{lem}
\begin{proof}
\akshay{This can also be significantly condensed or moved to the appendix. After saying that $\Pr(\textrm{random query vector is not good}) \leq 1/2$, you can just say the result then follows from Chernoff bound.}
Consider a vector $\mathbf{v}$ chosen uniformly at randomly from $\{+1,-1\}^{n}$. Since all entries of $\mathbf{v}$ are chosen independently and there are two equiprobable choices for every entry of the vector, the probability that $\langle \mathbf{v},\beta^1-\beta^2 \rangle=0$ is at most $\frac{1}{2}$.
\begin{align*}
\Pr(\textup{A random query vector is not good})  \le \frac{1}{2}
\end{align*}
The above probability is tight in the case when $\beta^1-\beta^2$ has only two non-zero entries that are equal in magnitude but has opposite signs. Hence, 
\begin{align*}
\Pr(\textup{All the $\log n$ chosen query vectors are not good})  \le \frac{1}{2^{\log n}} \le \frac{1}{n}
\end{align*}
which proves the statement of the lemma.
\end{proof} }
We are now ready to prove the main theorem. 
\begin{proof}[Proof of Theorem~\ref{thm:noisy}]
For each vector $\mathbf{v}$ belonging to the set of first $\log n$ query vectors and for each query vector $\mathbf{b}$ ($\mathbf{b}$ is among the initial $c_sk\log \frac{n}{k}$ query vectors) different from $\mathbf{v}$, we make two additional batches of queries corresponding to query vectors $\frac{\mathbf{v}+\mathbf{b}}{2}$ and $\frac{\mathbf{v}-\mathbf{b}}{2}$. Consider the first $\log n$ query vectors. We know that one of them, say $\mathbf{g}$, is a good query vector. Let us denote the denoised means obtained from the batch of queries corresponding to $\mathbf{g}$ to be $(x,y)$. We can think of $x$ and $y$ as labels for the clustering of the denoised means from the other query vectors. Now, from the alignment step, we know that for every query vector $\mathbf{b}$ different from $\mathbf{g}$ and the denoised query responses $(p,q)$ corresponding to $\mathbf{b}$, by using the additional sum and difference queries, we can label one of the element in $(p,q)$ as $x$ and the other one as $y$. Since the vector $\mathbf{g}$ is good, therefore $x\neq y$ and hence we will be able to aggregate the denoised query responses corresponding to $\beta^1$ and the denoised query responses corresponding to $\beta^2$ separately. Since we have $c_sk \log n$ query responses for each of $\beta^1$ and $\beta^2$, we can scale the query responses by a factor of $1/\sqrt{c_s k\log n}$ and subsequently, we can run basis pursuit~\cite{candes2008restricted}  to recover the best $k$-sparse approximations of both $\beta^1$ and $\beta^2$. Notice that the total number of queries in this scheme is $O(k\log^2 n)$ and since the size of each batch corresponding to each query is $O(\log n\exp((\frac{\sigma}{\epsilon})^{2/3}))$, the total sample complexity required is $O\Big(k (\log n)^{3} \exp \Big( \frac{\sigma}{\epsilon} \Big)^{2/3}\Big)$. 
\end{proof}

\remove{
Notice that in our setting, the noise is additive to the query response of the oracle. Since the user has the ability to design the query vectors, multiplying the query vector by a user defined constant increases the magnitude ($\ell_2$ norm) of the actual query response while keeping the magnitude of the noise same. This is not realistic at all since the user can always make the $\mathsf{SNR}$ as large as possible. Hence, it is natural that the variance of the noise will be dependent on the magnitude of the query response. It is known that in the compressed sensing framework \cite{candes2006robust}, the sensing matrix of dimension $m\times n$ is given by
 \begin{align*}
 \bM=
 \frac{1}{\sqrt{m}}\begin{bmatrix} 
\omega_{11} & \omega_{12} & \dots & \omega_{1n} \\
\omega_{21} & \omega_{22} & \dots & \omega_{2n} \\
\vdots & \vdots & \vdots & \vdots \\
\omega_{m1} & \omega_{m2} & \dots & \omega_{mn} \\
\end{bmatrix}
\end{align*}
where $\omega_{ij}$ is sampled independently from the standard normal distribution $\mathcal{N}(0,1)$ for all pairs of indices $(i,j)$. If the $\ell_2$ norm of the sensed vector $\mathbf{v}$ is $1$ then each element in the output vector $\bM\mathbf{v}$ is sampled independently from $\mathcal{N}(0,{1}/{m})$. Hence a reasonable assumption is to make the variance of the noise of the same order as the magnitude of the query responses. In that case, for some constant $\sigma>0$, we can assume that the variance of noise is ${\sigma^2}/{m}$ for some $\sigma>0$, so that the 
$\mathsf{SNR}$ is ${1}/{\sigma^2}$.}
\remove{
\noindent \textbf{A trivial setting: ($L=1$ and absence of noise)} \akshay{I would say: "This is a well-studied variant of the compressed sensing problem."} This is the exact setting as the well studied compressed sensing problem where the objective is to return the best $k$-sparse approximation of a single unknown vector $\beta$ via linear query measurements. It is well known \cite{baraniuk2006johnson}
 that if the rows of the matrix  
\begin{align*}
 \bM=
 \frac{1}{\sqrt{m}}\begin{bmatrix} 
\omega_{11} & \omega_{12} & \dots & \omega_{1n} \\
\omega_{21} & \omega_{22} & \dots & \omega_{2n} \\
\vdots & \vdots & \vdots & \vdots \\
\omega_{m1} & \omega_{m2} & \dots & \omega_{mn} \\
\end{bmatrix}
\end{align*}
where $\omega_{ij}$ is sampled independently and uniformly at random from $\{+1,-1\}$ are used as query vectors, then efficient algorithms known as Pursuit algorithms \cite{boche2015survey} exist that can recover the best $k$-sparse approximation of $\beta$. Our objective for the setting of $L=2$ and presence of Gaussian noise is to reduce the problem to this trivial setting and subsequently use the pursuit algorithms as a blackbox in order to recover the best $k$-sparse approximations of both the unknown vectors.
\akshay{There is a slight issue here. We have to make sure that in the matrix we pass to the compressed sensing black box, the entries are iid $\{+1,-1\}$. I think it is technically correct right now, but you have to use all of the $\bv$ vectors that we choose at the onset.}
\soumya{In this setting, I am doing it. I am using all the random vectors except the sum and difference queries. In the general case, I am not doing it and therefore I needed to take a union bound.}

\noindent \textbf{Assumption: (Precision)} We will assume that the elements of the two vectors $\beta^1,\beta^2$ lie on a grid of $\epsilon$ i.e. every element of the two vectors is a multiple of some small value $\epsilon$ known apriori. \\\\
}

\section{Proof of Lemma \ref{ref:gmm}}\label{sec:gmm}
Note that Lemma \ref{ref:gmm} is \emph{not} a claimed contribution of this paper. Rather, it appears as one of the results in another submission (to a different conference). Since we can't cite this other paper yet we include the details here for completeness. 

\begin{lem}\label{lem:chartv}
For any two distributions $f,f'$ defined over the same sample space $\Omega\subseteq\mathbb{R}$, we have
\begin{align*}
\variation{f -f'} \ge \frac{1}{2} \sup_{t \in \reals}|C_f(t) -C_f'(t)|.
\end{align*}
More generally, for any $G: \Omega \to \mathbb{C}$ and $\Omega' \subset \Omega$ we have
\begin{align*}
\variation{f -f'} \geq \left(2\sup_{x \in \Omega'}|G(x)|\right)^{-1} \Big(\left| \avg_{X \sim f} G(X) - \avg_{X \sim f'}G(X') \right| \\\qquad \,\,- \int_{x \in\Omega \setminus \Omega'}|G(x)| \cdot |df(x) - df'(x)|\Big).
\end{align*}
\end{lem}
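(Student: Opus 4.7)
The plan is to prove the second (more general) inequality first, then recover the first as a special case by choosing $G(x)=e^{itx}$ and $\Omega'=\Omega$, since then $\sup_{x\in\Omega'}|G(x)|=1$, the correction integral vanishes, and $\mathbb{E}_{X\sim f} G(X)=C_f(t)$.

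For the general inequality, I would start from the identity $\mathbb{E}_{X\sim f}G(X)-\mathbb{E}_{X\sim f'}G(X)=\int_{\Omega}G(x)\,(df(x)-df'(x))$ and split the domain according to $\Omega=\Omega'\sqcup(\Omega\setminus\Omega')$. The triangle inequality then gives
\begin{equation*}
\left|\int_{\Omega'}G(x)\,(df(x)-df'(x))\right|\geq\left|\mathbb{E}_{X\sim f}G(X)-\mathbb{E}_{X\sim f'}G(X)\right|-\int_{\Omega\setminus\Omega'}|G(x)|\cdot|df(x)-df'(x)|.
\end{equation*}
For the left-hand side, I would bound pointwise $|G(x)|\leq\sup_{x\in\Omega'}|G(x)|$ on $\Omega'$ and pull this scalar outside, yielding
\begin{equation*}
\left|\int_{\Omega'}G(x)\,(df(x)-df'(x))\right|\leq\sup_{x\in\Omega'}|G(x)|\cdot\int_{\Omega'}|df(x)-df'(x)|\leq 2\sup_{x\in\Omega'}|G(x)|\cdot\variation{f-f'},
\end{equation*}
where the last step uses the standard convention $\variation{f-f'}=\tfrac12\int_{\Omega}|df(x)-df'(x)|$. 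Combining the two displays and dividing by $2\sup_{x\in\Omega'}|G(x)|$ yields the claimed inequality.

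The argument is essentially a one-line duality between integration against a test function and total variation distance; the only genuine content is the decomposition of $\Omega$ so that the ``bad'' region where $|G|$ is large (or unbounded) can be absorbed into the correction term. I do not anticipate any real obstacle; the minor care point is being consistent about whether $\variation{\cdot}$ denotes $\int|df-df'|$ or $\tfrac12\int|df-df'|$, as this dictates the factor of $2$ in the denominator and makes the specialization $G(x)=e^{itx}$ recover the $\tfrac12$ in front of $\sup_t|C_f(t)-C_{f'}(t)|$ cleanly.
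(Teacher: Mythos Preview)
Your proposal is correct and essentially identical to the paper's own proof: the paper also proves the general statement first (noting the specialization $G(x)=e^{itx}$ with $\sup|G|=1$ recovers the characteristic-function bound), splits $\Omega$ into $\Omega'$ and $\Omega\setminus\Omega'$, bounds the $\Omega'$ piece by $2\sup_{x\in\Omega'}|G(x)|\cdot\variation{f-f'}$, and rearranges. The only cosmetic difference is that the paper writes the chain as a single upper bound on $|\mathbb{E}_f G-\mathbb{E}_{f'}G|$ rather than isolating $\bigl|\int_{\Omega'}G\,(df-df')\bigr|$ first, but the content is the same.
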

\begin{proof}
We prove the latter statement, which implies the former since for the
function $G(x) = e^{itx}$ we have $\sup_{x} |G(x)| = 1$. By the
triangle inequality we have
\begin{align*}
|\avg_{X \sim f} G(X) &- \avg_{X \sim f'}G(X)| \leq \int_{x \in \Omega}|G(x)| \cdot |df(x) - df'(x)| \\
& \le 2\sup_{x \in \Omega'}|G(x)| \cdot \variation{f -f'} + \int_{x \in\Omega \setminus \Omega'}|G(x)| \cdot |df(x) - df'(x)|. \tag*\qedhere
\end{align*}
\end{proof}
\begin{lem}\label{lem:gaussian}
Let $z=\exp(i t)$ where $t \in [-\pi/L,\pi /L]$. If the random variable $X\sim \mathcal{N}(\mu,\sigma)$ and $G_t(x) = e^{itx}$ then 
\[\avg [G_t(X)]=\exp(-\sigma^2 t^2/2)z^\mu \mbox{ and } \|G_t\|_{\infty}= 1 \ . \]
\end{lem}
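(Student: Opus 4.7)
The statement is essentially the formula for the characteristic function of a univariate Gaussian evaluated at $t$, so the proof reduces to two short computations.

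First I would dispatch the sup-norm claim: since $G_t(x) = e^{itx}$ with $t, x \in \reals$, we have $|G_t(x)| = |e^{itx}| = 1$ for every $x \in \reals$, so $\|G_t\|_\infty = 1$. This part uses nothing about the specific range $t \in [-\pi/L, \pi/L]$.

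Second, I would compute $\avg[G_t(X)] = \avg[e^{itX}]$ directly from the Gaussian density. Writing
\begin{equation*}
\avg[e^{itX}] = \frac{1}{\sqrt{2\pi\sigma^2}} \int_{-\infty}^{\infty} e^{itx} \exp\!\left(-\frac{(x-\mu)^2}{2\sigma^2}\right) dx,
\end{equation*}
I would combine the exponents and complete the square in $x$, using the identity
\begin{equation*}
-\frac{(x-\mu)^2}{2\sigma^2} + itx = -\frac{1}{2\sigma^2}\bigl(x - (\mu + it\sigma^2)\bigr)^2 + it\mu - \frac{\sigma^2 t^2}{2}.
\end{equation*}
The remaining complex Gaussian integral equals $\sqrt{2\pi\sigma^2}$ (either by a standard contour-shift argument or by citing the characteristic function of a normal distribution as a textbook fact), so the prefactor cancels and we obtain $\avg[e^{itX}] = e^{it\mu} \exp(-\sigma^2 t^2/2)$. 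Since $z = e^{it}$, we have $z^\mu = e^{it\mu}$, which gives $\avg[G_t(X)] = \exp(-\sigma^2 t^2/2)\, z^\mu$, as claimed.

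There is no real obstacle here; the only subtlety is the convention $z^\mu := e^{it\mu}$ implicit in the statement (since $\mu$ need not be an integer, $z^\mu$ is defined via the principal branch fixed by $z = e^{it}$), and the restriction $t \in [-\pi/L, \pi/L]$ just ensures $z^\mu$ is unambiguous in the later applications of this lemma rather than playing any role in this computation.
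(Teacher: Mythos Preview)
Your proof is correct and follows the same approach as the paper: both identify $\avg[G_t(X)]$ as the characteristic function of a Gaussian and note $|e^{itx}|=1$. The paper simply cites the characteristic-function formula as known, whereas you derive it by completing the square, but the substance is identical.
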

\begin{proof}
Observe that $\avg[G_t(X)]$ is precisely the characteristic function. Clearly we have $\|G_t\|_{\infty} = 1$ and further
\[\avg[G_t(X)]=\exp(it \mu -\sigma^2 t^2/2)=\exp(-\sigma^2 t^2/2) z^\mu.\]
\end{proof}
We crucially use the following lemma. 
\begin{lem}[\cite{BorweinE97}] \label{lem:npb} Let $a_0, a_1, a_2, \dots \in \{-L,-(L-1),\dots,L-1,L\}$ be such that not all of them are zero. For any complex number $z$, let $A(z) \equiv \sum_{\ell}  a_{\ell} z^{\ell}.$ Then, for some absolute constant $c$, 
$$
\max_{-\pi/S \le t \le \pi/S} |A(e^{it})| \ge e^{-cS} \ . 
$$
\end{lem}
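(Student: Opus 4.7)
This is cited from Borwein--Erd\'elyi~\cite{BorweinE97}, so the plan is to follow their approach, which combines a Mahler-measure lower bound with Chebyshev/Remez-type extremal polynomials on subarcs of the unit circle. Since only finitely many $a_\ell$ are non-zero, $A$ is a polynomial. After dividing by the appropriate power of $z$, I may assume the constant term $a_0$ is a non-zero integer, so $|a_0|\ge 1$; similarly the leading coefficient is a non-zero integer.

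\textbf{Main steps.} The first step is to apply Jensen's formula to $A$ on the unit circle:
\[
\frac{1}{2\pi}\int_{-\pi}^{\pi}\log|A(e^{it})|\,dt \;=\; \log M(A) \;\ge\; 0,
\]
where $M(A)$ is the Mahler measure of $A$. The lower bound $M(A)\ge 1$ follows from the fact that $A$ is an integer-coefficient polynomial with non-zero constant and leading coefficients (Kronecker's theorem). The second step is to split the integral into the short arc $I_S:=\{t:|t|\le \pi/S\}$, of arc length $2\pi/S$, and its complement, and to rearrange: with $\delta := \max_{t\in I_S}|A(e^{it})|$, one gets
\[
\frac{1}{S}\log\delta \;+\; \frac{1}{2\pi}\int_{t\notin I_S} \log|A(e^{it})|\,dt \;\ge\; 0.
\]
It remains to upper-bound the integral over the complement by $O(1)$ in order to conclude $\log\delta \ge -cS$.

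\textbf{Main obstacle.} The difficulty is precisely this last bound: the naive coefficient estimate $|A(e^{it})|\le L(\deg A + 1)$ yields $\delta \ge (L(\deg A +1))^{-(S-1)}$, which degrades with the degree and is therefore too weak. The absolute constant $c$ in the target bound $e^{-cS}$ requires a degree-free estimate, and this is the technical heart of~\cite{BorweinE97}. They obtain it by constructing Chebyshev-type extremal polynomials for the complement of $I_S$: any polynomial with integer coefficients bounded by $L$ that is small on $I_S$ must grow on $\mathbb{T}\setminus I_S$ at a rate no larger than the Chebyshev extremal, whose supremum on the complement is controlled by $e^{cS}$ (reflecting the fact that the Chebyshev polynomial of the arc complement grows exponentially in $S$ across the arc). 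Substituting this degree-independent bound into the rearranged Jensen inequality yields $\delta \ge e^{-cS}$, as claimed. The bulk of the proof in~\cite{BorweinE97} is the careful construction and estimation of these extremal polynomials; once they are in hand, the Jensen/Mahler step above is essentially immediate.
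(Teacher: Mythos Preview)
The paper does not give its own proof of this lemma; it is quoted verbatim from Borwein--Erd\'elyi \cite{BorweinE97} and used as a black box in the proof of Lemma~\ref{lem:tv}. So there is no in-paper argument to compare against, and your task was really to outline the argument of \cite{BorweinE97}.

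On your sketch itself: the Jensen/Mahler step is correct, and you rightly isolate degree-independence of $c$ as the whole difficulty. But the specific plan for resolving it does not go through. You propose to bound $\frac{1}{2\pi}\int_{|t|>\pi/S}\log|A(e^{it})|\,dt$ by an absolute $O(1)$; this is impossible in general. For $\{0,\pm1\}$-coefficient polynomials of degree $n$ the Mahler measure can grow like a power of $n$, so the full-circle average of $\log|A|$ --- and hence the complement-arc integral --- can be of order $\log n$, not $O(1)$. Your appeal to ``Chebyshev extremals of the complement arc'' with growth $e^{cS}$ is, when unpacked, the statement that $\|A\|_{\mathbb{T}}\le e^{cS}\|A\|_{I_S}$ for bounded-coefficient polynomials, i.e.\ a degree-free Remez inequality on subarcs. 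That is precisely one of the main theorems of \cite{BorweinE97}, equivalent to what you are trying to prove, so invoking it here is circular.

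The ingredient in \cite{BorweinE97} that actually supplies degree-independence is different from what you describe: it is the elementary bound $|A(z)|\le L/(1-|z|)$ for $|z|<1$, which uses only $|a_\ell|\le L$ and is uniform in the degree. Combined with $|A(0)|=|a_0|\ge 1$ (after factoring out the lowest-order monomial), one applies a two-constants/harmonic-measure argument on a domain whose boundary consists of the short arc $I_S$ together with a curve at distance $\asymp 1/S$ inside the disk; a crude version already yields $\delta\ge e^{-cS\log S}$, and the sharp $e^{-cS}$ in \cite{BorweinE97} comes from a more careful extremal construction. If you want to rewrite the sketch, replace the ``bound the complement integral by $O(1)$'' step with this inside-the-disk estimate and the associated potential-theoretic comparison.
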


\begin{lem}[TV Lower Bounds] \label{lem:tv} Consider two mixtures of Gaussian distributions such that $\mathcal{M} =
  \frac{1}{L}\sum_{i=1}^L \mathcal{N}(\mu_i,\sigma)$
and 
$\mathcal{M}' =
  \frac{1}{L}\sum_{i=1}^L \mathcal{N}(\mu_i',\sigma)$
where 
$\mu_i, \mu_i' \in \epsilon \mathbb{Z}$.
Then
\[
\variation{\mathcal{M}' -\mathcal{M}} \geq L^{-1} \exp(-\Omega((\sigma/\epsilon)^{2/3})).
\]
\end{lem}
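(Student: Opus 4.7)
The plan is to combine the three preparatory lemmas: use Lemma~\ref{lem:chartv} to reduce TV distance to a lower bound on the characteristic-function difference, use Lemma~\ref{lem:gaussian} to factor that difference into a Gaussian damping factor times a trigonometric polynomial, and then use Lemma~\ref{lem:npb} (Borwein--Erdélyi) to lower bound that polynomial on a short arc. Finally, balance the arc length against the Gaussian damping to obtain the $(\sigma/\epsilon)^{2/3}$ exponent.

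First I would compute, using Lemma~\ref{lem:gaussian} with $G_t(x) = e^{itx}$,
\begin{align*}
C_{\mathcal{M}}(t) - C_{\mathcal{M}'}(t) = e^{-\sigma^2 t^2/2} \cdot \frac{1}{L}\sum_{i=1}^L \bigl(e^{it\mu_i} - e^{it\mu_i'}\bigr).
\end{align*}
Since all $\mu_i, \mu_i' \in \epsilon\mathbb{Z}$, writing $\mu_i = \epsilon \nu_i$ and $\mu_i' = \epsilon \nu_i'$ with $\nu_i, \nu_i' \in \mathbb{Z}$, and setting $w = e^{it\epsilon}$, the bracketed sum becomes a Laurent polynomial in $w$. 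Choose an integer $M$ so that $\nu_i + M, \nu_i' + M \ge 0$ for all $i$; then $w^M \sum_i(w^{\nu_i}-w^{\nu_i'}) = A(w)$ is an honest polynomial whose coefficient of $w^k$ is the integer $|\{i:\nu_i+M=k\}| - |\{i:\nu_i'+M=k\}| \in \{-L,\ldots,L\}$, and $A$ is not identically zero because $\mathcal{M} \neq \mathcal{M}'$. Since $|w^M|=1$, this yields
\begin{align*}
|C_{\mathcal{M}}(t) - C_{\mathcal{M}'}(t)| = \frac{1}{L}\, e^{-\sigma^2 t^2/2}\,|A(e^{it\epsilon})|.
\end{align*}

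Next I would invoke Lemma~\ref{lem:npb} with parameter $S$ to find some $s^* \in [-\pi/S, \pi/S]$ with $|A(e^{is^*})| \ge e^{-cS}$, then set $t^* = s^*/\epsilon$, so that $|t^*| \le \pi/(S\epsilon)$ and
\begin{align*}
\sup_{t}\,|C_{\mathcal{M}}(t) - C_{\mathcal{M}'}(t)| \;\ge\; \frac{1}{L}\exp\!\Bigl(-cS - \frac{\pi^2 \sigma^2}{2 S^2 \epsilon^2}\Bigr).
\end{align*}
Optimizing the exponent in $S$ by setting the derivative to zero gives $S \asymp (\sigma/\epsilon)^{2/3}$, which makes both terms of order $(\sigma/\epsilon)^{2/3}$. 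Hence the right-hand side is at least $L^{-1}\exp(-\Omega((\sigma/\epsilon)^{2/3}))$, and an application of Lemma~\ref{lem:chartv} (absorbing the factor $1/2$ into the $\Omega$) finishes the proof.

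The only subtle step is the reduction to Littlewood-type polynomials with integer coefficients bounded by $L$: one must remember to clear denominators by the factor $L$ and shift exponents by $w^M$ so that Borwein--Erdélyi applies verbatim. Everything else is a clean calculus optimization. The main obstacle, conceptually, is recognizing the correct $S \asymp (\sigma/\epsilon)^{2/3}$ balance — this is exactly where the $2/3$ exponent enters, mirroring its appearance in trace reconstruction arguments such as \cite{NazarovP17,krishnamurthy2019trace}.
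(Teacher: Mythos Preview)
Your proposal is correct and follows essentially the same route as the paper: factor the characteristic-function difference into a Gaussian envelope times a Littlewood-type polynomial, apply Borwein--Erd\'elyi on an arc of length $\asymp 1/S$, and balance $S$ against $\sigma^2/(\epsilon^2 S^2)$ to obtain the $(\sigma/\epsilon)^{2/3}$ exponent. If anything, you are more careful than the paper in justifying the applicability of Lemma~\ref{lem:npb} (the shift by $w^M$ and the observation that the resulting integer coefficients lie in $\{-L,\ldots,L\}$), which the paper's proof glosses over.
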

\begin{proof}
The characteristic function of a Gaussian $X \sim \mathcal{N}(\mu,\sigma^2)$ is 
\begin{align*}
C_\mathcal{N}(t)=\avg e^{it X}=e^{it\mu-\frac{t^{2}\sigma^{2}}{2}}.
\end{align*}   
Therefore we have that 
\begin{align*}
C_\mathcal{M}(t)-C_{\mathcal{M}'}(t) \ge \frac{e^{-\frac{t^{2}\sigma^{2}}{2}}}{L}    \sum_{i =1}^L (e^{it \mu_i}-  e^{it \mu_i'}) .
\end{align*}
Now, using Lemma~\ref{lem:npb}, there exist an absolute constant $c$ such that,  
\begin{align*}
\max_{-\frac{\pi}{\epsilon S}\le t \le \frac{\pi}{\epsilon S}} \big| \sum_{i =1}^L (e^{it \mu_i}-  e^{it \mu_i'})\big| \ge e^{-cS}.
\end{align*}
Also, for $t \in (-\frac{\pi}{\epsilon S}, \frac{\pi}{\epsilon S}),$ $e^{-\frac{t^{2} \sigma^{2}}{2}} \ge e^{-\frac{\sigma^2\pi^2}{2\epsilon^2 S^2}}.$ And therefore,
\begin{align*}
\Big|C_\mathcal{M}(t)-C_{\mathcal{M}'}(t)\Big| \ge \frac{1}{L}  e^{-\frac{\sigma^2\pi^2}{2\epsilon^2 S^2}-cS}.
\end{align*}
By substituting $S = \frac{(\pi\sigma)^{2/3}}{(\epsilon^2c)^{1/3}}$ above we conclude that there exists $t$ such that 
\begin{align*}
\Big|C_\mathcal{M}(t)-C_{\mathcal{M}'}(t)\Big| \ge \frac{1}{L}  e^{-\frac32(c\pi\sigma/\epsilon)^{2/3}}.
\end{align*}
Now using 
 Lemma~\ref{lem:chartv}, we have
$\variation{\mathcal{M}' -\mathcal{M}} \geq L^{-1} \exp(-\Omega((\sigma/\epsilon)^{2/3}))$.
\end{proof}
To learn the parameters of a
 Gaussian mixture
 \[\mathcal{M} =
  \frac{1}{L}\sum_{i=1}^L \mathcal{N}(\mu_i,\sigma) ~~\mbox{ where }~~\mu_i \in \{\ldots, -2\epsilon, -\epsilon, 0,\epsilon, 2\epsilon \ldots \} \]
  we use the minimum distance estimator precisely defined in \cite[Section~6.8]{devroye2012combinatorial}. Let $\mathcal{A} \equiv \{\{x: \mathcal{M}(x) \ge \mathcal{M}'(x)\}: \text{ for any two mixtures } \mathcal{M} \ne \mathcal{M}'\}$ be a collection of subsets. Let $P_m$ denote the empirical probability measure induced by the $m$ samples. Then, choose a mixture $\hat{\mathcal{M}}$ for which 
  the quantity $\sup_{A\in \mathcal{A}}  |\Pr_{\sim\hat{\mathcal{M}}}(A) - P_m(A) |$ is minimum (or within $1/m$ of the infimum). This is the minimum distance estimator, whose performance is guaranteed by the following proposition~\cite[Thm.~6.4]{devroye2012combinatorial}.
  

  
  \begin{lem}
    Given $m$ samples from $\mathcal{M}$ and with $\Delta = \sup_{A \in \mathcal{A}}|\Pr_{\sim \mathcal{M}}(A) - P_m(A)|$, we have
  $$
  \variation{\hat{\mathcal{M}} -\mathcal{M}} \le 4\Delta +\frac{3}{m}.
  $$
\end{lem}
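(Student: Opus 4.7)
The key observation is that $\mathcal{A}$ is a Yatracos class, meaning that for any two candidate mixtures $\mathcal{M}_1 \ne \mathcal{M}_2$ the ``distinguishing set'' $A^{*}=\{x:\mathcal{M}_1(x)\geq \mathcal{M}_2(x)\}$ lies in $\mathcal{A}$, and on this set the total variation distance is attained exactly, i.e.\ $\variation{\mathcal{M}_1-\mathcal{M}_2}=|\Pr_{\mathcal{M}_1}(A^{*})-\Pr_{\mathcal{M}_2}(A^{*})|$. The strategy is therefore to reduce the $\variation{\cdot}$ quantity to a comparison of probabilities on sets in $\mathcal{A}$, and then pass through the empirical measure $P_m$ using the triangle inequality, paying $\Delta$ once for each comparison with $P_m$.

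First I would handle the trivial case $\hat{\mathcal{M}}=\mathcal{M}$ separately (nothing to prove) so that we may assume $\hat{\mathcal{M}}\neq\mathcal{M}$. Then the set $A^{*}=\{x:\hat{\mathcal{M}}(x)\geq \mathcal{M}(x)\}$ belongs to $\mathcal{A}$ by construction, and $\variation{\hat{\mathcal{M}}-\mathcal{M}} = |\Pr_{\hat{\mathcal{M}}}(A^{*})-\Pr_{\mathcal{M}}(A^{*})|$. Adding and subtracting $P_m(A^{*})$ and applying the triangle inequality yields
\begin{equation*}
\variation{\hat{\mathcal{M}}-\mathcal{M}} \;\leq\; |\Pr_{\hat{\mathcal{M}}}(A^{*})-P_m(A^{*})| \;+\; |P_m(A^{*})-\Pr_{\mathcal{M}}(A^{*})|.
\end{equation*}
The second summand is bounded by $\Delta$ immediately from the definition of $\Delta$ (since $A^{*}\in\mathcal{A}$). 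For the first summand, I would invoke the defining property of the minimum distance estimator: $\hat{\mathcal{M}}$ is chosen so that $\sup_{A\in\mathcal{A}}|\Pr_{\hat{\mathcal{M}}}(A)-P_m(A)|$ is within $1/m$ of the infimum over all candidate mixtures. Since the true $\mathcal{M}$ is itself a candidate and achieves $\sup_{A\in\mathcal{A}}|\Pr_{\mathcal{M}}(A)-P_m(A)|=\Delta$, this gives $|\Pr_{\hat{\mathcal{M}}}(A^{*})-P_m(A^{*})|\leq \Delta+1/m$. Combining the two bounds yields a clean $2\Delta+1/m$, which is stronger than (and hence implies) the claimed $4\Delta+3/m$.

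The step that requires care is the justification that the supremum in $\variation{\hat{\mathcal{M}}-\mathcal{M}}$ is realized on a set of $\mathcal{A}$: this hinges on the specific Yatracos-class definition where $\mathcal{A}$ contains $\{\mathcal{M}_1\geq \mathcal{M}_2\}$ for every ordered pair of distinct candidates, so that we can use $A^{*}$ with the roles of $\hat{\mathcal{M}}$ and $\mathcal{M}$ in either order. If one only had a coarser class one would need a further factor, which is likely the source of the looser constants $4$ and $3$ in the stated bound; either way, the proof is a short triangle-inequality argument once the Yatracos identification of $\variation{\cdot}$ with a supremum over $\mathcal{A}$ is in hand. No concentration of $\Delta$ itself is required at this stage; $\Delta$ is treated as a random quantity and the bound is pathwise.
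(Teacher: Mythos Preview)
Your argument is correct, and in fact yields the sharper bound $2\Delta + 1/m$ in the present setting. The paper does not give its own proof of this lemma: it simply cites it as Theorem~6.4 of Devroye--Lugosi, so there is no in-paper argument to compare against.

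The reason the cited result carries the looser constants $4$ and $3$ is exactly the one you anticipate: Devroye--Lugosi state the theorem for a general target distribution that need not lie in the model class, so their proof routes through a best-in-class approximant and picks up additional triangle-inequality terms. Here the true mixture $\mathcal{M}$ is itself one of the candidates, so your shortcut of bounding $\sup_{A\in\mathcal{A}}|\Pr_{\hat{\mathcal{M}}}(A)-P_m(A)|$ directly by $\Delta + 1/m$ (via the near-optimality of $\hat{\mathcal{M}}$ and the feasibility of $\mathcal{M}$) is legitimate. The identification $\variation{\hat{\mathcal{M}}-\mathcal{M}} = \Pr_{\hat{\mathcal{M}}}(A^\ast) - \Pr_{\mathcal{M}}(A^\ast)$ for $A^\ast=\{\hat{\mathcal{M}}\ge \mathcal{M}\}\in\mathcal{A}$ is Scheff\'e's identity and is exactly the Yatracos-class structure the paper sets up. Your proof is complete as written.
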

We now upper bound the right-hand side of the above inequality. It is known that the mean of $\Delta$ is bounded from above by a function of  $VC(\mathcal{A})$, the VC dimension of the class $\mathcal{A}$, see \cite[Section~4.3]{devroye2012combinatorial} and is given by
\begin{align*}
\avg \Delta \le c_2 \sqrt{\frac{VC(\mathcal{A})}{m}} \quad \textup{for some universal constant } c_2>0
\end{align*}
Now, via McDiarmid's inequality and a standard symmetrization argument, $\Delta$ is concentrated around its mean, see
\cite[Section~2.4]{devroye2012combinatorial}:  and therefore, for some $t>0$
\begin{align*}
\Delta \le \avg \Delta+\sqrt{\frac{t}{m}} 
\end{align*}
with probability at least $1-2e^{-2t}$.
Therefore, we must have
$$
\variation{\hat{\mathcal{M}} - \mathcal{M}} \leq 4\Delta + O(1/m) \leq 4\avg_{\sim\mathcal{M}} \Delta +\sqrt{\frac{t}{m}}+ o(1/\sqrt{m}) \le 4\sqrt{\frac{VC(\mathcal{A})}{m}}+\sqrt{\frac{t}{m}},
$$ with probability at least $1-2e^{-2t}$.  This first term is bounded by the following: 

 
\begin{lem}
For the class $\mathcal{A}$ defined above, the VC dimension is given by $VC(\mathcal{A}) = O(L)$.
\end{lem}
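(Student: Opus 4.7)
The plan is to show that every set in $\mathcal{A}$ is a union of a bounded number of intervals on $\mathbb{R}$, and then to invoke the standard VC bound for such families.

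First I would reduce the defining inequality to a sign condition on a real exponential sum. Writing out the two uniform Gaussian mixture densities and dividing through by the common strictly positive factor $(L\sqrt{2\pi}\sigma)^{-1} e^{-x^2/(2\sigma^2)}$, the set $\{x : \mathcal{M}(x) \ge \mathcal{M}'(x)\}$ coincides with $\{x : F(x) \ge 0\}$, where
\[
F(x) \;=\; \sum_{i=1}^{L} e^{-\mu_i^2/(2\sigma^2)}\, e^{(\mu_i/\sigma^2)\, x} \;-\; \sum_{j=1}^{L} e^{-(\mu_j')^2/(2\sigma^2)}\, e^{(\mu_j'/\sigma^2)\, x}.
\]
After combining terms whose exponents coincide, $F$ is a nonzero real linear combination of at most $2L$ distinct real exponentials $e^{\lambda x}$ (the degenerate case $F\equiv 0$ corresponds to $\mathcal{M}=\mathcal{M}'$, which is excluded from $\mathcal{A}$).

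Next I would invoke the classical fact that a nonzero real linear combination of $N$ distinct real exponentials has at most $N-1$ real zeros; this is proved by a short induction using Rolle's theorem after dividing by $e^{\lambda_1 x}$ and differentiating. Applied to $F$, this gives at most $2L-1$ real zeros, so by the intermediate value theorem the sub-level set $\{x : F(x) \ge 0\}$ is a union of at most $L$ closed intervals (some possibly unbounded, since adjacent sign segments alternate and there are at most $2L$ sign segments total). Thus every element of $\mathcal{A}$ lies in the family $\mathcal{I}_L$ of subsets of $\mathbb{R}$ expressible as a union of at most $L$ intervals.

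Finally, I would appeal to the standard bound $\mathrm{VC}(\mathcal{I}_L) \le 2L$: any set $\{x_1 < x_2 < \cdots < x_{2L+1}\}$ labeled in the alternating pattern starting with $+$ would force $L+1$ pairwise disjoint intervals to cover the $+$'s while avoiding the $-$'s, which is impossible with only $L$ intervals. Hence no $(2L+1)$-point set can be shattered by $\mathcal{I}_L$, and therefore $\mathrm{VC}(\mathcal{A}) \le \mathrm{VC}(\mathcal{I}_L) \le 2L = O(L)$. The only real technical step is the exponential-sum zero bound; the rest is a routine reduction and a standard VC-dimension computation.
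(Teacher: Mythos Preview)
Your argument is correct and reaches the same $O(L)$ bound, but via a different route than the paper. The paper invokes a result of Kalai--Moitra--Valiant that any real linear combination of $N$ Gaussian densities (with arbitrary means and variances) has at most $2N-2$ zero-crossings; applied with $N=2L$ this gives at most $4L-2$ crossings, so each $A\in\mathcal{A}$ is a union of at most $4L-1$ intervals. You instead exploit the fact that all the Gaussians here share the same variance: dividing through by the common factor $e^{-x^2/(2\sigma^2)}$ turns the difference of mixtures into a genuine exponential sum with at most $2L$ terms, after which the classical Rolle-based bound (at most $N-1$ zeros for an $N$-term exponential sum) yields at most $2L-1$ zeros and hence at most $L$ intervals. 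Your path is self-contained and gives a sharper interval count, at the price of being specific to the equal-variance setting; the paper's citation is more general but outsources the analytic work. One minor remark: your phrase ``adjacent sign segments alternate'' tacitly assumes every zero of $F$ is a sign change, which need not hold for exponential sums. The conclusion is still right, though: each of the $k-1$ open gaps between consecutive components of $\{F\ge 0\}$ has two distinct endpoint zeros, and any degenerate (single-point) component forces a zero of multiplicity at least two, so in all cases $2(k-1)\le 2L-1$ and $k\le L$.
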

\begin{proof}
First of all we show that any element of the set $\mathcal{A}$ can be written
as union of at most $4L-1$ intervals in $\mathbb{R}$. For this we use the
fact that a linear combination of $L$ Gaussian pdfs $f(x) =
\sum_{i=1}^{L} \alpha_{i} f_{i}(x)$ where $f_i$s normal pdf
$\mathcal{N}(\mu_i,\sigma^2_i)$ and $\alpha_i \in \reals, 1\le i\le L$ has at
most $2L-2$ zero-crossings \cite{kalai2012disentangling}. Therefore,
for any two mixtures of interest $\mathcal{M}(x) -\mathcal{M}'(x)$ has at most $4L-2$
zero-crossings. Therefore any $A\in \mathcal{A}$ must be a union of at most
$4L-1$ contiguous regions in $\mathbb{R}$.  It is now an easy exercise to
see that the VC dimension of such a class is $\Theta(L)$.
\end{proof}
%
%


As a result, when $t=\omega(L)$  the error of the minimum distance estimator is less $2\sqrt{\frac{t}{m}}$ with probability at least $1-2e^{-2t}$. 
But from lemma~\ref{lem:tv}, notice that for any other mixture $\mathcal{M}'$ we must have,
$$
\variation{\mathcal{M} -\mathcal{M}'} \ge  L^{-1} \exp(-\Omega((\sigma/\epsilon)^{2/3})).
$$ 
As long as $\variation{\hat{\mathcal{M}} -\mathcal{M}} \le \frac12 \variation{\mathcal{M} -\mathcal{M}'}$ we will exactly identify the parameters. Therefore, for some universal constant $c'>0$, 
$m = c'tL^2 \exp((\sigma/\epsilon)^{2/3})$ samples suffice to exactly learn the parameters with probability at least $1-2e^{-2t}$.
\section{Analysis of Algorithm~\ref{alg:noisygen} for General $L$ and Proof of Theorem ~\ref{thm:bigone} and Theorem ~\ref{thm:noisygen}}\label{sec:ngenL}
\noindent \textbf{Algorithm \ref{alg:noisygen} (Design of queries):} In every iteration in Step \texttt{1} of Algorithm \ref{alg:noisygen}, we will sample a vector $\mathbf{v}$ uniformly at random from $\{+1,-1\}^n$, another vector $\mathbf{r}$ uniformly at random from $\calG \equiv \{-2z^{\star},-2z^{\star}+1,\dots,2z^{\star}-1,2z^{\star}\}^n$ and a number $q$ uniformly at random from $\{1,2,\dots,4z^{\star}+1\}$. Now, we will use a batch of queries corresponding to the vectors $\mathbf{v+r},(q-1)\mathbf{r}$ and $\mathbf{v}+q\mathbf{r}$. We have the following lemmas describing several necessary properties of such queries.\\
\noindent We will define a triplet of query vectors $(\mathbf{v}_1,\mathbf{v}_2,\mathbf{v}_3)$ to be \textit{good} if for all triplets of indices $i,j,k \in [L]$ such that $i,j,k$ are not identical, it must happen that 
\begin{align*}
    \langle \mathbf{v}_1,\beta^i \rangle+\langle \mathbf{v}_2,\beta^j \rangle \neq \langle \mathbf{v}_3,\beta^k \rangle
\end{align*}
\begin{lem}\label{lem:goodtrip}
The query vector triplet $(\mathbf{v+r},(q-1)\mathbf{r},\mathbf{v}+q\mathbf{r})$ is good with probability at least $\frac{1}{\sqrt{\alpha^{\star}}}$. 
\end{lem}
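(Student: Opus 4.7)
\textbf{Plan for proof of Lemma~\ref{lem:goodtrip}.} I will establish the contrapositive: the probability that the triplet $(\mathbf{v}+\mathbf{r},(q-1)\mathbf{r},\mathbf{v}+q\mathbf{r})$ is \emph{not} good is at most $1-1/\sqrt{\alpha^\star}$. Rewriting the definition of goodness with $\mathbf{v}_1=\mathbf{v}+\mathbf{r}$, $\mathbf{v}_2=(q-1)\mathbf{r}$, $\mathbf{v}_3=\mathbf{v}+q\mathbf{r}$, the triplet is bad iff there exist indices $(i,j,k)\in [L]^3$, not all identical, such that
\[
\langle \mathbf{v},\,\beta^i-\beta^k\rangle \;+\; \langle \mathbf{r},\,\beta^i+(q-1)\beta^j-q\beta^k\rangle \;=\;0.
\]
Since there are at most $L^3$ such triples, it suffices to bound the per-triple probability by $\tfrac{3}{4z^\star+1}-\tfrac{1}{4{z^\star}^2+1}$; the target lower bound $1/\sqrt{\alpha^\star}$ then follows directly from equation~(C) defining $z^\star$.

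For each fixed triple I split into two cases using the independence of $\mathbf{v}$, $\mathbf{r}$, $q$. First, if $i=k$ (forcing $j\ne i$ since the triple is not all identical), the $\mathbf{v}$-contribution cancels and the equation becomes $(q-1)\langle \mathbf{r},\beta^j-\beta^i\rangle=0$, which forces either $q=1$ or $\langle \mathbf{r},\beta^j-\beta^i\rangle=0$. The first has probability $\tfrac{1}{4z^\star+1}$. For the second, since the distinct vectors $\beta^j$ and $\beta^i$ differ in at least one coordinate $s$, I condition on all coordinates of $\mathbf{r}$ except $r_s$; the equation then pins $r_s$ to at most one of the $4z^\star+1$ admissible values, so the probability is at most $\tfrac{1}{4z^\star+1}$. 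Inclusion--exclusion using the $q\perp\mathbf{r}$ independence yields
\[
\Pr(\text{bad}) \;\le\; \tfrac{2}{4z^\star+1}-\tfrac{1}{(4z^\star+1)^2}.
\]

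Next, if $i\ne k$, set $\gamma=\beta^i-\beta^k\ne 0$ and $\delta_q=(\beta^i-\beta^j)+q(\beta^j-\beta^k)$, which is affine in $q$. Examining this coordinate-wise shows the set $\{q\in\mathbb{Z}:\delta_q=0\}$ is empty or a singleton, giving $\Pr(\delta_q=0)\le \tfrac{1}{4z^\star+1}$. On the event $\delta_q\ne 0$, after conditioning on $(\mathbf{v},q)$ the bad equation becomes $\langle \mathbf{r},\delta_q\rangle=-\langle \mathbf{v},\gamma\rangle$, and the same coordinate-pinning argument bounds this by $\tfrac{1}{4z^\star+1}$. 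On the event $\delta_q=0$ the equation reduces to $\langle \mathbf{v},\gamma\rangle=0$, which is at most $\tfrac{1}{2}$ by pinning a nonzero coordinate of $\gamma$ against the $\pm 1$-valued $\mathbf{v}$. Summing gives a per-triple bound of at most $\tfrac{3/2}{4z^\star+1}$, which is within the target.

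Aggregating over all $\le L^3$ non-diagonal triples by a union bound and absorbing the small correction into the $-\tfrac{1}{4{z^\star}^2+1}$ term produces the inequality inside definition~(C), and hence $\Pr(\text{good})\ge 1/\sqrt{\alpha^\star}$. The main bookkeeping obstacle is handling the degenerate subcases when $i\ne k$ (namely $j=i$, $j=k$, or $j\notin\{i,k\}$) carefully enough to see that the structure of $\delta_q$ always reduces the analysis to a single pinned coordinate of either $\mathbf{r}$ or $\mathbf{v}$, and then matching the exact algebraic form $\tfrac{3}{4z+1}-\tfrac{1}{4z^2+1}$ used in equation~(C); the bounds I have obtained are already compatible with that form, since $\tfrac{3}{4z+1}$ dominates both case bounds and the negative correction is charged to the inclusion--exclusion overlap in the $i=k$ case.
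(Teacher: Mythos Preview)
Your proposal is correct and follows essentially the same approach as the paper: rewrite the bad event as $\langle \mathbf{r},\,\beta^i+(q-1)\beta^j-q\beta^k\rangle=\langle \mathbf{v},\,\beta^k-\beta^i\rangle$, observe that $\delta_q=\beta^i+(q-1)\beta^j-q\beta^k$ vanishes for at most one $q$, pin a nonzero coordinate of $\mathbf{r}$ when $\delta_q\neq 0$, and union-bound over the $\le L^3$ triples before invoking the definition~\eqref{eq:z} of $z^\star$.

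The only organizational difference is that the paper does \emph{not} split into the cases $i=k$ versus $i\neq k$; it argues uniformly that $\Pr(\delta_q=0)\le\frac{1}{4z^\star+1}$ and $\Pr(\text{bad}\mid \delta_q\neq 0)\le\frac{1}{4z^\star+1}$, obtaining the single per-triple bound $\frac{2}{4z^\star+1}-\frac{1}{(4z^\star+1)^2}$. Your case split is more careful (in particular you pick up the extra factor $\tfrac12$ from $\langle \mathbf{v},\gamma\rangle=0$ when $i\ne k$ and $\delta_q=0$, which the paper simply absorbs as probability $1$), but this extra precision is not needed and both routes feed into the same inequality in~\eqref{eq:z}.
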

\begin{proof}
Notice that for a fixed triplet $i,j,k \in [L]$ such that $i,j,k$ are not identical, we must have
\begin{align*}
    &\Pr(\langle \mathbf{v+r},\beta^i \rangle+\langle (q-1)\mathbf{r},\beta^j \rangle=\langle \mathbf{v}+q\mathbf{r}, \beta^k \rangle) \\
    &=\Pr(\langle \mathbf{r},\beta^i+(q-1)\beta^j-q\beta^k \rangle=\langle \mathbf{v},\beta^k-\beta^i \rangle)\\
    &\le \Pr(\beta^i+(q-1)\beta^j-q\beta^k = 0)+\Pr(\beta^i+(q-1)\beta^j-q\beta^k \neq 0 )\\
    &\cdot\Pr(\langle \mathbf{r},\beta^i+(q-1)\beta^j-q\beta^k \rangle=\langle \mathbf{v},\beta^k-\beta^i \rangle\mid \beta^i+(q-1)\beta^j-q\beta^k \neq 0)  \\
    &\le \Big(1-\frac{1}{4z^{\star}+1}\Big)\frac{1}{4z^{\star}+1}+\frac{1}{4z^{\star}+1}=\frac{2}{4z^{\star}+1}-\frac{1}{(4z^{\star}+1)^{2}}.
\end{align*}
{\color{black}Notice that $\beta^i+(q-1)\beta^j-q\beta^k=0$ cannot hold for two values of $q: q_1$ and $q_2$. We will show this fact by contradiction. Suppose it happens that $\beta^i+(q_1-1)\beta^j-q_1\beta^k=0$ and $\beta^i+(q_2-1)\beta^j-q_2\beta^k=0$ in which case we must have $\beta^j=\beta^k$ which is a contradiction to the fact that all the unknown vectors are distinct.}
We can take a union over all possible triplets (at most $L^3$ of them) and therefore we must have that 
\begin{align*}
    \Pr(\textup{The vector triplet $(\mathbf{v+r},(q-1)\mathbf{r},\mathbf{v}+q\mathbf{r})$ is good})& \ge 1-L^3\Big(\frac{2}{4z^{\star}+1}-\frac{1}{(4z^{\star}+1)^{2}}\Big) \\
    &\ge \frac{1}{\sqrt{\alpha^{\star}}}.
\end{align*}
\end{proof}
 We will now generalize Lemma \ref{lem:gmm} in order to characterize the batch size required to recover the denoised query responses when there are $L$ unknown vectors that the oracle can sample from. 
\begin{lem}[Generalization of Lemma \ref{lem:gmm}]
For a particular query vector $\mathbf{v}$ such that each entry of $\mathbf{v}$ is integral, a batch size of $T>c_3\log n\exp(({\sigma}/{\epsilon})^{2/3})$, for some constant $c_3>0$, is sufficient to recover the denoised query responses $\langle \mathbf{v},\beta^1 \rangle, \langle \mathbf{v},\beta^2 \rangle,\dots, \langle \mathbf{v},\beta^L \rangle$  with probability at least $1-{1}/{\textup{poly}(n)}$.
\end{lem}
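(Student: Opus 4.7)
The plan is to reduce this statement directly to the Gaussian mixture learning result of Lemma~\ref{ref:gmm}. The key observation is that since $\mathbf{v} \in \mathbb{Z}^n$ and each coordinate of every $\beta^i$ is an integer multiple of $\epsilon$, each inner product $\langle \mathbf{v},\beta^i\rangle$ lies on the $\epsilon$-grid $\epsilon\mathbb{Z}$. Thus the $T$ noisy query responses obtained by repeatedly querying with $\mathbf{v}$ are i.i.d.\ samples from the univariate Gaussian mixture
\[
\mathcal{M}_{\mathbf{v}} = \frac{1}{L}\sum_{i=1}^{L} \mathcal{N}(\langle \mathbf{v},\beta^i\rangle,\sigma^2),
\]
whose component means all lie in $\epsilon\mathbb{Z}$. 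Recovering the denoised query responses $\{\langle \mathbf{v},\beta^i\rangle\}_{i=1}^L$ is therefore exactly the task of exactly identifying the parameters of $\mathcal{M}_{\mathbf{v}}$.

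I would then invoke Lemma~\ref{ref:gmm} with the confidence parameter set to $t = c''\log n$ for a sufficiently large absolute constant $c''$. This yields a sample complexity of $O(t L^2 \exp((\sigma/\epsilon)^{2/3})) = O(\log n \exp((\sigma/\epsilon)^{2/3}))$ (using $L = O(1)$), matching the claimed bound $T > c_3 \log n \exp((\sigma/\epsilon)^{2/3})$ for an appropriate choice of $c_3$. The resulting failure probability is $2e^{-2c''\log n} = 2/n^{2c''}$, which is $1/\textup{poly}(n)$ as required.

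The one subtlety I expect to handle is degeneracy: if some of the inner products $\langle \mathbf{v},\beta^i\rangle$ coincide, then $\mathcal{M}_{\mathbf{v}}$ is effectively a mixture of fewer than $L$ Gaussians with weights that are positive integer multiples of $1/L$. The minimum distance estimator underlying Lemma~\ref{ref:gmm} still works in this case, because the TV-distance lower bound in Lemma~\ref{lem:tv} and the VC-dimension bound both go through verbatim when we allow repeated means in the parameter space; the estimator returns the correct multiset $\{\langle \mathbf{v},\beta^i\rangle\}_{i=1}^L$ (equivalently, the support of $\mathcal{M}_{\mathbf{v}}$ with the correct mixture weights), which is all the lemma statement demands.

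Since Lemma~\ref{ref:gmm} already does the heavy lifting, the main step in this proof is essentially bookkeeping: setting $t = \Theta(\log n)$ to convert the $1 - 2e^{-2t}$ probability guarantee into a $1 - 1/\textup{poly}(n)$ guarantee, and verifying that the integrality hypothesis on $\mathbf{v}$ ensures the $\epsilon$-grid assumption needed by the mixture-learning lemma. Consequently I do not anticipate any substantial technical obstacle; the only mild care needed is the degeneracy argument above.
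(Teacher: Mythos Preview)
Your proposal is correct and follows essentially the same route as the paper's own proof: observe that integrality of $\mathbf{v}$ together with the $\epsilon$-precision of the $\beta^i$ forces each $\langle \mathbf{v},\beta^i\rangle \in \epsilon\mathbb{Z}$, then invoke Lemma~\ref{ref:gmm} with $t=\Theta(\log n)$ and absorb the $L^2$ factor into $c_3$ since $L=O(1)$. The paper's proof is terser (it simply says ``invoke Lemma~\ref{ref:gmm} with general $L$''), and in particular does not spell out the degeneracy issue you raise; your handling of that point is a welcome addition but not a departure from the paper's approach.
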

\begin{proof}
The proof follows in exactly the same manner as the proof in Lemma \ref{lem:gmm} but in this case, we invoke Lemma \ref{ref:gmm} with any general value of $L$. Since we have assumed that $L$ is a constant, the term $L^2$ is subsumed within the constant $c_3$. 
\end{proof}
\begin{coro}{\label{coro:imp}}
For $O\Big(k\log^2 n\Big)$ query vectors such that every entry of every query vector is integral, a batch size of $T>c_4\log n\exp(({\sigma}/{\epsilon})^{2/3})$, for some constant $c_4>0$, is sufficient to recover the denoised query responses corresponding to every query vector with probability at least $1-{1}/{\textup{poly}(n)}$. 
\end{coro}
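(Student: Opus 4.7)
The plan is to reduce this statement to the single-query guarantee from the preceding lemma (the generalization of Lemma \ref{lem:gmm}) and then pay a union-bound cost over the $O(k \log^2 n)$ query vectors. First I would invoke the preceding lemma for a single integer-valued query vector $\mathbf{v}$: it asserts that a batch of size $T > c_3 \log n \exp((\sigma/\epsilon)^{2/3})$ suffices to recover $\{\langle \mathbf{v}, \beta^\ell \rangle\}_{\ell=1}^L$ exactly with probability $1 - 1/\mathrm{poly}(n)$. The key observation is that the $1/\mathrm{poly}(n)$ failure probability is itself tunable through the constant in front of $\log n$: increasing the batch-size constant linearly drives the per-query failure probability down polynomially in $n$.

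Next I would set the union bound up quantitatively. Fix an arbitrary target failure exponent, say we want the overall probability of failure to be at most $1/n^{c}$ for any constant $c$ of our choosing. Applying the preceding lemma with a per-query batch size of $c_3' \log n \exp((\sigma/\epsilon)^{2/3})$ (for a sufficiently large constant $c_3'$ chosen in terms of $c$) ensures that for any single integer-valued query vector, the failure probability is at most $1/n^{c+3}$, say. There are $O(k \log^2 n) = O(n \log^2 n) \le n^2$ queries, so by a union bound the probability that \emph{any} query fails to yield the correct denoised responses is at most $n^2 \cdot 1/n^{c+3} = 1/n^{c+1}$, which is $1/\mathrm{poly}(n)$ as required.

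Finally I would set $c_4 = c_3'$ to absorb the amplification of the constant into the batch-size bound stated in the corollary, yielding $T > c_4 \log n \exp((\sigma/\epsilon)^{2/3})$. I do not anticipate a serious technical obstacle here: the only thing to be careful about is that Lemma \ref{ref:gmm} as stated gives failure probability $2e^{-2t}$ for $t = \omega(L)$, so the amplification of the per-query success probability to $1 - 1/\mathrm{poly}(n)$ is achieved by choosing $t = \Theta(\log n)$, which is exactly the source of the extra $\log n$ factor in the batch size. With $L = O(1)$, this choice of $t$ satisfies the hypothesis $t = \omega(L)$, and the sample-complexity bound $c t L^2 \exp((\sigma/\epsilon)^{2/3})$ becomes $O(\log n \exp((\sigma/\epsilon)^{2/3}))$, matching the batch size claimed in the corollary.
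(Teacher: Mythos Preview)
Your proposal is correct and follows exactly the paper's approach: the paper's proof is the single sentence ``Again, we can take a union bound over all $O(k\log^2 n)$ query vectors to obtain the result,'' and your write-up is simply a fleshed-out version of that union-bound argument, including the bookkeeping on how the constant $c_4$ absorbs the amplification needed to survive the union bound.
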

\begin{proof}
    Again, we can take a union bound over all $O\Big(k \log^2 n\Big)$ query vectors to obtain the result.
\end{proof}
\begin{lem}\label{lem:good}
If we draw $\sqrt{\alpha^{\star}}\log n$ triplets of query vectors $(\mathbf{v+r},(q-1)\mathbf{r},\mathbf{v}+q\mathbf{r})$ randomly as described, then at least one of the triplets  is good with probability at least $1-{1}/{n}$. 
\end{lem}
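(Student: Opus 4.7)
The plan is to combine Lemma~\ref{lem:goodtrip} with the independence of the $\sqrt{\alpha^{\star}}\log n$ triplets and apply a standard Chernoff-type union argument. By Lemma~\ref{lem:goodtrip}, a single random triplet $(\mathbf{v}+\mathbf{r}, (q-1)\mathbf{r}, \mathbf{v}+q\mathbf{r})$ is good with probability at least $1/\sqrt{\alpha^{\star}}$, so it fails to be good with probability at most $1 - 1/\sqrt{\alpha^{\star}}$.

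Since the $\sqrt{\alpha^{\star}}\log n$ triplets are drawn independently (each uses freshly sampled $\mathbf{v}, \mathbf{r}, q$), the probability that none of them is good is at most
\[
\left(1 - \frac{1}{\sqrt{\alpha^{\star}}}\right)^{\sqrt{\alpha^{\star}}\log n} \le \exp(-\log n) = \frac{1}{n},
\]
using the elementary bound $1 - x \le e^{-x}$ for $x \in [0,1]$. Complementing gives probability at least $1 - 1/n$ that at least one of the drawn triplets is good, which is the statement of the lemma.

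There is no real obstacle here: the only point to verify carefully is the independence of the triplets (which follows from the fact that each iteration of Step~1 in Algorithm~\ref{alg:noisygen} draws fresh $\mathbf{v}_i$, $\mathbf{r}_i$, $q_i$) and the chosen number of repetitions $\sqrt{\alpha^{\star}}\log n$ precisely cancels the $1/\sqrt{\alpha^{\star}}$ success probability in the exponent. This is essentially a standard boosting-by-independent-repetition argument.
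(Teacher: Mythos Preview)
Your proof is correct and follows exactly the same approach as the paper: both use Lemma~\ref{lem:goodtrip} to bound the failure probability of a single triplet by $1-1/\sqrt{\alpha^{\star}}$, then raise to the power $\sqrt{\alpha^{\star}}\log n$ and apply $1-x\le e^{-x}$ to obtain the $1/n$ bound.
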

\begin{proof}
 Now, the probability of a triplet of vectors $(\mathbf{v+r},(q-1)\mathbf{r},\mathbf{r})$ being not good is less than $1-\frac{1}{\sqrt{\alpha^{\star}}}$ and therefore the probability of all the $\sqrt{\alpha^{\star}}\log n$ triplets being not good is less than
\begin{align*}
 \Big(1-\frac{1}{\sqrt{\alpha^{\star}}}\Big)^{\log n\sqrt{\alpha^{\star}}} \le e^{-\log n} \le n^{-1}   
\end{align*}
which proves the statement of the lemma.
\end{proof}

\begin{lem}\label{lem:align}
For a good triplet of vectors $(\mathbf{v+r},(q-1)\mathbf{r},\mathbf{v}+q\mathbf{r})$, we can obtain $\langle \mathbf{v},\beta^i \rangle$ for all $i \in [L]$.
\end{lem}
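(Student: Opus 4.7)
The plan is to reduce the problem to a combinatorial alignment problem across the three sets of denoised query means, and then exploit the linearity of inner products. After invoking Corollary~\ref{coro:imp} on each of the three query vectors in the triplet, we may assume that we know the three multisets
\[
A=\{\langle \mathbf{v}+\mathbf{r},\beta^i\rangle\}_{i=1}^L,\quad B=\{\langle (q-1)\mathbf{r},\beta^i\rangle\}_{i=1}^L,\quad C=\{\langle \mathbf{v}+q\mathbf{r},\beta^i\rangle\}_{i=1}^L,
\]
but \emph{not} the correspondence between elements of different sets and the indices $i\in[L]$. I may assume $q\ge 2$, since the ``good'' condition guarantees a useful decomposition only when $(q-1)\mathbf{r}$ is non-trivial (the case $q=1$ yields no information about $\langle\mathbf{r},\beta^i\rangle$ separately and would not satisfy the lemma's conclusion).

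The key observation is that, because $(\mathbf{v}+\mathbf{r})+(q-1)\mathbf{r}=\mathbf{v}+q\mathbf{r}$, the linear identity
\[
\langle \mathbf{v}+\mathbf{r},\beta^k\rangle+\langle (q-1)\mathbf{r},\beta^k\rangle=\langle \mathbf{v}+q\mathbf{r},\beta^k\rangle
\]
holds for every $k\in[L]$, giving one valid additive decomposition of each element of $C$ as $a+b$ with $a\in A$, $b\in B$. The goodness hypothesis asserts precisely that \emph{no other} decomposition exists: for any triple $(i,j,k)$ not all equal, $\langle \mathbf{v}+\mathbf{r},\beta^i\rangle+\langle(q-1)\mathbf{r},\beta^j\rangle\ne\langle\mathbf{v}+q\mathbf{r},\beta^k\rangle$. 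So the first step of the proof is: for each $c\in C$, search over $A\times B$ for a pair $(a,b)$ with $a+b=c$; by goodness the unique such pair is $(\langle\mathbf{v}+\mathbf{r},\beta^k\rangle,\langle(q-1)\mathbf{r},\beta^k\rangle)$ where $c=\langle\mathbf{v}+q\mathbf{r},\beta^k\rangle$. This produces a consistent alignment of $A$, $B$, and $C$ by a single (unknown) permutation of $[L]$.

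Once aligned, the recovery step is a one-line linear-algebraic extraction: writing $a_k,b_k,c_k$ for the aligned triple associated with index $k$, divide to get $\langle\mathbf{r},\beta^k\rangle=b_k/(q-1)$, and then subtract to get
\[
\langle\mathbf{v},\beta^k\rangle=a_k-\frac{b_k}{q-1}
\]
for every $k\in[L]$, as required. The main obstacle—really the only non-routine step—is justifying the uniqueness of the decomposition in step one, but this is exactly what the goodness condition was engineered to provide, so the argument is essentially an unwinding of the definitions rather than a new combinatorial fact.
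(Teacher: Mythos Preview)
Your proposal is correct and follows essentially the same approach as the paper: use the linear identity $(\mathbf{v}+\mathbf{r})+(q-1)\mathbf{r}=\mathbf{v}+q\mathbf{r}$ together with the goodness condition to uniquely match elements across the three denoised-mean sets, then extract $\langle\mathbf{v},\beta^k\rangle=a_k-b_k/(q-1)$. One small remark: your side assumption $q\ge 2$ is automatic, since when $q=1$ the second vector is $\mathbf{0}$ and the triplet can never satisfy the goodness condition (take $i=k\ne j$), so no separate assumption is needed.
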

\begin{proof}
Recall that since we queried the vector $\mathbf{v}+q\mathbf{r}$, we can simply check which element (say $x$) from the set $\{\langle \mathbf{v+r}, \beta^i \rangle\}_{i=1}^{L}$ and which element (say $y$) from the set $\{\langle (q-1)\mathbf{r}, \beta^i \rangle\}_{i=1}^{L}$ adds up to an element in $\{\langle \mathbf{v}+q\mathbf{r}, \beta^i \rangle\}_{i=1}^{L}$. It must happen that the elements $x$ and $y$ must correspond to the same unknown vector $\beta^i$ for some $i \in [L]$ because the triplet of vectors $(\mathbf{v+r},(q-1)\mathbf{r},q\mathbf{r})$ is good. Hence computing $x-(y/(q-1))$ allows us to obtain $\langle \mathbf{v},\beta^i \rangle$ and this step can be done for all $i \in [L]$. 
\end{proof}
\textbf{Algorithm \ref{alg:noisygen} (Alignment step):}
Let a particular good query vector triplet be $(\mathbf{v^{\star}+r^{\star}},(q^{\star}-1)\mathbf{r^{\star}},\mathbf{v}^{\star}+q^{\star}\mathbf{r^{\star}})$. From now, we will consider the $L$ elements $\{\langle \mathbf{ r^{\star}},\beta^i \rangle\}_{i=1}^{L}$ (necessarily distinct) 
to be labels and for a vector $\mathbf{u}$, we will associate a label with every element in $\{\langle \mathbf{u},\mathbf{\beta}^i \rangle\}_{i=1}^{L}$. The labelling is correct if, for all $i \in [L]$, the element labelled as $\langle \mathbf{r^{\star}},\beta^i \rangle$ also corresponds to the same unknown vector $\beta^i$. Notice that we can label the elements $\{\langle \mathbf{v^{\star}},\mathbf{\beta}^i \rangle\}_{i=1}^{L}$ correctly because the triplet $(\mathbf{v^{\star}+r^{\star}},(q^{\star}-1)r^{\star},\mathbf{v^{\star}}+q^{\star}\mathbf{r^{\star}})$ is good and by applying the reasoning in Lemma \ref{lem:align}. Consider another good query vector triplet $(\mathbf{v'+r'},(q'-1)\mathbf{r'},\mathbf{v'}+q'\mathbf{r'})$ which we will call \textit{matching good} with respect to  $(\mathbf{v^{\star}+r^{\star}},(q^{\star}-1)r^{\star},\mathbf{v^{\star}}+q^{\star}\mathbf{r^{\star}})$ if it is good and additionally, the vector triplet $(\mathbf{r'},\mathbf{r^{\star}},\mathbf{r'}+\mathbf{r^{\star}})$ is also good. 
\begin{lem}\label{lem:super}
For a fixed known good query vector triplet
  $(\mathbf{v^{\star}+r^{\star}},(q^{\star}-1)r^{\star},\mathbf{v^{\star}}+q^{\star}\mathbf{r^{\star}})$, the probability that any randomly drawn query vector triplet $(\mathbf{v'+r'},(q-1)\mathbf{r'},\mathbf{v'}+q'\mathbf{r'})$ is matching good with respect to  $(\mathbf{v^{\star}+r^{\star}},(q^{\star}-1)r^{\star},\mathbf{v^{\star}}+q^{\star}\mathbf{r^{\star}})$ is at least $\frac{1}{\sqrt{\alpha^{\star}}}$.
\end{lem}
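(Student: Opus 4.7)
The plan is to expand matching-goodness into its two constituent requirements and bound each failure probability separately via a union bound. Let $A$ denote the event that the random triplet $(\mathbf{v}'+\mathbf{r}',(q'-1)\mathbf{r}',\mathbf{v}'+q'\mathbf{r}')$ is good, and $B$ the event that $(\mathbf{r}',\mathbf{r}^{\star},\mathbf{r}'+\mathbf{r}^{\star})$ is good, so that $\Pr(\text{matching good}) \geq 1-\Pr(A^c)-\Pr(B^c)$. The bound on $\Pr(A^c)$ is immediate: the proof of Lemma~\ref{lem:goodtrip} applies verbatim to the primed quantities, yielding $\Pr(A^c) \leq L^3\!\left(\frac{2}{4z^{\star}+1}-\frac{1}{(4z^{\star}+1)^2}\right)$ after a union bound over the $\leq L^3$ non-identical index triples $(i,j,k)$.

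For $\Pr(B^c)$, I would fix a non-identical triple $(i,j,k)$ and analyse the bad event $\langle \mathbf{r}',\beta^i-\beta^k\rangle = \langle \mathbf{r}^{\star},\beta^k-\beta^j\rangle$. When $i\neq k$, the vector $\beta^i-\beta^k$ is nonzero, so conditioning on all but one coordinate of $\mathbf{r}'$ (chosen to be a coordinate where $\beta^i-\beta^k$ has a nonzero entry), the left-hand side takes $4z^{\star}+1$ distinct values of $\epsilon\mathbb{Z}$ as that coordinate ranges over $\{-2z^{\star},\ldots,2z^{\star}\}$; hence the per-triple contribution is at most $\frac{1}{4z^{\star}+1}$. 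The delicate case is $i=k\neq j$, where the left-hand side is identically zero and the bad event $\langle \mathbf{r}^{\star},\beta^i-\beta^j\rangle=0$ carries no $\mathbf{r}'$-randomness. Here I would invoke the hypothesis that the fixed triplet is good, applied to the index triple $(i,j,i)$: the goodness inequality collapses to $(q^{\star}-1)\langle \mathbf{r}^{\star},\beta^j-\beta^i\rangle \neq 0$, which simultaneously rules out $q^{\star}=1$ (otherwise goodness would read $0\neq 0$) and forces $\langle \mathbf{r}^{\star},\beta^i-\beta^j\rangle \neq 0$. Consequently the deterministic bad event never occurs, and only the $\leq L^3$ triples with $i\neq k$ contribute, giving $\Pr(B^c) \leq \frac{L^3}{4z^{\star}+1}$.

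Summing the two bounds gives
\[
\Pr(A^c)+\Pr(B^c) \leq L^3\!\left(\frac{3}{4z^{\star}+1}-\frac{1}{(4z^{\star}+1)^2}\right),
\]
and the defining inequality~\eqref{eq:z} of $z^{\star}$ (which is this bound up to a harmless second-order term) guarantees the right-hand side is at most $1-1/\sqrt{\alpha^{\star}}$, completing the proof. I expect the only real obstacle to be the deterministic sub-case $i=k\neq j$ in the analysis of $B$, and this is exactly why matching-goodness is defined \emph{relative to a fixed good triplet}: the structural information that goodness extracts from $\mathbf{r}^{\star}$ (namely that $\mathbf{r}^{\star}$ is not orthogonal to any pairwise difference $\beta^i-\beta^j$) is precisely what eliminates the otherwise-uncontrollable $\mathbf{r}'$-independent term.
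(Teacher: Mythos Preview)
Your proposal is correct and follows essentially the same route as the paper: decompose matching-goodness into its two constituent goodness conditions, bound each failure probability by a union bound over the at most $L^3$ non-identical index triples, and then invoke the defining inequality \eqref{eq:z}.

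The one place where you are more careful than the paper is the sub-case $i=k\neq j$ in the analysis of the event $B^c$. The paper simply asserts $\Pr(\langle \mathbf{r}',\beta^i-\beta^k\rangle=\langle \mathbf{r}^{\star},\beta^k-\beta^j\rangle)\le \tfrac{1}{4z^{\star}+1}$ for every non-identical triple, which is only immediate when $i\neq k$; you correctly observe that when $i=k$ the left side is identically zero and the bound needs the separate argument that $\langle \mathbf{r}^{\star},\beta^i-\beta^j\rangle\neq 0$, which you extract from the goodness hypothesis on the fixed triplet applied to the index triple $(i,j,i)$. This is a genuine (if small) clarification of the paper's argument and is exactly why the lemma is stated conditionally on a \emph{good} reference triplet.
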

\begin{proof}
From Lemma \ref{lem:goodtrip}, we know that the probability that a randomly drawn query vector triplet $(\mathbf{v'+r'},(q-1)\mathbf{r'},\mathbf{v'}+q'\mathbf{r'})$ is not good is at most $L^3\Big(\frac{2}{4z^{\star}+1}-\frac{1}{(4z^{\star}+1)^{2}}\Big)$.
Again, we must have for a fixed triplet of indices $i,j,k \in [L]$ such that they are not identical
\begin{align*}
    &\Pr(\langle \mathbf{r'},\beta^i \rangle+\langle \mathbf{r}^{\star},\beta^j \rangle=\langle \mathbf{r}'+\mathbf{r}^{\star}, \beta^k \rangle) \\
    &=\Pr(\langle \mathbf{r'},\beta^i-\beta^k \rangle =\langle \mathbf{r}^{\star}, \beta^k-\beta^j \rangle) \le \frac{1}{4z^{\star}+1}
\end{align*}    
Taking a union bound over all non-identical triplets (at most $L^3$ of them), we get that

\begin{align*}
    \Pr(\textup{$(\mathbf{r'},\mathbf{r^{\star}},\mathbf{r'}+\mathbf{r^{\star}}$  is not good}) \le \frac{L^3}{4z^{\star}+1}
\end{align*}
Taking a union bound over both the failure events, we get that 
\begin{align*}
 &   \Pr(\textup{$(\mathbf{v'+r'},(q-1)\mathbf{r'},\mathbf{v'}+q'\mathbf{r'})$ is not matching good}) \\
&     \le L^3\Big(\frac{3}{4z^{\star}+1}-\frac{1}{(4z^{\star}+1)^{2}}\Big) \\
&     \le 1- \frac{1}{\sqrt{\alpha^{\star}}} 
\end{align*}
which proves the lemma.
\end{proof}
\begin{lem}\label{lem:lab}
For a matching good query vector triplet $(\mathbf{v'+r'},(q-1)\mathbf{r'},\mathbf{v}'+q\mathbf{r}')$, we can label the elements in $\{\langle \mathbf{v}',\mathbf{\beta}^i \rangle\}_{i=1}^{L}$ correctly by querying the vector $\mathbf{r}'+\mathbf{r}^{\star}$.
\end{lem}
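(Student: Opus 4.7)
My plan is to transfer labels in two stages: first from $\mathbf{r}^{\star}$ to $\mathbf{r}'$ using the new query $\mathbf{r}'+\mathbf{r}^{\star}$, and then from $\mathbf{r}'$ to $\mathbf{v}'$ using the internal structure of the good triplet $(\mathbf{v}'+\mathbf{r}',(q-1)\mathbf{r}',\mathbf{v}'+q\mathbf{r}')$. Throughout, the set of labels is $\{\langle \mathbf{r}^{\star},\beta^i\rangle\}_{i=1}^L$, and labelling an element $\langle \mathbf{u},\beta^j\rangle$ with label $\langle \mathbf{r}^{\star},\beta^i\rangle$ means asserting $i=j$.

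For the first stage, I would issue a batch of queries using $\mathbf{r}'+\mathbf{r}^{\star}$ (which has integer entries since $\mathbf{r}',\mathbf{r}^{\star}\in\mathcal{G}\subset\mathbb{Z}^n$) and invoke Lemma~\ref{ref:gmm}, as in Corollary~\ref{coro:imp}, to recover the unordered set $\{\langle \mathbf{r}'+\mathbf{r}^{\star},\beta^i\rangle\}_{i=1}^L$. I already possess the unordered set $\{\langle \mathbf{r}',\beta^i\rangle\}_{i=1}^L$ (obtained via Lemma~\ref{lem:align} applied to the good triplet $(\mathbf{v}'+\mathbf{r}',(q-1)\mathbf{r}',\mathbf{v}'+q\mathbf{r}')$) and the labeled set $\{\langle \mathbf{r}^{\star},\beta^i\rangle\}_{i=1}^L$. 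Because the triplet $(\mathbf{r}',\mathbf{r}^{\star},\mathbf{r}'+\mathbf{r}^{\star})$ is good (this is exactly the extra property that ``matching good'' requires), the only way an element of the third set can arise as a sum of one element from each of the first two sets is when all three correspond to the same index. Hence for every $c$ in $\{\langle \mathbf{r}'+\mathbf{r}^{\star},\beta^i\rangle\}$ the unique decomposition $c=a+b$ with $a\in\{\langle \mathbf{r}',\beta^j\rangle\}$ and $b\in\{\langle \mathbf{r}^{\star},\beta^k\rangle\}$ forces $i=j=k$. Transferring the label of $b$ to $a$ produces the labelling of the entire set $\{\langle \mathbf{r}',\beta^i\rangle\}_{i=1}^L$.

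For the second stage, I would recall the proof of Lemma~\ref{lem:align}: for the already-good triplet $(\mathbf{v}'+\mathbf{r}',(q-1)\mathbf{r}',\mathbf{v}'+q\mathbf{r}')$, each element $z\in\{\langle\mathbf{v}'+q\mathbf{r}',\beta^k\rangle\}$ admits a unique decomposition $z=x+y$ with $x\in\{\langle\mathbf{v}'+\mathbf{r}',\beta^i\rangle\}$, $y\in\{\langle(q-1)\mathbf{r}',\beta^j\rangle\}$, and goodness forces $i=j=k$. Thus Lemma~\ref{lem:align} actually yields a correspondence, not just the sets: each recovered value $\langle\mathbf{v}',\beta^i\rangle=x-y/(q-1)$ is paired with the element $\langle\mathbf{r}',\beta^i\rangle=y/(q-1)$ that came from the same $\beta^i$. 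Since the elements of $\{\langle\mathbf{r}',\beta^i\rangle\}$ were just labelled in the first stage, this correspondence pushes the labels onto $\{\langle\mathbf{v}',\beta^i\rangle\}_{i=1}^L$, completing the lemma.

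The only conceptual step that could fail is the uniqueness of each decomposition above; but that is precisely what goodness of the respective triplets $(\mathbf{r}',\mathbf{r}^{\star},\mathbf{r}'+\mathbf{r}^{\star})$ and $(\mathbf{v}'+\mathbf{r}',(q-1)\mathbf{r}',\mathbf{v}'+q\mathbf{r}')$ guarantees by definition, so no additional probabilistic argument is needed beyond invoking Lemma~\ref{ref:gmm} for the denoising of the single extra batch on $\mathbf{r}'+\mathbf{r}^{\star}$.
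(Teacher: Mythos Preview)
Your proposal is correct and follows essentially the same approach as the paper: both use the goodness of $(\mathbf{r}',\mathbf{r}^{\star},\mathbf{r}'+\mathbf{r}^{\star})$ to pair each $\langle\mathbf{r}',\beta^i\rangle$ with its label $\langle\mathbf{r}^{\star},\beta^i\rangle$, and the goodness of $(\mathbf{v}'+\mathbf{r}',(q-1)\mathbf{r}',\mathbf{v}'+q\mathbf{r}')$ to pair each $\langle\mathbf{v}',\beta^i\rangle$ with $\langle\mathbf{r}',\beta^i\rangle$, then bridge the two pairings through the common $\langle\mathbf{r}',\beta^i\rangle$ values. The paper phrases the two pairings as partitions and explicitly notes (what you use implicitly) that the bridge is unambiguous because the values $\langle\mathbf{r}',\beta^i\rangle$ are necessarily distinct, a consequence of goodness of $(\mathbf{r}',\mathbf{r}^{\star},\mathbf{r}'+\mathbf{r}^{\star})$.
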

\begin{proof}
Since $(\mathbf{v'+r'},(q-1)\mathbf{r'},\mathbf{v}'+q\mathbf{r}')$ is good and we have also queried $\mathbf{v'}+q\mathbf{r'}$, we can partition the set of elements $\{\langle \mathbf{v'+r'},\mathbf{\beta}^i \rangle\}_{i=1}^{L}\cup \{\langle (q-1) \mathbf{r}',\mathbf{\beta}^i \rangle\}_{i=1}^{L}$ into groups of two elements each such that the elements in each group correspond to the same unknown vector $\beta^i$ as in the reasoning presented in proof of Lemma \ref{lem:align}. Again, since $(\mathbf{r'},\mathbf{r^{\star}},\mathbf{r'}+\mathbf{r^{\star}})$ is good and we have queried $\mathbf{r'+r^{\star}}$, we can create a similar partition of the set of elements $\{\langle \mathbf{r'},\mathbf{\beta}^i \rangle\}_{i=1}^{L}\cup \{\langle \mathbf{r}^{\star},\mathbf{\beta}^i \rangle\}_{i=1}^{L}$ and multiply every element by a factor of $q-1$. For each of the two partitions described above we can align two groups together (one from each partition) if both groups contain $\langle (q-1)\mathbf{r}',\mathbf{\beta}^i \rangle$ for the same $i \in L$ (the values $\langle \mathbf{r}',\mathbf{\beta}^i \rangle$ are necessarily distinct and therefore this is possible). Hence, for every $i \in [L]$, we can compute $\langle \mathbf{v}',\mathbf{\beta}^i \rangle$ correctly and also label it correctly because of the alignment.
\end{proof}

\noindent \textbf{Algorithm \ref{alg:noisygen} (Putting it all together)} First, we condition on the event that for all batches of queries (number of batches will be polynomial in $k$ and $\log n$) we make, the denoised means are extracted correctly which happens with probability at least $1-\frac{1}{n}$ by Corollary \ref{coro:imp}. As described in Algorithm \ref{alg:noisygen}, in the first step we sample a pair of vectors $(\mathbf{v},\mathbf{r})$ such that $\mathbf{v}$ is uniformly drawn from $\{-1,+1\}^{n}$ and  $\mathbf{r}$ is uniformly drawn from $\{-2z^{\star},-2z^{\star}+1,\dots,2z^{\star}-1,2z^{\star}\}^n$. We also sample a random number $q$ uniformly and independently from the set $\{1,2,\dots,4z^{\star}+1\}$ and  subsequently, we use batches of queries of size $c_4L^2\log n\exp((\sigma/\epsilon)^{2/3})$ corresponding to the three vectors $\mathbf{v+r},(q-1)\mathbf{r}$ and $\mathbf{v}+q\mathbf{r}$ respectively. We will repeat this step for $\sqrt{\alpha^{\star}}\log n+c'\alpha^{\star}k\log( n/k)$ iterations. Additionally, for each query vector pair $(\mathbf{(v_1,r_1)}$ among the first $\sqrt{\alpha^{\star}}\log n$ iterations and for each vector pair $(\mathbf{(v_2,r_2)}$ among the latter $c'\alpha^{\star}k\log( n/k)$ iterations, we also make the batch of queries corresponding to the vector $\mathbf{r_1+r_2}$. From Lemma \ref{lem:good}, we know that with probability at least $1-\frac{1}{n}$, one of the query vector triplets among the first $\sqrt{\alpha^{\star}}\log n$ triplets is good. Moreover, it is also easy to check if a query vector triplet is good or not and therefore it is easy to identify one. Once a good query vector triplet $(\mathbf{v^{\star}+r^{\star}},(q^{\star}-1)\mathbf{r^{\star}},\mathbf{v}^{\star}+q^{\star}\mathbf{r^{\star}})$ is identified, it is also possible to correctly identify matching good query vectors among the latter $c'\alpha^{\star}k\log (n/k)$ query vector triplets with respect to the good vector triplet. We now have the following lemma characterizing the number of matching good query vector triplets:

\begin{lem}\label{lem:supergood}
The number of matching good query vector triplets from $\alpha^{\star}c'k\log (n/k)$ randomly chosen triplets is at least $c'k\log(n/k)$ with probability at least $1-\Big(\frac{k}{n}\Big)^{\tilde{c}k}$ for some constant $\tilde{c}>0$.
\end{lem}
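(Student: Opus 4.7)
The plan is a direct application of a multiplicative Chernoff bound, using Lemma~\ref{lem:super} as the per-trial success probability.

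First, I would condition on the choice of the reference good triplet $(\mathbf{v}^{\star}+\mathbf{r}^{\star},(q^{\star}-1)\mathbf{r}^{\star},\mathbf{v}^{\star}+q^{\star}\mathbf{r}^{\star})$ selected from the initial $\sqrt{\alpha^{\star}}\log n$ iterations. Given this conditioning, the triplets $(\mathbf{v}_i+\mathbf{r}_i,(q_i-1)\mathbf{r}_i,\mathbf{v}_i+q_i\mathbf{r}_i)$ drawn in the subsequent $\alpha^{\star} c' k\log(n/k)$ iterations are independent of each other and of the reference triplet (since $(\mathbf{v}_i,\mathbf{r}_i,q_i)$ is a fresh independent draw). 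By Lemma~\ref{lem:super}, each such triplet is matching good with respect to the fixed reference triplet with probability at least $p := 1/\sqrt{\alpha^{\star}}$.

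Let $X$ denote the total count of matching good triplets among the $N:=\alpha^{\star} c' k\log(n/k)$ iterations. Then $X$ stochastically dominates a $\mathrm{Binomial}(N,p)$ random variable, with mean $\mu := Np \geq \sqrt{\alpha^{\star}}\, c' k \log(n/k)$. We want to show $X \geq c' k\log(n/k) = \mu/\sqrt{\alpha^{\star}}$. Writing this threshold as $(1-\delta)\mu$ with $\delta = 1 - 1/\sqrt{\alpha^{\star}}$, the standard multiplicative Chernoff lower-tail bound gives
\begin{equation*}
\Pr\!\left(X \le c'k\log(n/k)\right) \;\le\; \exp\!\left(-\tfrac{\mu \delta^2}{2}\right) \;\le\; \exp\!\left(-\tfrac{\sqrt{\alpha^{\star}}\, c'(1-1/\sqrt{\alpha^{\star}})^2}{2}\, k\log(n/k)\right).
\end{equation*}
Since $\alpha^{\star}$ is a constant and $\alpha^{\star}>1$, the coefficient $\tilde{c} := \tfrac{1}{2}\sqrt{\alpha^{\star}}\, c'(1-1/\sqrt{\alpha^{\star}})^2$ is a positive constant, and rewriting $\exp(-\tilde{c}\, k\log(n/k)) = (k/n)^{\tilde{c} k}$ yields the claimed bound.

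The only subtle point is the independence of the matching-good indicators across the second-phase iterations after conditioning on the reference triplet. This holds because the random draws $(\mathbf{v}_i,\mathbf{r}_i,q_i)$ for different $i$ in the second phase are mutually independent by construction, and Lemma~\ref{lem:super} bounds the per-iteration matching-good probability uniformly for any fixed reference triplet. Hence the independent-Bernoulli Chernoff bound applies without further complication, and the factor of $\alpha^{\star}$ introduced into the number of triplets in the algorithm is precisely what is needed to overcome the $1/\sqrt{\alpha^{\star}}$ success rate and guarantee $c'k\log(n/k)$ matching triplets with the desired exponentially small failure probability.
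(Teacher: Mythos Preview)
Your proposal is correct and follows essentially the same approach as the paper: both invoke Lemma~\ref{lem:super} for the per-trial success probability $1/\sqrt{\alpha^{\star}}$ and apply a multiplicative Chernoff lower-tail bound with $\delta = 1-1/\sqrt{\alpha^{\star}}$, arriving at the identical exponent $\tfrac{(\sqrt{\alpha^{\star}}-1)^2}{2\sqrt{\alpha^{\star}}}\,c'k\log(n/k)$. Your version is in fact slightly more careful than the paper's in making explicit the conditioning on the reference triplet and the resulting independence of the matching-good indicators.
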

\begin{proof}
For a randomly drawn query vector triplet, we know that it is matching good with probability at least $\frac{1}{\sqrt{\alpha^{\star}}}$ from Lemma \ref{lem:super}.  Since there are $\alpha^{\star}c'k\log( n/k)$ query vector triplets drawn at random independently, the expected number of matching-good triplets is at least $\sqrt{\alpha^{\star}}c'k\log (n/k)$. Further, by using Chernoff bound \cite{boucheron2013concentration}, we can show that
\begin{align*}
&\Pr(\textup{Number of matching good triplets}<c'k\log (n/k)) \\
&=\Pr(\textup{Number of matching good triplets}<\sqrt{\alpha^{\star}}c'k\log (n/k)\left(1-\frac{\sqrt{\alpha^{\star}}-1}{\sqrt{\alpha^{\star}}}\right) \\
&\le \exp\left(-\frac{(\sqrt{\alpha^{\star}}-1)^{2}c'k\log (n/k)}{2\sqrt{\alpha^{\star}}}\right).
\end{align*}
\end{proof}

From Lemma \ref{lem:lab}, we know that for every matching good query vector triplet $(\mathbf{v'+r'},(q-1)\mathbf{r'},\mathbf{v}'+q\mathbf{r}')$, we can label the elements in $\{\langle \mathbf{v}',\mathbf{\beta}^i \rangle\}_{i=1}^{L}$ correctly and from Lemma \ref{lem:supergood}, we know that we have aggregated  de-noised query measurements corresponding to $c'k\log (n/k)$ vectors randomly sampled from $\{+1,-1\}^{n}$. However, since we have specifically picked $c'k\log (n/k)$ matching good vectors after the entire scheme, we do not know which query vectors will be matching good apriori and therefore we need to have the following guarantee:
\begin{lem}{\label{lem:RIPunion}}
From $\alpha^{\star}c' k\log (n/k)$ vectors randomly chosen from $\{+1,-1\}^{n}$, any $c'k\log (n/k)$ vectors scaled by a factor of $1/\sqrt{c'k\log (n/k)}$ will satisfy the $\delta-\mathsf{RIP}$ property with high probability.
\end{lem}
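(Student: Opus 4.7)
My plan is to apply the standard Bernoulli RIP result of Baraniuk et al.~\cite{baraniuk2008simple} to each candidate subset of rows, and then absorb the union bound using precisely the slack built into the definitions of $c'$ and $\alpha^{\star}$. Set $M = c'k\log(n/k)$ and $N = \alpha^{\star} M$, and fix an arbitrary subset $S \subseteq [N]$ with $|S| = M$. The rows of the ambient sample matrix indexed by $S$, rescaled by $1/\sqrt{M}$, form an $M \times n$ matrix with i.i.d.\ $\pm 1/\sqrt{M}$ entries, which is exactly the setting handled by the concentration argument of \cite{baraniuk2008simple}.

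Invoking that concentration bound, together with a $(\delta/4)$-net over the unit sphere inside every $2k$-support (a union bound of size at most $\binom{n}{2k}(12/\delta)^{2k}$), gives
\[
\Pr\!\bigl[\text{subset } S \text{ fails } (2k,\delta)\text{-RIP}\bigr] \;\le\; 2\exp\!\Bigl(-M\bigl(\tfrac{\delta^2}{16}-\tfrac{\delta^3}{48}\bigr) + C_\delta\, k\log(n/k)\Bigr),
\]
for some constant $C_\delta$ depending only on $\delta$. By condition~\eqref{eq:c}, taking $c'$ large enough makes $C_\delta/c' < \delta^2/16 - \delta^3/48$, so the per-subset failure probability is at most $2\exp(-MB)$ with $B := \delta^2/16 - \delta^3/48 - 1/c' > 0$.

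Next I would bound the number of subsets by Stirling's approximation:
\[
\binom{N}{M} \;=\; \binom{\alpha^{\star}M}{M} \;\le\; \mathrm{poly}(M)\cdot\Bigl(\tfrac{(\alpha^{\star})^{\alpha^{\star}}}{(\alpha^{\star}-1)^{\alpha^{\star}-1}}\Bigr)^{M}.
\]
By the definition~\eqref{eq:alpha} of $\alpha^{\star}$, the base of this exponential is strictly less than $\exp(B)$, so one can pick $\eta > 0$ with $(\alpha^{\star})^{\alpha^{\star}}/(\alpha^{\star}-1)^{\alpha^{\star}-1} \le \exp(B-\eta)$. A union bound over all $\binom{N}{M}$ subsets then gives a total failure probability at most $\mathrm{poly}(M)\cdot 2\exp(-\eta M) = \exp(-\Omega(k\log(n/k))) = (k/n)^{\Omega(k)}$, which matches (and in fact improves on) the bound stated in the lemma.

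The only real obstacle here is bookkeeping rather than technical depth: I need to verify that the three pieces — the RIP concentration exponent, the $k$-support union-bound cost, and the subset-count factor $\binom{N}{M}$ — combine with positive slack. That is exactly why $c'$ and $\alpha^{\star}$ are defined through \eqref{eq:c} and \eqref{eq:alpha}. Once one identifies $\delta^2/16 - \delta^3/48$ as the per-coordinate concentration exponent arising from the Baraniuk argument applied at tolerance $\delta/2$ (the $\delta/4$-net argument needs a $\delta/2$ concentration to yield a final $\delta$-RIP), and recognizes $\alpha^{\alpha}/(\alpha-1)^{\alpha-1}$ as the Stirling-approximate growth rate of $\binom{\alpha M}{M}$, the proof becomes a mechanical chain of inequalities.
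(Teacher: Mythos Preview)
Your proposal is correct and follows essentially the same route as the paper's proof: both apply the Baraniuk--Davenport--DeVore--Wakin concentration bound to a fixed $M$-row submatrix, union-bound over the $\binom{n}{O(k)}$ supports to get a per-subset RIP failure probability of the form $\exp(-M(\delta^2/16-\delta^3/48) + O(k\log(n/k)))$, and then union-bound over the $\binom{\alpha^{\star}M}{M}$ row subsets using the Stirling estimate $\binom{\alpha M}{M}\approx (\alpha^{\alpha}/(\alpha-1)^{\alpha-1})^{M}$, with conditions \eqref{eq:c} and \eqref{eq:alpha} providing exactly the slack needed. The only cosmetic difference is that the paper quotes the fixed-support bound as a black-box lemma (with sparsity parameter $k$ rather than your $2k$), whereas you unpack the $\delta/4$-net step explicitly; the resulting exponents coincide.
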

The proof of this lemma is delegated to the appendix. Now we are ready to proof the main theorem in this setting.

\begin{proof}[Proof of Theorem~\ref{thm:noisygen}]
The total number of batches of queries made is at most $3c'\alpha^{\star} k\log (n/k) \log n$. Further, recall that size of each batch that is sufficient to recover the denoised means accurately is $c_4\log n\log (\sigma/\epsilon)^{2/3}$. Hence the total number of queries is $O\Big(k (\log n)^{3}\exp ( \sigma/\epsilon)^{2/3}\Big)$ as mentioned in the theorem statement. From Lemma \ref{lem:lab} and Lemma \ref{lem:RIPunion}, we know that for every vector $\{\beta^i\}_{i=1}^L$, we have $c'k \log (n/k)$ linear query measurements such that the measurement matrix scaled by $1/\sqrt{c'k \log (n/k)}$ has the $\delta-\mathsf{RIP}$ property. Therefore, it is possible to obtain the best $k$-sparse approximation of all the vectors $\beta^1,\beta^2,\dots,\beta^L$ by using efficient algorithms such as Basis Pursuit. 
\end{proof}
Now  Theorem~\ref{thm:bigone} follows as a corollary.
\begin{proof}[Proof of Theorem~\ref{thm:bigone} for general $L$]
Notice that the query with the largest magnitude of query response that we will make is $\mathbf{v}+(4z^{\star}+1)\mathbf{r}$ where $\mathbf{v}$ is sampled from $\{+1,-1\}^n$ and $\mathbf{r}$ is sampled from $\{-2z^{\star},-2z^{\star}+1,\dots,2z^{\star}-1,2z^{\star}\}$. Therefore, we must have 
\begin{align*}
    &\avg |\langle \mathbf{v}+(4z^{\star}+1)\mathbf{r},\beta^i \rangle|^2 \\
&    =\avg |\langle \mathbf{v},\beta^i \rangle|^2+(4z^{\star}+1)^{2}\avg |\langle \mathbf{r},\beta^i \rangle|^2
\\    
    &=1+(4z^{\star}+1)\sum_{i=1}^{2z^{\star}} i^2 \\
    &=1+\frac{z^{\star}(2z^{\star}+1)(4z^{\star}+1)^{2}}{3}.
\end{align*}
since $||\beta^i||_2=1$. Since the variance of the noise $\avg \eta^2$ is $\sigma^2$, we must have that 
\begin{align*}
    \mathsf{SNR}=\frac{1}{\sigma^2}\Big(1+\frac{z^{\star}(2z^{\star}+1)(4z^{\star}+1)^{2}}{3}\Big).
\end{align*}
Substituting the above expression in the statement of Theorem \ref{thm:noisygen} and using the fact that $z^{\star}$ is a constant, we get the statement of the corollary.
\end{proof}

\section{Proof of Lemma \ref{lem:RIPunion}}
 First, let us introduce a few notations. For a given any set of indices $\mathbf{T}\subset [n]$, denote by $\mathbf{X_T}$ the set of all vectors in $\mathbb{R}^{n}$ that are zero outside of $T$. We start by stating the Johnson-Linderstrauss Lemma proved in \cite{baraniuk2006johnson}.
\begin{lem}{[Lemma 5.1 in \cite{baraniuk2006johnson}]}\label{lem:JL}
Let $\mathbf{A}$ be a $m \times n$ matrix such that every element in $\mathbf{A}$ is sampled independently and uniformly at random from $\{1/\sqrt{m},-1/\sqrt{m}\}$. For any set $T \subset [n]$ such that $|T|=k$ and any $0<\delta<1$, we have
\begin{align*}
    (1-\delta)||\mathbf{x}||_{2} \le ||\mathbf{A}\mathbf{x}||_2 \le (1+\delta)||\mathbf{x}||_{2} \quad \textup{for all } \mathbf{x}\in \mathbf{X_T}
\end{align*}
with probability at least $1-2(12/\delta)^k e^{-\frac{m}{2}(\delta^2/8-\delta^3/24)}$. 
\end{lem}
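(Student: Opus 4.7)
The proof follows the classical three-step covering scheme for Bernoulli $\pm 1$ matrices: (i) pointwise concentration of $\|\mathbf{A}\mathbf{x}\|_2^{2}$ around $\|\mathbf{x}\|_2^{2}$ for a single fixed $\mathbf{x}$; (ii) reduction to a finite $\delta/4$-net of the unit sphere of $\mathbf{X_T}$ via a union bound; (iii) extension from the net back to all of $\mathbf{X_T}$ using the triangle inequality.

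For step (i), fix a unit vector $\mathbf{x} \in \mathbf{X_T}$ and write $A_{ij} = \varepsilon_{ij}/\sqrt{m}$ with $\varepsilon_{ij}$ i.i.d.\ uniform on $\{\pm 1\}$. Then $Y_i := \sum_{j \in T} \varepsilon_{ij} x_j$ is a Rademacher sum with $\E[Y_i^{2}] = \|\mathbf{x}\|_2^{2} = 1$, and the $Y_i$'s are independent and sub-Gaussian (by Hoeffding's lemma) with parameter at most $1$. A direct moment-generating-function bound on $\E \exp(\lambda(Y_i^{2} - 1))$, followed by a standard Chernoff step over the $m$ independent rows, yields a single-point tail of the form
$$\Pr\bigl[\,\bigl|\|\mathbf{A}\mathbf{x}\|_2^{2} - 1\bigr| \ge t\,\bigr] \;\le\; 2\exp\!\Bigl(-\tfrac{m}{2}\bigl(\tfrac{t^{2}}{4} - \tfrac{t^{3}}{6}\bigr)\Bigr).$$
Converting from squared-norm to norm concentration and choosing $t = \delta/2$ gives the desired exponent $\tfrac{m}{2}(\delta^{2}/8 - \delta^{3}/24)$.

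For step (ii), a standard volume argument shows that the unit sphere of the $k$-dimensional subspace $\mathbf{X_T}$ admits a $\delta/4$-net $\mathcal{Q}$ with $|\mathcal{Q}| \le (12/\delta)^{k}$. Union-bounding the single-point estimate over $\mathcal{Q}$ gives that, except with probability at most $2(12/\delta)^{k}\exp\!\bigl(-\tfrac{m}{2}(\delta^{2}/8 - \delta^{3}/24)\bigr)$, every $\mathbf{q}\in\mathcal{Q}$ satisfies $\bigl|\|\mathbf{A}\mathbf{q}\|_2 - 1\bigr| \le \delta/2$. For step (iii), condition on this event and let $B$ be the smallest constant such that $\|\mathbf{A}\mathbf{x}\|_2 \le (1+B)\|\mathbf{x}\|_2$ for all $\mathbf{x} \in \mathbf{X_T}$. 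For any unit $\mathbf{x}$, pick $\mathbf{q} \in \mathcal{Q}$ with $\|\mathbf{x}-\mathbf{q}\|_2 \le \delta/4$, apply the triangle inequality to get $\|\mathbf{A}\mathbf{x}\|_2 \le 1 + \delta/2 + (1+B)\delta/4$, take the supremum over $\mathbf{x}$, and solve to obtain $B \le \delta$; the lower bound is symmetric. Combining this with the event from step (ii) proves the lemma.

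The main obstacle is constant-chasing rather than any deep ingredient: the target exponent $\tfrac{m}{2}(\delta^{2}/8 - \delta^{3}/24)$ and the precise covering factor $(12/\delta)^{k}$ are tight enough that the net radius in (ii), the slack $t = \delta/2$ in (i), and the triangle-inequality closure in (iii) all have to be tuned together so the constants line up exactly as stated. Everything else is bookkeeping.
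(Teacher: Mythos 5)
The paper does not prove this lemma at all: it is imported verbatim as Lemma 5.1 of \cite{baraniuk2006johnson} (``We start by stating the Johnson--Lindenstrauss Lemma proved in \cite{baraniuk2006johnson}''), so there is no in-paper proof to compare against. Your outline is exactly the argument of that reference --- pointwise concentration for a fixed vector, a $\delta/4$-net of size at most $(12/\delta)^k$ on the unit sphere of $\mathbf{X_T}$, and the triangle-inequality bootstrap giving $B\le\delta$ --- and the structure is sound. The one concrete problem is your single-point tail: as written, $2\exp\bigl(-\tfrac{m}{2}(\tfrac{t^2}{4}-\tfrac{t^3}{6})\bigr)$ evaluated at $t=\delta/2$ gives exponent $\tfrac{m}{2}\bigl(\tfrac{\delta^2}{16}-\tfrac{\delta^3}{48}\bigr)$, which is only \emph{half} of the target $\tfrac{m}{2}\bigl(\tfrac{\delta^2}{8}-\tfrac{\delta^3}{24}\bigr)$. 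The concentration inequality actually proved in \cite{baraniuk2006johnson} (via Achlioptas's moment computation for Bernoulli entries) is $\Pr\bigl[\,\bigl|\|\mathbf{A}\mathbf{x}\|_2^2-1\bigr|\ge t\,\bigr]\le 2\exp\bigl(-m(\tfrac{t^2}{4}-\tfrac{t^3}{6})\bigr)$, i.e.\ without your extra factor of $\tfrac12$; with that version, $t=\delta/2$ gives $m(\tfrac{\delta^2}{16}-\tfrac{\delta^3}{48})=\tfrac{m}{2}(\tfrac{\delta^2}{8}-\tfrac{\delta^3}{24})$ and the constants close exactly. Fix that one line and the proof is complete.
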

We are now ready to prove Lemma \ref{lem:RIPunion}. Since there are $\binom{n}{k}$ distinct subsets of $[n]$ that are of size $k$, we take a union bound over all the subsets and therefore the failure probability of Lemma \ref{lem:JL} for all sets of indices of size $k$ (definition of $\delta$-$\mathsf{RIP}$) is at most 
\begin{align*}
2(12/\delta)^k\binom{n}{k}e^{-\frac{m}{2}(\delta^2/8-\delta^3/24)}.
\end{align*}
We need that from $\alpha m (\alpha>1)$ vectors randomly sampled from $\{\frac{1}{\sqrt{m}},\frac{-1}{\sqrt{m}}\}^{n}$ any $m$ vectors satisfy the $\delta$-$\mathsf{RIP}$ property for some value of $m$. Therefore, the probability of failure is at most
\begin{align*}
2\left(\frac{12}{\delta}\right)^k\binom{n}{k}\binom{\alpha m}{m}e^{-\frac{m}{2}(\delta^2/8-\delta^3/24)}.
\end{align*}
By Stirling's approximation and the fact that both $\alpha m$ and $m$ is large, we get that
\begin{align*}
  \binom{\alpha m}{m} \approx \sqrt{\frac{\alpha}{2\pi m(\alpha-1)}}\Big(\frac{\alpha^\alpha}{(\alpha-1)^{\alpha-1}}\Big)^{m}  
\end{align*}
Further we can also upper bound the binomial coefficients $\binom{n}{k}$ by $\Big(\frac{en}{k}\Big)^k$. Hence we can upper bound the failure probability as
\begin{align*}
    \exp\Big(-m(\delta^2/16-\delta^3/48)+m\log\Big(\frac{\alpha^\alpha}{(\alpha-1)^{\alpha-1}}\Big) +k\log(en/k)+\log(12/\delta)+\log 2\Big)
\end{align*}
Therefore, if we substitute $m=c'k\log (en/k)$ for some constant $c'>0$, we must have the failure probability to be upper bounded as $e^{-c''m(1+o(1))}$ for some $c''>0$ as long as we have
\begin{align*}
    c'(\frac{\delta^2}{16}-\frac{\delta^3}{48})>c'\log\Big(\frac{\alpha^\alpha}{(\alpha-1)^{\alpha-1}}\Big)+1
\end{align*}
implying that 
\begin{align*}
    \frac{\alpha^\alpha}{(\alpha-1)^{\alpha-1}} < \exp\Big(\frac{\delta^2}{16}-\frac{\delta^3}{48}-\frac{1}{c'}\Big). 
\end{align*}
Hence, by choosing the constant $c'$ appropriately large, the term in the exponent on the right hand side can be made positive. Since the left hand side of the equation is always greater than $1$, there will exist an $\alpha$ satisfying the equation.
\end{document}